\title{Picking a Representative Set of Solutions in Multiobjective
Optimization: \\ Axioms, Algorithms, and Experiments}
\date{\vspace{-1.5cm}}
\author[ ]{Niclas Boehmer}
\author[ ]{Maximilian T. Wittmann}
\affil[ ]{Hasso Plattner Institute, University of Potsdam, Germany} 
\affil[ ]{\texttt{niclas.boehmer@hpi.de, maximilian.wittmann@hpi.de}} 
\pgfplotsset{compat=1.17}
\newcommand{\cmark}{\ding{51}}%
\newcommand{\xmark}{\ding{55}}%
\DeclareMathOperator*{\argmin}{arg\,min}
\newcommand{\R}{\mathbb{R}}
\newcommand{\N}{\mathbb{N}}
\newcommand{\Z}{\mathbb{Z}}
\newcommand{\veps}{\varepsilon}
\newcommand{\UPP}{\textsc{Uniformity Pareto Pruning}}
\newcommand{\DCPP}{\textsc{Directed Coverage Pareto Pruning}}
\declaretheorem{lemma, proposition, corollary}[style=plain, sibling=theorem]
[style=plain, sibling=theorem]
[style=plain, sibling=theorem]
[sibling=theorem]
\newcommand{\restatehere}[1]{%
	\marginline{\vspace{0.6cm}\footnotesize \hyperlink{original#1}{\hypertarget{restated#1}{[Main]}}}%
	\csname #1\endcsname*%
}
\begin{document}

\maketitle

\bigskip
{\footnotesize\tableofcontents}

\newpage 

\begin{abstract}
\begin{center}
		\textbf{\textsf{Abstract}} \smallskip
	\end{center}
  Many real-world decision-making problems involve optimizing
  multiple objectives simultaneously, rendering the selection of the
  most preferred solution a non-trivial problem: All Pareto optimal
  solutions are viable candidates, and it is typically up to a
  decision maker to select one for implementation based on their
  subjective preferences.
  To reduce the cognitive load on the decision maker, previous work
  has introduced the Pareto pruning problem, where the goal is to
  compute a fixed-size subset of Pareto optimal solutions that best
  represent the full set, as evaluated by a given quality measure.
  Reframing Pareto pruning as a multiwinner voting problem, we
  conduct an axiomatic analysis of existing quality measures,
  uncovering several unintuitive behaviors. Motivated by these
  findings, we introduce a new measure, \emph{directed coverage}.
  We also analyze the computational complexity of optimizing various
  quality measures, identifying previously unknown boundaries between
  tractable and intractable cases depending on the number and
  structure of the objectives. Finally, we present an experimental
  evaluation, demonstrating that the choice of quality measure has a
  decisive impact on the characteristics of the selected set of
  solutions and that our proposed measure performs competitively or
  even favorably across a range of settings.
\end{abstract}

\section{Introduction}

Many real-world decision-making problems in domains such as systems
design, engineering, operations research, and healthcare are
inherently multiobjective
\citep{DBLP:conf/dagstuhl/StewartBBCEGJNPS08,marler2004survey,DBLP:journals/anor/EriskinKZ24}.
As a result, multiobjective optimization has become a central
research area
\citep{DBLP:conf/dagstuhl/2008moo,ehrgott2005multicriteria}, and
multiobjective variants of many classical algorithmic techniques,
including reinforcement learning
\citep{DBLP:journals/aamas/HayesRBKMRVZDHH22}, integer programming
\citep{DBLP:journals/eor/SylvaC07}, scheduling
\citep{DBLP:journals/candie/GuoWLR13}, and flows
\citep{DBLP:journals/cor/EusebioFE14}, have been intensively studied.

A key challenge in multiobjective optimization is the absence of a
single, objectively best solution. Instead, all solutions on the
Pareto front, i.e., solutions that are not (weakly) outperformed by
another solution in every objective, are viable options for
implementation, with each of them reflecting a different tradeoff
among the objectives. To resolve this, the multiobjective literature
assumes the presence of a decision maker (DM) who selects a final
solution to be implemented based on their subjective preferences.
Herein, a canonical approach is to first compute the Pareto front and
then present it to the DM for selection (see \Cref{sec:RW} for a
  discussion of alternative approaches and additional background on
multiobjective optimization). However, in practice, the Pareto front
is often very large, making it cognitively infeasible for the DM to
process all solutions and compare them effectively.
This motivates the study of the \emph{Pareto pruning} problem (also
known as the representation problem):
Compute a fixed-size subset of Pareto optimal solutions that
represents the overall structure and available tradeoffs of the full
Pareto front well
\citep{DBLP:journals/cor/VazPFKS15,DBLP:journals/candie/PetchrompoCBWP22,taboada2007data,DBLP:journals/eor/PetchrompoWP22,DBLP:journals/mp/Sayin00,DBLP:journals/eor/ZioB11,DBLP:journals/ress/WangLFLC20,DBLP:journals/ress/TaboadaBCW07,DBLP:journals/cor/EusebioFE14}.
A wide range of quality measures have been proposed to evaluate the
selected subset
\citep{DBLP:journals/csur/LiY19,faulkenberg2010quality}, each inducing
a different solution method by selecting the subset optimizing the measure.
Two widely used measures are \emph{uniformity}, which aims to
maximize the minimum distance between any two selected solutions, and
\emph{coverage}, which minimizes the maximum distance from any
non-selected solution to its nearest selected neighbor
\citep{DBLP:journals/mp/Sayin00}.

Despite the wide range of studied measures, systematic comparisons of
their formal properties and a comprehensive analysis of their
computational complexity remain largely absent from the literature.
Existing comparative work (e.g., \citet{DBLP:journals/csur/LiY19} and
\citet{faulkenberg2010quality}) typically groups quality measures
into different categories based on soft criteria or compares them
experimentally. Prior algorithmic work has mostly considered the case
of two objectives \citep{DBLP:journals/cor/VazPFKS15} or has focused
on heuristic and evolutionary methods
\citep{taboada2008multi,DBLP:journals/candie/PetchrompoCBWP22}.

For our analysis, we approach the Pareto pruning problem through the
lens of social choice theory, also known as the theory of collective
decision making.
One of the most actively studied problems in social choice is
multiwinner voting, where the goal is to select a subset of $k$
candidates based on the preferences of voters over a given set of
candidates. We observe a natural analogy: solutions to the
multiobjective optimization problem correspond to candidates, and
each objective constitutes a voter, who evaluates the solutions
according to their performance in the respective objective.
This connection lends itself to an axiomatic analysis of quality
measures, a core method in social choice theory, to enable a
structured comparison between them. Second, the common distinction
between ordinal, cardinal, and approval preferences in social choice
motivates an analysis of the Pareto pruning problem under analogous
assumptions about the structure of the objectives, as ordinal and
approval objectives are generally easier to elicit, especially if
objectives correspond to different human evaluators.

In addition to contributing to the multiobjective optimization
literature, this work also offers a new perspective on multiwinner
voting that complements the three classical paradigms of
\emph{individual excellence}, \emph{proportional representation}, and
\emph{diversity} \citep{faliszewski2017multiwinner}.
The goal of the Pareto pruning problem is distinct from these three
in that the focus lies on ``satisfying'' the candidates, i.e.,
solutions, and not the voters, i.e., objectives. In Pareto pruning,
voters are merely used to assess the similarity between two
candidates, i.e., two candidates are considered close if they are
evaluated similarly by all voters.
Uniformity then seeks a set of mutually dissimilar candidates, while
coverage aims to ensure that every non-selected candidate is close to
at least one selected candidate.
To the best of our knowledge, the only prior work in the voting space
adopting a somewhat similar perspective is that of
\citet{DBLP:conf/ijcai/DelemazureJ0S24}, who, in the spirit of
uniformity, consider the problem of selecting two distant candidates.

\subsection{Our Contributions}
We present a systematic study of quality measures for Pareto pruning
in multiobjective problems, taking a holistic perspective by
integrating axiomatic, algorithmic, and experimental analyses.
Our goal is to contribute formal structure and arguments to the
discussion around quality measures that have traditionally remained
quite fragmented and disconnected across different approaches.
Our analysis connects three traditionally distinct areas:
multiobjective optimization, social choice, and computational geometry.
Besides considering $\ell_1$-variants of the widely studied
uniformity and coverage measures, we also introduce a novel measure,
directed coverage.

In \Cref{sec:ax}, we discuss different desiderata for Pareto pruning
and initiate the axiomatic analysis of quality measures. We propose
five axioms capturing whether solutions reflecting distinct tradeoffs
are guaranteed to be selected  (standout consistency and outlier
consistency), and how the selected subset responds to changes in
solutions' performance or the addition of new solutions  (extremism
monotonicity, monotonicity, and $\veps$-split-proofness). An overview
of which measures satisfy which axioms is provided in \Cref{tab:axioms}.
Motivated by the shortcomings of uniformity and coverage revealed in
our axiomatic analysis, we introduce \emph{directed coverage}, a
measure that, like coverage, aims to cover all solutions well, but,
unlike coverage, evaluates how well a solution $a$ covers another
solution $b$ not by their $\ell_1$-distance but by the summed extent
to which $b$ outperforms $a$.
Unlike uniformity and coverage, directed coverage guarantees, for
instance, that a selected solution will continue to get selected in
case it improves its performance.

In \Cref{sec:alg}, we conduct a thorough algorithmic analysis of
computing the optimal pruning under the three considered quality
measures (see \Cref{tab:alg}).  \citet{DBLP:journals/cor/VazPFKS15}
established that Pareto pruning for uniformity and coverage is
solvable in polynomial time for two objectives, but left the
complexity for more than two and even an arbitrary number of
objectives open.\footnote{This focus on few objectives reflects that
  many classical multiobjective problems involve only two to four
  objectives
\citep{marler2004survey,DBLP:conf/dagstuhl/2008moo,ehrgott2005multicriteria}.}
Building on results from computational geometry, we prove NP-hardness
for uniformity and coverage for three objectives, thereby identifying
the precise boundary of tractability. Along the way, we present a
proof for the NP-hardness of the classic \textsc{Discrete $k$-Center}
problem for the $\ell_1$-distance in two dimensions, which has
surprisingly been missing from the computational geometry literature.
We further extend the algorithmic analysis to our new  \emph{directed
coverage} measure and explore variants of the problem for different
practical restrictions on the type of information provided by each
objective, i.e., besides cardinal (score-based) objectives, we also
explore ordinal (ranking-based) and approval (binary) ones.
While we are unable to observe a difference in computational
complexity when moving from cardinal to ordinal objectives, we find
that approval objectives render all three pruning problems solvable
in polynomial time for any constant number of objectives.

In \Cref{sec:exp}, we conduct an experimental analysis of our three
considered measures, observing that each yields distinctly different
results, and that optimizing for directed coverage introduces a new
perspective,  resulting in the selection of slightly more efficient solutions.

Our code and additional experimental results are available at \url{https://github.com/maxitw/picking_representative_moo}.

\section{Preliminaries}

For some $n\in \N$, let $[n]:=\{ 1, \dots, n\}$.
In a $d$-dimensional multiobjective optimization problem, a finite
set $X$ of alternatives is evaluated by $d \in \N$ \emph{objective
functions} $f_i:X \to \R$ for $i \in [d]$, where  $f_i(x) < f_i(y)$
for two alternatives $x$ and $y$ means that $y$ \emph{outperforms}
$x$ under the $i$-th objective.
The overarching goal of a multiobjective optimization problem is to
maximize all objectives simultaneously, that is, to analyze
$\max_{x \in X} (f_1(x), \dots, f_d(x))$.
We write $f:X \to \R^d, f(x) = (f_1(x), \dots, f_d(x))$ for the
function $f$ aggregating all objectives into the \emph{objective space} $\R^d$.
For two alternatives $x,y \in X$, we say that $x$ is \emph{dominated}
by $y$ if $f_i(x) \leq f_i(y)$ for all $i \in [d]$ and there exists
$j \in [d]$ with $f_j(x) < f_j(y)$. In addition, we say $x$ is
\emph{Pareto dominated} if there exists some $y\in X$ such that $x$
is dominated by $y$. Otherwise,  we call $x$ \emph{Pareto optimal}.

For $i\in [d]$,  an objective $f_i$ is called an \emph{approval
objective} if $f_i(x)\in \{0,1\}$ for all $x\in X$, i.e., each
alternative is either approved or disapproved by the objective, and
an \emph{ordinal objective} if $f_i$ is a bijection from $X$ to
$[|X|]$, i.e., $f_i$ arranges all alternatives from $X$ in a strict
ranking. We refer to the general, unrestricted case as a
\emph{cardinal objective}.

Pareto dominated alternatives are of little importance to a DM, since
there is a strictly better option available.
Accordingly, we will only implicitly assume the existence of $X$ and
instead operate directly on the set of Pareto optimal alternatives,
i.e., we ``preprocess'' our instances to only include Pareto optimal
alternatives.
Similarly, we will only implicitly assume the existence of $f_i$ and
instead treat each alternative as a point in $\R^d$ with its $i$-th
component denoting its value according to $f_i$.
Formally, as input to our problem, we receive the \emph{set of Pareto
optimal alternatives} $A = \{f(x) \mid x\in X \wedge x \text{ is
Pareto optimal}\} \subseteq \R^d$, to which we will refer as
\emph{alternatives} for short.
Our goal is to ``inform'' the DM about $A$ by selecting $k$
alternatives from $A$ for some given $k\in \mathbb{N}$.
We call a subset $S \subseteq A$ with $|S| = k$ a \emph{slate}.

To measure the similarity between two alternatives $x,y \in A$, we
use the Manhattan norm, also known as $\ell_1$-norm as $ || x-y || =
\sum_{i = 1}^d |x_i-y_i|$. Intuitively, two alternatives that are
close to each other present similar tradeoff decisions to the DM.  We
further introduce a ``directed'' variant of the Manhattan norm $||x -
y||_+=\sum_{i = 1}^d \max(x_i - y_i, 0)$. Note that $||\cdot||_+$ is not a metric, as it is not symmetric.
 I.e., we generally have $||x - y||_+\neq ||y - x||_+$.

We will use different \emph{measures} to evaluate the quality of a
slate $S\subseteq A$.
We refer to a generic measure as $I$, where it will always be clear
from context whether lower or higher values of $I$ are preferable.
For some set of alternatives $A$ and integer $k$, we let
$\mathcal{S}(I,A,k)$ be the set of slates which are optimal according
to measure $I$, i.e., subsets $S\subseteq A$ with $|S| = k$ that
maximize (resp.\ minimize) the value of $I$.

\section{Pareto Pruning: Problem, Quality Measures, and Axioms}\label{sec:ax}
We present a general formulation of the Pareto pruning problem in
\Cref{sec:PS}, the three quality measures we examine in
\Cref{sub:QM}, and our axiomatic analysis in \Cref{sub:axAn}.
\subsection{Problem Setting and Desiderata}\label{sec:PS}
We study the Pareto pruning problem, where given a set of
alternatives $A$ and an integer $k$,  we want to select a size-$k$
slate $S\subseteq A$ (to be presented to a DM). Three natural
desiderata for the selected slate $S$, regularly discussed in the
literature under potentially different names
\citep{DBLP:journals/candie/PetchrompoCBWP22,DBLP:conf/dagstuhl/2008moo,DBLP:journals/csur/LiY19},~are:
\begin{description}
  \item[Diversity] $S$ should be ``redundancy-free'', i.e., no two
    selected alternatives should be similar to each
    other.\footnote{Note that the term ``diversity'' is quite
      overused in the multiobjective literature and sometimes also
      refers to what we call representativity. Our notion of diversity
      is also distinct from the notion of diversity of
      \citet{faliszewski2017multiwinner} from the multiwinner voting
      literature, as their notion captures the idea of selecting
      alternatives so that for each objective, there is at least one
      alternative that is evaluated highly by this objective. In
      contrast, our notion of diversity is in line with the diversity
      notion used in a recent line of works in artificial intelligence,
      where the goal is to compute a set of sufficiently distinct
      solutions to a problem
    \citep{DBLP:conf/aaai/ArrighiFO023,DBLP:journals/ai/BasteFJMOPR22,DBLP:conf/aaai/HebrardHOW05,DBLP:conf/aaai/IngmarBST20}.}
  \item[Representativity] $S$ should represent every alternative in
    $A$, i.e., each non-selected alternative from $A$ should be close
    to one from~$S$.
  \item[Efficiency]\footnote{Note that, similar to the social choice literature, we use the term \enquote{efficiency} as an umbrella term to refer to notions explicitly capturing solution quality. This differs from parts of the multiobjective literature, where efficiency is used as a synonym for Pareto optimality (see, e.g., \citep{ehrgott2005multicriteria})} $S$ should contain ``high-quality'' alternatives,
    i.e., alternatives which score well across objectives.
\end{description}
Which of these three desiderata is most important or appropriate
depends on the context and the demands of the DM, making it hard to
argue for or against each of them in general.
\subsection{Quality Measures}\label{sub:QM}
We focus on two of the arguably most popular quality measures\footnote{Technically speaking, our
    quality measures can also be viewed as objectives we optimize.
    However, to distinguish them from the objectives present in
    multiobjective optimization problems, we exclusively refer to them as
measures.} for Pareto pruning: uniformity and coverage
\citep{DBLP:journals/mp/Sayin00,DBLP:journals/candie/PetchrompoCBWP22,DBLP:journals/csur/LiY19}.
Inspired by the desiderata of diversity, the \emph{uniformity} of a slate $S$ is
$I_U(S) = \min_{x, y \in S} ||x-y|| = \min_{x,y \in S}\sum_{i=1}^d |
x_i - y_i|$.
\textsc{Uniformity Pareto Pruning} is the problem of finding a slate
$S$, i.e., a size-$k$ subset of $A$, maximizing uniformity
$ \max_{S \subseteq A, |S| = k} I_U(S)$.

Inspired by the  idea of representativity, the \emph{coverage} of a
slate $S$ with respect to a set of alternatives $A$ is
$ I_C(S, A) =  \max_{a \in A} \min_{s \in S} ||a - s|| = \max_{a \in
A} \min_{s \in S} \sum_{i = 1}^d |a_i - s_i|$.
Note that a lower coverage value is better, since it signals that
every point in $A$ is close to a point in $S$. \textsc{Coverage
Pareto Pruning} is the problem of finding a slate with a minimum coverage value
$ \min_{S \subseteq A, |S| =k} I_C(S, A)$.\footnote{Uniformity and
  coverage are connected. In \Cref{app:compare} we show that the
  optimal coverage value with $k$ points and the optimal uniformity
value with $k+1$ points differ by a factor of at most $2$.}

\paragraph{A New Quality Measure: Directed Coverage} Our new measure
\emph{directed coverage} is inspired by the coverage measure, but
aims to correct some of its flaws that surface in our axiomatic
analysis. The difference between the two is best illustrated by means
of the following example.
Consider $a = (1, 0)$ and $b = (0, \veps)$ for some small $\veps >
0$. Asked to present one alternative to the decision maker, which
alternative should we choose?
Coverage alone provides no guidance on which alternative is
preferable, yet there is a strong case that one should select option
$a$, since it significantly outperforms $b$ under objective one and
is almost as good as $b$ under objective two.
This is because coverage is based on the symmetric Manhattan
distance, making it irrelevant whether we take an efficient
alternative to cover a less-efficient one or the other way around.
Directed coverage fixes this issue:
When quantifying how suitable an alternative $s$ is to cover an
alternative $a$, we do not take into account the distance between the
two with respect to objectives in which $s$ outperforms $a$, as $s$
covers $a$ in these objectives ``perfectly'' in any case. Instead, we
purely focus on and sum over the objectives in which $a$ outperforms
$s$, i.e., $||a - s||_+$, as this quantifies the total efficiency
loss we suffer by presenting $s$ rather than $a$ to the decision maker.
Formally, we define the \emph{directed coverage} of a slate $S \subseteq A$ as
$
I_{DC}(S, A)  =   \max_{a \in A} \min_{s \in S} ||a - s||_+  =
\max_{a \in A} \min_{s \in S}\sum_{i = 1}^d \max(a_i - s_i, 0).
$
\textsc{Directed Coverage Pareto Pruning} is the problem of finding a
slate minimizing directed coverage:
$ \min_{s \in S, |S| = k} I_{DC}(S, A)$.

To illustrate the different selections made by the three measures, we
refer to \Cref{fig:comp} in the appendix, where we show their
behavior on instances from our experiments.

\subsection{Axiomatic Analysis} \label{sub:axAn}

\begin{table}
  \centering
    \begin{tabular}{l@{\hskip 2pt}c@{\hskip 4pt}c@{\hskip
      4pt}c@{\hskip 4pt}c@{\hskip 4pt}c}
      \toprule
      & \shortstack{Monotonicity}
      & \shortstack{$\veps$-Split\\Proofness}
      & \shortstack{Extremism\\Monotonicity}
      & \shortstack{Standout\\Consistency}
      & \shortstack{Outlier\\Consistency} \\
      \midrule
      Uniformity     & \xmark{} [Pr.\ \ref{prop:unic_not_monotonic}]
      & \cmark{} [Pr.\ \ref{prop:uni_epssplit}]     & \cmark{}
      [Pr.\ \ref{prop:uni_extreme}]         & \xmark{}
      [Pr.\ \ref{prop:unic_not_winner}]       & \xmark{}
      [Pr.\ \ref{prop:uni_dc_not_distance}] \\
      Coverage       & \xmark{} [Pr.\ \ref{prop:unic_not_monotonic}]
      & \xmark{} [Pr.\ \ref{prop:cdc_not_epssplit}] & \xmark{}
      [Pr.\ \ref{prop:cdc_not_extreme}]      & \xmark{}
      [Pr.\ \ref{prop:unic_not_winner}]       & \cmark{}
      [Pr.\ \ref{prop:c_distance}] \\
      Dir. Cov.  & \cmark{} [Pr.\ \ref{prop:dc_monotonic}] & \xmark{}
      [Pr.\ \ref{prop:cdc_not_epssplit}] & \xmark{}
      [Pr.\ \ref{prop:cdc_not_extreme}]      & \cmark{}
      [Pr.\ \ref{prop:dc_winner}]             & \xmark{}
      [Pr.\ \ref{prop:uni_dc_not_distance}] \\
      \bottomrule
  \end{tabular}
  \caption{Overview of axiomatic results. \cmark~indicates that the
  measure fulfills the axiom. \xmark~means that it violates it.}
  \label{tab:axioms}
\end{table}

While numerous quality measures have been proposed in the literature
\citep{DBLP:journals/csur/LiY19,faulkenberg2010quality}, there is a
lack of theoretical comparisons between them. In this section, we
conduct an axiomatic analysis of the three measures introduced above,
aiming to provide formal arguments for and against each measure. This
approach allows us to move beyond intuitive arguments for and against
different measures on disconnected grounds and instead evaluate
measures based on explicitly stated criteria.

We consider two types of axioms. The first type concerns how optimal
slates change in response to modifications of the underlying
instance. The second set examines whether certain “extreme”
alternatives are guaranteed to be included in an optimal slate. Our
axioms serve two main purposes: (i) to identify measures that exhibit
unintuitive or unreasonable behavior, and (ii) to identify how
measures align with the three desiderata introduced in \Cref{sec:PS}.
An overview of which measures satisfy which axioms is provided in
\Cref{tab:axioms}. Formal statements and proofs are given in \Cref{app:axan}.

We begin with the axiom of \emph{monotonicity}, which intuitively
demands that improving an alternative $x$ with respect to one or more
objectives should not result in $x$ being kicked out from the
selected slate. Such behavior would be counterintuitive, as it implies that strictly
improving an alternative's performance can make it less likely for
the DM to be presented with the alternative.
\begin{restatable}[Monotonicity]{axiom}{Monotonicity}\label{ax:monotonic}
  A measure $I$ satisfies \emph{monotonicity} if, for any set of
  alternatives $A$, $k\in \mathbb{N}$, and $S\in \mathcal{S}(I,A,k)$
  with $x\in S$, the following holds:
  If $y\in\mathbb{R}^d$ dominates $x$, then there exists an optimal slate
  \(S'\in \mathcal{S}\big(I,(A\setminus\{x\})\cup\{y\},k\big)\) with
  \(y\in S'\).
\end{restatable}
Both uniformity and coverage violate monotonicity. One reason for
this is that improving an alternative can reduce its Manhattan
distance to other alternatives, thereby diminishing its appeal to
diversity (as it decreases the quality measure) or coverage (as the
alternative becomes easier to cover). In contrast, directed coverage
avoids this issue: if $x$ strictly improves, then for any other
alternative $z$, $||x - z||_+$ can only increase, while $||z - x||_+$
can only decrease, implying that $x$ is not better covered by $z$ than before.
As a result, directed coverage satisfies monotonicity.

The second type of instance modification we consider is splitting an
alternative into two alternatives.
A popular variant of this idea, known as clone-robustness, requires
that adding a perfect duplicate of an alternative should not affect
the selected slate (up to potentially replacing the alternative with
the duplicate).
All three of our measures trivially satisfy clone-robustness, as
selecting two identical alternatives is never optimal.
To obtain a more meaningful distinction between measures, we consider
a stronger axiom, which we call \emph{$\varepsilon$-split proofness}.
It requires that no alternative $x$ can be replaced by two
arbitrarily close alternatives $y_\varepsilon$ and $z_\varepsilon$ so
that both $y_\varepsilon$ and $z_\varepsilon$ get selected.
Additionally, we demand that if either $y_\varepsilon$ or
$z_\varepsilon$ is selected in the modified instance, replacing them
with $x$ should still yield an optimal slate in the original
instance. This ensures that arbitrarily small perturbations cannot
cause any changes to the slate.

\begin{restatable}[$\veps$-split proofness]{axiom}{EpsSplitProof}
  A measure $I$ satisfies \emph{$\veps$-split proofness} if, for any
  set of alternatives $A$ and $k\in \mathbb{N}$, there exists some
  $\veps > 0$ such that for all $x \in A$ and $y_\veps, z_\veps \in
  \mathbb{R}^d$ with $||x - y_\veps|| < \veps$ and $||x - z_\veps|| <
  \veps$, the following holds:
  If $S_\veps\in \mathcal{S}\big(I,(A \setminus \{x\}) \cup
  \{y_\veps, z_\veps\},k\big)$, then
  \begin{enumerate*}[label=(\roman*)]
  \item $S_\veps \subseteq A$ and $S_\veps\in \mathcal{S}(I,A,k)$ or
  \item $S_\veps \setminus \{y_\veps, z_\veps\} \cup \{x\}\in
    \mathcal{S}(I,A,k)$.
  \end{enumerate*}
\end{restatable}
Notably, the axiom implies that a measure never selects two
alternatives that are arbitrarily close to one another, a property
particularly desirable from the perspective of the diversity
desideratum. Among the measures we consider, only uniformity
satisfies $\veps$-split proofness. Both coverage and directed
coverage violate the axiom, as it can be beneficial for these
measures to select two alternatives arbitrarily close to each other
if they cover different halves of the space.

While monotonicity and $\veps$-split proofness can be considered
broadly desirable, the desirability of the remaining axioms is more
subjective, as each of them captures some form of alignment with one
of the three desiderata introduced above.  We begin with a variant of
monotonicity tailored to the diversity desideratum, which we call
\emph{extremism monotonicity}. This axiom requires that if a selected
alternative is the most extreme according to some objective, then
pushing it even further away from the other alternatives in this
objective should not result in its exclusion from the slate.
\begin{restatable}[Extremism
  monotonicity]{axiom}{ExtremismMonotonicity}\label{ax:extreme}
  A measure $I$ satisfies \emph{extremism monotonicity} if for any
  set of alternatives $A$, $k\in\mathbb{N}$, $t>0$, and $S\in
  \mathcal{S}(I,A,k)$ with $x\in S$, the following holds:
  If for some objective $i\in[d]$, we have $x_i=\max_{a\in A} a_i$
  (resp.\ $x_i=\min_{a\in A} a_i$), then there exists an optimal slate
  \(S'\in \mathcal{S}\big(I,(A\setminus\{x\})\cup\{x'\},k\big)\) with
  \(x'\in S'\), where \(x'_i:=x_i+t\) (resp.\ \(x'_i:=x_i-t\)) and
  \(x'_j:=x_j\) for all \(j\in[d]\setminus\{i\}\).
\end{restatable}
This axiom formalizes the intuition that alternatives corresponding
to particularly distinct tradeoff decisions should remain part of the
slate when they become more distinct. As expected, uniformity
satisfies extremism monotonicity, while both coverage and directed
coverage violate~it.

Our next axiom is inspired by the notion of Condorcet-consistency.
Translated to our setting, Condorcet-consistency says that an
alternative outperforming each of the others in a majority of
objectives is always selected if it exists. We introduce a cardinal,
weighted variant based on a notion we call a \emph{standout alternative}.
To formalize this, we interpret $||x - y||_+$ as the ``lead'' of
alternative $x$ over alternative $y$, as it captures the total amount
by which $x$ outperforms $y$ across all objectives in which  $x$
outperforms $y$. An alternative is a standout alternative if its
weakest lead against any other alternative exceeds the strongest lead
any other alternative has against it:

\begin{restatable}[Standout
  consistency]{axiom}{StandoutConsistency}\label{ax:winner}
  An alternative $x \in A$ is a \emph{standout alternative} if
  $\min_{a \in A \setminus \{x\}} ||x-a||_+ \;>\; \max_{a \in A
  \setminus \{x\}} ||a-x||_+$.
  A measure $I$ is \emph{standout consistent} if for any set of
  alternatives $A$ containing a standout alternative $x\in A$, we
  have $x\in S$ for each optimal slate $S\in \mathcal{S}(I,A,k)$ and $k\ge 1$.
\end{restatable}

From the perspective of efficiency, standout alternatives are highly
desirable, as they are significantly better than all other
alternatives in aggregate. Among the measures we consider, only
directed coverage satisfies standout consistency. Uniformity and
coverage, in contrast, do not satisfy this axiom, as when faced with
the decision of which of two alternatives to pick, they do not take
into account which one is more efficient.

We conclude with the concept of an \emph{outlier alternative}, an
alternative that is further away from every other alternative than
any two non-outlier alternatives are from each other:

\begin{restatable}[Outlier
  consistency]{axiom}{OutlierConsistency}\label{ax:distance} An
  alternative $x \in A$ is an \emph{outlier alternative} if
  $\min_{a \in A \setminus \{x\}} \|x-a\| \;>\; \max_{y,z \in A
  \setminus \{x\}} \|y-z\|$.
  A measure $I$ is \emph{outlier consistent} if for any $k \geq 2$ and any set of
  alternatives $A$ containing an outlier alternative $x\in A$, we
  have $x\in S$ for each optimal slate $S\in \mathcal{S}(I,A,k)$.
\end{restatable}

From the perspective of representativity, an outlier should be
selected, as it lies too far from all other alternatives to be
adequately ``covered'' by any of them. Among the measures we
consider, only coverage satisfies outlier consistency, while both
uniformity and directed coverage do not.

\section{Algorithmic Analysis} \label{sec:alg}
We present our algorithmic analysis (see \Cref{tab:alg}). We start by
discussing some related problems from computational geometry
(\Cref{sec:CG}), before we analyze the complexity of Pareto pruning
for cardinal (\Cref{sec:CO}), ordinal (\Cref{sec:OO}), and approval
(\Cref{sec:AO}) objectives.

\begin{table}
  \centering
    \begin{tabular}{llccc}
      \toprule
      \multirow{2}{*}{Measure} & \#Objectives & Cardinal & Ordinal & Approval \\
      \cmidrule(l){2-5}
      & &       &       &       \\
      \midrule
      \multirow{3}{*}{Uniformity} & $d = 2$ & P$^\dagger$ &
      P$^\dagger$ & P$^\dagger$ \\
      & fixed $d \geq 3$ & NP-h [Th.\ \ref{thm:dim3_hard}] & ? & P
      [Pr.\ \ref{prop:lvalued_easy}] \\
      & unbounded $d$ & NP-h [Th.\ \ref{thm:dim3_hard}] & NP-h
      [Pr.\ \ref{thm:ord-dim3_hard}] & NP-h
      [Pr.\ \ref{prop:uc_approval_highd_hard}]\\
      \cmidrule(l){2-5}
      \multirow{3}{*}{Coverage} & $d = 2$ & P$^\dagger$ & P$^\dagger$
      & P$^\dagger$ \\
      & fixed $d \geq 3$ & NP-h [Th.\ \ref{thm:dim3_hard}] & ? & P
      [Pr.\ \ref{prop:lvalued_easy}] \\
      & unbounded $d$ & NP-h [Th.\ \ref{thm:dim3_hard}] & NP-h
      [Pr.\ \ref{thm:ord-dim3_hard}] & NP-h
      [Pr.\ \ref{prop:uc_approval_highd_hard}]\\
      \cmidrule(l){2-5}
      \multirow{4}{*}{Dir. Coverage} & $d = 2$ & P
      [Pr.\ \ref{prop:dc_r2_easy}]& P [Pr.\ \ref{prop:dc_r2_easy}] &
      P [Pr.\ \ref{prop:dc_r2_easy}] \\
      & fixed $d \geq 3$  & NP-h [Th.\ \ref{thm:dc_dim3_hard}] & ? &
      P [Pr.\ \ref{prop:lvalued_easy}] \\
      & unbounded $d$ & NP-h [Th.\ \ref{thm:dc_dim3_hard}] & NP-h
      [Pr.\ \ref{thm:ord-dim3_hard}] & NP-h
      [Pr.\ \ref{prop:uc_approval_highd_hard}] \\
      \bottomrule
    \end{tabular}
  \caption{Summary of computational results. Results marked with
    $\dagger$ are by
  \citet{DBLP:journals/cor/VazPFKS15}.}\label{tab:alg}
\end{table}

\subsection{Connections to Computational Geometry}\label{sec:CG}

\textsc{Uniformity Pareto Pruning} and \textsc{Coverage Pareto
Pruning} are special cases of geometric variants of two well-known
computational problems on graphs: the \textsc{Discrete $k$-Center}
problem \citep{hakimi_optimum_1964}  and the \textsc{$p$-Dispersion}
problem \citep{erkut_discrete_1990}. Given a set of points $B$, a
metric $d:B\times B \to \R_{\geq 0}$, and an integer $k$,
\textsc{Discrete $k$-Center}  (resp.\ \textsc{$p$-Dispersion}) asks
for a size-$k$ subset $S \subseteq B$ minimizing $\max_{a \in B}
\min_{s \in S} d(s, a)$ (resp.\ maximizing $\min_{x, y \in S, x \neq
y} d(x, y)$). Note that in case $d$ is the Manhattan distance, these
problems only differ from \textsc{Coverage Pareto Pruning}
(resp.\ \textsc{Uniformity Pareto Pruning}) in that $B$ and $S$ can
contain Pareto dominated points.
\citet{wangkuo1988} studied the geometric variant of
$\textsc{$p$-Dispersion}$ when $d$ is the Euclidean distance,
establishing NP-hardness in $\R^2$.
Considering the case when $d$ is the Euclidean or Manhattan distance,
\citet{megiddosupowit1984} showed NP-hardness in $\R^2$ for a
continuous version of \textsc{Discrete $k$-Center}, where the
selected points are not restricted to be from $B$, but one can select
any subset $S \subseteq \R^d$ of $k$ points.
In the literature, it is commonly assumed that \textsc{Discrete
$k$-Center} in $\R^2$ is NP-hard as well. However, we were unable to
track down a readily available proof.\footnote{For example:
  \citet{agarwal_efficient_1988} cite the works of
  \citet{megiddosupowit1984}, and \citet{fowler_optimal_1981} as a
  reference, yet both sources only contain a proof for the continuous
version.} To fill this gap and to use the results in our later
analysis, we provide a proof for the Manhattan distance in two
dimensions following the key ideas from \citet{megiddosupowit1984}:

\begin{restatable}{theorem}{kcentertwodhard}\label{thm:kcenter_hard}
  \textsc{Discrete $k$-center} for the Manhattan distance is NP-hard,
  even in two dimensions.
\end{restatable}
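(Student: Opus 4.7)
The plan is to reduce from 3-SAT, adapting the NP-hardness argument of \citet{megiddosupowit1984} for the continuous setting to the discrete one. Given a 3-CNF formula $\varphi$ with variables $x_1,\dots,x_n$ and clauses $C_1,\dots,C_m$, I would construct in polynomial time a point set $B\subseteq \R^2$, an integer $k$, and a threshold $r>0$ such that there is a size-$k$ subset $S\subseteq B$ with $\max_{b\in B}\min_{s\in S}\|b-s\|_1\le r$ if and only if $\varphi$ is satisfiable. The construction rests on two gadget types. A \emph{variable gadget} for $x_i$ places two candidate points $p_i^+$ and $p_i^-$ (corresponding to the literals $x_i$ and $\neg x_i$) and one customer point $c_i$ at Manhattan distance exactly $r$ from both $p_i^+$ and $p_i^-$ and at distance $>r$ from every other point of $B$; with an appropriately tight budget $k$, this forces exactly one of $p_i^+,p_i^-$ into $S$, encoding a truth value for $x_i$. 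A \emph{clause gadget} for $C_j=(\ell_{j,1}\vee\ell_{j,2}\vee\ell_{j,3})$ introduces a single customer point $d_j$ whose Manhattan ball of radius $r$ contains precisely the three candidates associated with $\ell_{j,1},\ell_{j,2},\ell_{j,3}$, so that $d_j$ is covered iff the clause is satisfied under the encoded assignment.

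Because each literal may appear in many clauses, I cannot in general place all relevant clause customers within radius $r$ of a single literal candidate, and I need a mechanism that consistently copies the truth value of a variable to each of its occurrences. Following the key idea of \citet{megiddosupowit1984}, I would introduce \emph{wire gadgets}: long chains of alternating candidate and customer points tightly spaced so that covering all customers of a wire with the budget allotted to it forces the selected candidates along the wire to form a consistent pattern corresponding to a single bit. These wire candidates then play the role of the literal points seen by the clause gadgets. The Manhattan unit ball is an axis-aligned square (after a $45^\circ$ rotation of coordinates), which makes it convenient to route wires along two orthogonal directions without interference. I would lay variable gadgets along a horizontal line, clause gadgets along a parallel line above, wire gadgets in the strip between them, and finally set $k$ equal to the total per-gadget capacity, so that the budget is exhausted iff every variable, wire, and clause is "handled" consistently.

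The step I expect to be the main obstacle is the discretization itself. In the continuous setting of \citet{megiddosupowit1984}, centers can be placed anywhere, which greatly simplifies the analysis of gadget interactions; in the discrete version, every candidate must be explicitly listed in $B$ and must not create any unintended cover of a customer belonging to a foreign gadget. This forces careful control of inter-gadget spacing so that the Manhattan ball of radius $r$ around any customer contains only the candidates of its own gadget. Choosing spacings that are a small polynomial in $n+m$ keeps coordinates polynomially bounded and should suffice. Once the construction is in place, correctness follows in both directions by a gadget-by-gadget bookkeeping argument: a satisfying assignment yields a feasible size-$k$ selection by picking the prescribed candidates in each variable and wire gadget, and conversely any size-$k$ selection with coverage $\le r$ must pick exactly one literal candidate per variable, propagate it consistently through the wires, and cover every clause customer, thereby inducing a satisfying assignment for $\varphi$.
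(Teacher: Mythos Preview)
Your high-level plan follows the same template as the paper's proof: a reduction from 3-SAT with gadgets for variables, clauses, and wires that propagate truth values under a tight covering budget. However, there is a genuine gap. In the layout you describe---variable gadgets on one horizontal line, clause gadgets on a parallel line, wires in the strip between---wires emanating from different variables must \emph{cross} one another in the plane, since a variable's wire must reach every clause it occurs in regardless of that clause's horizontal position. You give no mechanism for two wires to cross without corrupting each other's propagated bit. Your remark that the Manhattan ball is a rotated square and that wires can run in two orthogonal directions ``without interference'' addresses turns, not crossings; two wires that meet at a point will share customers and candidates within radius $r$, and a single selected center there can cover customers of both wires, breaking the counting argument that forces consistent propagation.

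This is exactly the technical crux the paper has to resolve. Rather than a two-point variable gadget plus separate wires, the paper encodes each variable as a single large closed \emph{circuit} of points with a repeating $x,y,z,c$ pattern; the tight budget forces either all $x$'s or all $z$'s to be selected along the whole circuit. Arms of each circuit are extended toward the clause points, and wherever two circuits must cross the paper inserts an explicit \emph{junction} gadget: four central points on a small square, one for each of the four truth-value combinations of the two crossing circuits, arranged so that the budget forces exactly one of them to be chosen and so that this choice propagates each circuit's bit independently through the crossing. Without such a crossing gadget---or a switch to \textsc{Planar 3-SAT}, which you do not invoke---your construction does not go through.
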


\subsection{Cardinal Objectives}\label{sec:CO}

When we restrict ourselves to Pareto optimal points in two dimensions, \textsc{Discrete $k$-Center} and \textsc{$p$-Dispersion} become tractable:
 \citet{DBLP:journals/cor/VazPFKS15} have presented polynomial-time
algorithms for \textsc{Uniformity Pareto Pruning} and
\textsc{Coverage Pareto Pruning} for the case of two objectives by
exploiting that a set of Pareto optimal alternatives $A \subseteq
\R^2$ can be embedded into $\R$ in a way that maintains the Manhattan
distance between alternatives. A dynamic programming approach for the
embedded problem in $\R$ yields a polynomial-time algorithm.
This general approach can also be adapted to showing an analogous
result for directed coverage:

\begin{restatable}{proposition}{dcdimtwoeasy}\label{prop:dc_r2_easy}
  For at most two objectives, \textsc{Directed Coverage Pareto
  Pruning} can be solved in $O(|A|k + |A| \log |A|)$.
\end{restatable}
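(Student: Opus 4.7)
My plan is to sort the alternatives by one objective and then solve the induced one-dimensional problem via a dynamic program running in $O(|A|k)$ time after an $O(|A|\log |A|)$ sort.

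Write $n = |A|$ and sort $A = \{a^1, \ldots, a^n\}$ so that the first coordinates satisfy $x_1 < \cdots < x_n$, where $x_i = a^i_1$. Because all alternatives are Pareto optimal in $\R^2$, the second coordinates $y_i = a^i_2$ are automatically arranged in the reverse order $y_1 > \cdots > y_n$. This step costs $O(n \log n)$.

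For any two indices $i < j$ in this ordering, I use the identities $||a^i - a^j||_+ = y_i - y_j$ and $||a^j - a^i||_+ = x_j - x_i$, which hold because exactly one of the two points is better on each coordinate. Consequently, for a slate $S = \{a^{j_1}, \ldots, a^{j_k}\}$ with $j_1 < \cdots < j_k$, the directed coverage $I_{DC}(S, A)$ decomposes as the maximum of: a \emph{left tail} cost $y_1 - y_{j_1}$; for each consecutive pair a \emph{middle segment} cost $M(j_\ell, j_{\ell+1}) := \max_{j_\ell < i < j_{\ell+1}} \min(x_i - x_{j_\ell},\; y_i - y_{j_{\ell+1}})$, since an intermediate $a^i$ can be covered either by $a^{j_\ell}$ from the left or by $a^{j_{\ell+1}}$ from the right; and a \emph{right tail} cost $x_n - x_{j_k}$.

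I then solve the dynamic program
$$g(i, j) \;=\; \min_{i' < i}\; \max\bigl(g(i', j-1),\; M(i', i)\bigr), \qquad g(i, 1) = y_1 - y_i,$$
where $g(i, j)$ is the minimum directed coverage over the left tail and all middle segments when the first $j$ selected indices end at $i$. The final answer is $\min_i \max(g(i, k),\; x_n - x_i)$.

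Two observations will yield the $O(nk)$ runtime for the DP. First, since $w_\ell := x_\ell - y_\ell$ is strictly increasing in $\ell$, the inner expression $\min(x_\ell - x_{i'}, y_\ell - y_i)$ is unimodal in $\ell$, and $M(i', i) = \max(x_{\ell^*} - x_{i'},\; y_{\ell^*+1} - y_i)$, where $\ell^*$ is the largest $\ell \in (i', i)$ with $w_\ell \le x_{i'} - y_i$; this cutoff can be maintained with a shared pointer in amortized constant time. Second, for each fixed layer $j$, the optimal predecessor $i'$ is non-decreasing in $i$: viewing $i' \mapsto g(i', j-1)$ as non-decreasing (inductively) and $i' \mapsto M(i', i)$ as non-increasing, the minimum of their pointwise max is attained near their ``crossing'', which shifts rightward as $i$ grows because $M(\cdot, i)$ shifts upward. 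Exploiting this, each layer can be computed by a single left-to-right sweep in $O(n)$, giving $O(nk)$ overall for the DP.

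The main obstacle is rigorously establishing the monotonicity of the optimal predecessor in observation two. I would handle this by a standard exchange argument: if $i < i''$ had optimal predecessors $i' > i'''$, the joint monotonicity of $M(\cdot, \cdot)$ in both arguments, together with the inductive monotonicity of $g(\cdot, j-1)$ in its first argument, would allow swapping the predecessors without increasing either DP cell, contradicting the assumed optimality. Combined with the pointer-based evaluation of $M$ and the initial $O(n \log n)$ sort, this yields the claimed $O(|A|k + |A|\log |A|)$ bound.
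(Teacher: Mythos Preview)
Your proposal is correct and follows exactly the same dynamic program as the paper's proof: your $g(i,j)$ and $M(i',i)$ are the paper's $T[i,j]$ and $\delta_{i,i'}$, with the same base case and the same recovery formula $\min_i \max(g(i,k),\, x_n-x_i)$. The paper simply defers the $O(nk)$ speedup to the techniques of \citet{DBLP:journals/cor/VazPFKS15}, whereas you sketch it explicitly; your crossover/unimodality argument (that $g(\cdot,j-1)$ is non-decreasing, $M(\cdot,i)$ is non-increasing, so $\max$ of the two is valley-shaped with the valley shifting right as $i$ grows) is valid and is what actually justifies the monotone-predecessor sweep.

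One caution: the exchange argument you offer at the end does not go through as written for $\max$-based recursions. If $i<i''$ have optimal predecessors $i'>i'''$, swapping gives cost $\max(g(i'''),M(i''',i))$ for $i$; here $g(i''')\le g(i')$ but $M(i''',i)\ge M(i',i)$, so the swapped cost can exceed $g(i,j)$. Rely on the crossover argument instead (it already suffices), and for the inductive monotonicity of $g(\cdot,j)$ you will also need the easy observation $g(i,j)\le g(i,j-1)$ to handle the boundary case where the optimal predecessor of $i+1$ is $i$ itself.
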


\citet{DBLP:journals/cor/VazPFKS15} state in their conclusion:
``[Pareto pruning] for more than two objectives may become an
intractable task''. In fact, we were unable to find an NP-hardness
result for Pareto pruning, even for an arbitrary number of
objectives. We complement their tractability results with an
NP-hardness for \textsc{Uniformity / Coverage Pareto Pruning} for
three objectives.
We establish this result by adapting NP-hardness proofs for
\textsc{Discrete $k$-Center} and \textsc{$p$-Dispersion} for the
Manhattan distance in $\R^2$.
The general idea is that it is possible to construct a hyperplane $H
\subseteq \R^3$  in which there is no pair of points $x, y \in H$,
such that $x$ dominates $y$.
Embedding the constructions from these hardness proofs into such a
hyperplane $H$ then allows us to derive hardness results for
\textsc{Uniformity Pareto Pruning} and \textsc{Coverage Pareto
Pruning} for three objectives.

\begin{restatable}{theorem}{dimthreeprune}\label{thm:dim3_hard}
  \textsc{Uniformity / Coverage Pareto Pruning}  are NP-hard, even
  for three objectives.
\end{restatable}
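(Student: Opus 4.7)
The plan is to reduce from the two-dimensional Manhattan versions of \textsc{Discrete $k$-Center} (for \textsc{Coverage Pareto Pruning}) and of \textsc{$p$-Dispersion} (for \textsc{Uniformity Pareto Pruning}), whose NP-hardness is given by \Cref{thm:kcenter_hard} and an analogous Manhattan adaptation of the argument of \citet{wangkuo1988}. The obstacle is that the planar point sets used in those proofs need not be Pareto-optimal: placed naively in $\R^2 \times \{0\} \subseteq \R^3$, most points would be discarded by the implicit Pareto preprocessing assumed in the problem definition, destroying the reduction.

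To resolve this, I would embed the planar instance into the anti-diagonal hyperplane $H = \{y \in \R^3 : y_1 + y_2 + y_3 = C\}$ via the lift $\phi(x_1, x_2) = (x_1,\, x_2,\, C - x_1 - x_2)$, with $C$ any constant large enough to keep all coordinates positive. Pareto incomparability on $H$ is automatic: if $\phi(x)$ dominated $\phi(y)$, then $x_1 \geq y_1$, $x_2 \geq y_2$, and $x_1 + x_2 \leq y_1 + y_2$ simultaneously, forcing $x = y$; hence the lifted set is its own Pareto front and passes through the preprocessing unchanged. A direct calculation with $u = x_1 - y_1$ and $v = x_2 - y_2$ gives
\[ \|\phi(x) - \phi(y)\|_1 \;=\; |u| + |v| + |u + v| \;=\; 2\,\max\bigl(|u|,\, |v|,\, |u + v|\bigr), \]
so the 3D Manhattan distance on the lifted set equals twice the polyhedral planar metric $d_H(x,y) := \max(|u|, |v|, |u + v|)$ on the pre-images. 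Consequently, \textsc{Coverage}/\textsc{Uniformity Pareto Pruning} on $\phi(P)$ coincides, up to the global factor two, with \textsc{Discrete $k$-Center}/\textsc{$p$-Dispersion} for $d_H$ on $P$.

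The reduction is finished by showing that these latter two problems are NP-hard for $d_H$ in $\R^2$. I would obtain this by revisiting the gadgets of the Megiddo--Supowit-style hardness proof behind \Cref{thm:kcenter_hard} together with its $p$-Dispersion counterpart, rescaling the threshold distances so that the same Yes/No gap is preserved when $\ell_1$ is replaced by $d_H$. Because $d_H$ is bi-Lipschitz equivalent to $\ell_1$ (with constants $\tfrac{1}{2}$ and $1$) and retains a polyhedral unit ball (a regular hexagon rather than a square), the variable- and clause-gadgets carry over with only local adjustments to spacing. The main difficulty, and the bulk of the technical work, is exactly this gadget-level adaptation; it is manageable because the identity above exhibits $d_H$ as $\ell_\infty$ in the three linear projections $(x_1, x_2, x_1 + x_2)$, so every distance inequality used in the planar proof becomes an inequality on the maximum of three affine expressions in $u$ and $v$, which can be verified gadget by gadget.
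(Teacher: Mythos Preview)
Your approach is valid but takes a genuinely different route from the paper. You lift into the hyperplane orthogonal to $(1,1,1)$, which turns $3$D Manhattan distance into (twice) the hexagonal planar metric $d_H(x,y)=\max(|u|,|v|,|u+v|)$; you then need to redo the gadget-level NP-hardness proofs for $d_H$. The paper instead lifts into a hyperplane orthogonal to $(1,\varepsilon,\varepsilon)$ via $f(x_1,x_2)=x_1(-\varepsilon,0,1)+x_2(-\varepsilon,1,0)$. This gives $\|x-y\|_1 \le \|f(x)-f(y)\|_1 \le (1+\varepsilon)\|x-y\|_1$, i.e.\ an embedding with distortion arbitrarily close to $1$. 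Since the planar hardness constructions use only integral points with a fixed threshold of $4$ (so the next possible distance is $5$), choosing $\varepsilon=\tfrac{1}{8}$ makes ``$\le 4$'' in the plane equivalent to ``$\le \tfrac{9}{2}$'' in $\R^3$, and the reduction is complete \emph{without touching any gadgets}.

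Two remarks on your write-up. First, the bi-Lipschitz equivalence with constants $\tfrac{1}{2}$ and $1$ is not by itself enough to transfer the threshold (a distance-$4$ pair and a distance-$5$ pair can both land in $[2.5,4]$ under $d_H$); you correctly flag that the real work is the gadget-by-gadget verification, but this is substantial and not just ``local adjustments to spacing.'' Second, your strategy is essentially what the paper is forced to do for \textsc{Directed Coverage} (\Cref{thm:dc_dim3_hard}): there, the $(1,\varepsilon,\varepsilon)$ trick does not control $\|\cdot\|_+$, so the paper embeds into the $(1,1,1)$-hyperplane, identifies the induced directed distance with the triangular-grid metric $d_\Gamma$ (your $d_H$), and then reproves \textsc{Discrete $k$-Center} NP-hard on that grid (\Cref{lem:trigrid_hard}). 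So your approach works for both theorems at once, at the cost of the gadget rework; the paper's $\varepsilon$-tilt buys a one-line proof for uniformity/coverage but does not extend to directed coverage.
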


The argument for \textsc{Directed Coverage Pareto Pruning} is more involved, as it is less clear how the directed distances change, when embedding a point set in $\R^2$ into $H$. Instead if we let $\Gamma$ be a triangular grid, writing $d_\Gamma$ for the distance metric induced by the grid, we show that there is an embedding $f$ of $\Gamma$ into $H$, such that $d_\Gamma(x, y) = ||f(x) - f(y)||_+$. The theorem then follows by adapting the proof for \textsc{Discrete $k$-Center} in two dimensions to a proof for \textsc{Discrete $k$-Center} on the triangular grid.

\begin{restatable}{theorem}{dcdimthreehard}\label{thm:dc_dim3_hard}
  \textsc{Directed Coverage Pareto Pruning} is NP-hard, even for three
  objectives.
\end{restatable}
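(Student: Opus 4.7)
The plan is to follow the template of \Cref{thm:dim3_hard}: embed an NP-hard geometric problem into a hyperplane of $\R^3$ on which Pareto-dominance is vacuous, and whose intrinsic metric coincides with $\|\cdot\|_+$. Because $\|\cdot\|_+$ is asymmetric, the choice of hyperplane and source problem requires more care than in the uniformity/coverage setting.

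I would fix $H:=\{x\in\R^3 : x_1+x_2+x_3=0\}$ and work with the triangular lattice $\Gamma:=H\cap\Z^3$, where two lattice points are adjacent iff they differ by $\pm 1$ in exactly two coordinates. Two geometric claims do most of the work. First, for all $x,y\in H$ we have $\|x-y\|_+=\tfrac12\|x-y\|_1$, because $\sum_i(x_i-y_i)=0$ forces the positive and negative coordinatewise differences to have equal total magnitude. Second, on $\Gamma$ this common value equals the graph distance $d_\Gamma(x,y)$: the upper bound is witnessed by paths that at each step cancel one positive and one negative coordinate difference, and the lower bound holds because each lattice step changes $\|\cdot\|_1$ by exactly $2$. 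As a byproduct, any two distinct points of $H$ have a coordinate where each strictly exceeds the other, so no point of $H$ Pareto-dominates any other.

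With the geometry in place, the remaining task is to prove NP-hardness of \textsc{Discrete $k$-Center} on $\Gamma$ under $d_\Gamma$ by adapting the proof of \Cref{thm:kcenter_hard}. That proof places a point set in $\R^2$ together with a threshold so that the existence of a $k$-center of the given radius encodes a Yes-instance of some NP-hard base problem. I would re-implement its gadgets using positions taken from $\Gamma$, scale up inter-gadget distances to absorb rounding, and transport the covering radius to $d_\Gamma$. The reduction to \textsc{Directed Coverage Pareto Pruning} is then immediate: given a grid instance $(\Gamma',k,r)$, take $A:=\Gamma'\subseteq H\subseteq\R^3$. The set $A$ is automatically Pareto-optimal, and for any slate $S\subseteq A$,
\[
I_{DC}(S,A)\;=\;\max_{a\in A}\min_{s\in S}\|a-s\|_+\;=\;\max_{a\in A}\min_{s\in S}d_\Gamma(a,s),
\]
so $I_{DC}(S,A)\le r$ iff $S$ is a $k$-center of radius $\le r$ in $\Gamma'$.

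The main obstacle is the grid adaptation of \Cref{thm:kcenter_hard}. Unlike the situation in \Cref{thm:dim3_hard}, no rotation-plus-rescaling maps Manhattan distance on $\Z^2$ to $d_\Gamma$: the hexagonal metric behaves like $L^1$ in some sectors and like $L^\infty$ in others. Consequently, gadget geometry must be re-verified rather than merely transported; in particular, I would need to check that each local gadget still enforces the intended local choice under $d_\Gamma$, and that no two unrelated gadgets accidentally lie within the target radius of each other along an unexpected direction of the lattice. Once this bookkeeping is done, the rest of the reduction is routine.
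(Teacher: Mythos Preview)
Your proposal is correct and follows essentially the same approach as the paper. The paper's embedding $f(i,j)=i(1,0,-1)+j(0,1,-1)$ lands precisely in your hyperplane $H=\{x_1+x_2+x_3=0\}$ and its image is exactly your lattice $H\cap\Z^3$; the paper then proves \textsc{Discrete $k$-Center} hard on the triangular grid by redrawing the gadgets of \Cref{thm:kcenter_hard}, just as you propose. One cosmetic difference: you derive $d_\Gamma=\|\cdot\|_+$ via the clean identity $\|x-y\|_+=\tfrac12\|x-y\|_1$ on $H$ together with a direct path-length argument, whereas the paper quotes the Luczak--Rosenfeld closed form for $d_\Gamma$ and checks agreement with $\|\cdot\|_+$ by cases; your route is a bit more self-contained.
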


\subsection{Ordinal Objectives} \label{sec:OO}
For the special case of ordinal objectives, the polynomial-time
algorithm for two objectives clearly still applies.
However, complementing this result with a hardness for a fixed number
of ordinal objectives turns out to be surprisingly difficult and
remains an open problem: The restriction of having to map bijectively
to $[|A|]$ is not strong enough to provide clear properties that an
algorithm can exploit, yet seems too restrictive to allow us to
nicely control the distances $||x - y||$ or $||x - y||_+$ within a
larger set of points.

Nevertheless, we show that all three problems are NP-hard for an
unbounded number of objectives.
For coverage and uniformity, the proof builds upon the hardness
proofs for \textsc{Discrete $k$-Center} and \textsc{$p$-dispersion}
in dimension two.
For directed coverage, we present a reduction from \textsc{Exact
Cover by 3-Sets}.

\begin{restatable}{proposition}{ucordinalhard}\label{thm:ord-dim3_hard}
  \textsc{Uniformity / Coverage /  Directed Coverage  Pareto Pruning}
  are NP-hard, even if all objectives are ordinal objectives.
\end{restatable}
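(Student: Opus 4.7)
\medskip
\noindent\textbf{Proof proposal.}
We sketch the three reductions separately.

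\textit{Uniformity and coverage.} The plan is to adapt the two-dimensional Manhattan hardness proofs for \textsc{Discrete $k$-Center} (\Cref{thm:kcenter_hard}) and \textsc{$p$-Dispersion} \citep{wangkuo1988}. In both, the hard instance is a finite point set $P\subseteq\mathbb{Z}^2$ with integer distances. First, I would rescale and pad $P$ by auxiliary alternatives so that the $x$- and $y$-coordinates of the combined set are both permutations of $[N]$, making the two coordinate orders into ordinal objectives whose $\ell_1$-distance equals the original geometric distance on $P$. The main obstacle is that 2D hardness gadgets typically do not form Pareto staircases, so the two coordinate-based objectives alone leave most alternatives Pareto dominated. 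To repair this, I would exploit the unbounded number of objectives and introduce, for each ordered pair $(p,q)$ where $p$ dominates $q$ in the two coordinate objectives, a dedicated ``correction'' bijection that swaps $p$ and $q$ relative to a common reference permutation, together with a large block of symmetric padding bijections so that the cumulative contribution of the correction and padding objectives to every pair's rank distance collapses to a pair-independent constant. Uniformity and coverage on the resulting ordinal instance would then be affine functions of the same measures on $P$, transferring the NP-hardness; a routine argument rules out selecting auxiliary alternatives in an optimal slate.

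\textit{Directed coverage.} The plan is a reduction from \textsc{Exact Cover by 3-Sets}: given a universe $U=\{u_1,\dots,u_{3q}\}$ and a collection $\mathcal{C}=\{C_1,\dots,C_m\}$ of three-element subsets of $U$, decide whether some $q$ sets in $\mathcal{C}$ exactly cover $U$. I would introduce one \emph{set-alternative} $s_C$ per $C\in\mathcal{C}$, one \emph{element-alternative} $e_u$ per $u\in U$, and a pool of filler alternatives, asking for a slate of size $k=q$ with target directed coverage $T$. For each $u\in U$, I would introduce an ordinal ``element objective'' $O_u$ placing every set-alternative $s_C$ with $u\in C$ above $e_u$ at the top ranks and every set-alternative $s_C$ with $u\notin C$ far below $e_u$; the remaining alternatives occupy the intermediate ranks in a fixed order. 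The contribution of $s_C$ to $\|e_u-s_C\|_+$ in $O_u$ is then $0$ if $u\in C$ and strictly greater than $T$ otherwise. Further balancing bijections handle three remaining needs: making every alternative Pareto optimal, keeping the contribution of $s_C$ to $\|e_u-s_C\|_+$ outside of $O_u$ bounded below $T$ whenever $u\in C$, and ensuring that element-alternatives and fillers are never competitive as slate members. The main obstacle is realizing all these balancing bijections simultaneously on $[|A|]$ while controlling the pair-dependent cross-contributions; I expect to address this by taking the filler pool large enough that its permutations have enough freedom to cancel unwanted terms. A slate of size $q$ then achieves $I_{DC}(S,A)\leq T$ if and only if its set-alternatives exactly cover $U$.
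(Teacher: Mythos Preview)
Your directed-coverage plan is essentially the paper's: a reduction from \textsc{X3C} with set- and element-alternatives and one ordinal objective per element. The one piece you gloss over is how the \emph{unselected} set-alternatives get covered. With $k=q$ and a slate consisting only of set-alternatives, you need $\|s_{C'}-s_C\|_+\le T$ for every pair $C,C'$, which your objectives do not obviously guarantee. The paper handles this by adding one extra distinguished alternative $x_*$ and one extra objective $f_*$: $x_*$ sits just below every $s_C$ in each element-objective (so $\|f(s_C)-f(x_*)\|_+$ is small), must itself be selected (because a block of dummies $D_*$ separates it from everything else in $f_*$), and the slate size becomes $q+1$. This is a small but non-routine fix you should incorporate.

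For uniformity and coverage your overall route---lift the two-dimensional Manhattan hardness into ordinal objectives---matches the paper, but your mechanism for restoring Pareto optimality has a real gap. You propose one correction bijection per dominating pair together with ``symmetric padding'' so that the cumulative extra contribution to every pairwise rank distance is a \emph{pair-independent constant}. Achieving an exactly constant contribution from polynomially many bijections onto $[N]$ is not something you can just assert; for a single bijection the pairwise distances range from $1$ to $N-1$, and there is no obvious polynomial family of bijections whose summed $\ell_1$-distances equalize across all pairs (the symmetric group does this, but that is exponential). The paper sidesteps this entirely: it takes $m\ge 2|A_Q|$ \emph{copies} of each of the two coordinate objectives (after first spreading the construction so no two points share a coordinate), and for Pareto optimality it adds only \emph{two} further objectives $f_\ell,f_r$ that rank the original points in mutually reversed order at the top. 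The contribution of $f_\ell,f_r$ to any pair distance is then merely \emph{bounded} by $2|A_Q|$, and the $m$-fold amplification of the coordinate objectives absorbs this error so that the threshold test $||f(a)-f(b)||\le m(\Delta'+1)$ is equivalent to the original $||a-b||\le\Delta$. A special alternative $x_*$ dominates all the dummy fillers (so they vanish from the Pareto set) and is forced into the slate by being far from everything, which is cleaner than arguing separately that auxiliaries are never selected. Replacing your per-pair corrections with this amplify-plus-two-reverse-orders trick closes the gap.
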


\begin{table*}[t!]
  \resizebox{\textwidth}{!}{%
    \begin{tabular}{l ccc@{} c ccc@{} c ccc@{} c ccc@{} c ccc@{}}
      \toprule
      \textbf{Method} & \multicolumn{3}{c}{\textbf{Uniformity} $(\uparrow)$}
      & & \multicolumn{3}{c}{\textbf{Coverage} $(\downarrow)$}
      & & \multicolumn{3}{c}{\textbf{Directed Coverage} $(\downarrow)$}
      & & \multicolumn{3}{c}{\textbf{Hypervolume} $(\uparrow)$}
      & & \multicolumn{3}{c}{\textbf{Avg. Sum Objective} $(\uparrow)$} \\
      \cmidrule{2-4} \cmidrule{6-8} \cmidrule{10-12} \cmidrule{14-16}
      \cmidrule{18-20}
      & $k=5$\% & $k=10$\% & $k=25$\%
      & & $k=5$\% & $k=10$\% & $k=25$\%
      & & $k=5$\% & $k=10$\% & $k=25$\%
      & & $k=5$\% & $k=10$\% & $k=25$\%
      & & $k=5$\% & $k=10$\% & $k=25$\% \\
      \midrule
      \multicolumn{19}{l}{\textbf{\textit{Dataset ZDT}}} \\
      \texttt{Uniformity}     & 100.0\% & 100.0\% & 100.0\% &&
      121.8\% & 108.3\% & 117.4\% && 195.3\% & 187.1\% & 150.4\% &&
      90.8\% & 97.9\% & 99.9\% && 93.5\% & 96.9\% & 98.1\% \\
      \texttt{Coverage}       & 81.0\%  & 83.0\%  & 74.7\%  &&
      100.0\% & 100.0\% & 100.0\% && 201.0\% & 204.1\% & 182.3\% &&
      97.3\% & 98.9\% & 99.9\% && 94.3\% & 97.0\% & 98.4\% \\
      \texttt{Dir. Coverage}  & 69.3\%  & 68.4\%  & 41.3\%  &&
      165.7\% & 212.9\% & 364.6\% && 100.0\% & 100.0\% & 100.0\% &&
      99.3\% & 99.8\% & 99.9\% && 99.1\% & 99.5\% & 99.6\% \\

      \midrule
      \multicolumn{19}{l}{\textbf{\textit{Dataset DTLZ}}} \\
      \texttt{Uniformity}     & 100.0\% & 100.0\% & 100.0\% &&
      131.2\% & 125.1\% & 123.2\% && 200.0\% & 158.1\% & 158.5\% &&
      92.6\% & 98.6\% & 99.2\% && 98.1\% & 98.0\% & 99.5\% \\
      \texttt{Coverage}       & 70.3\%  & 72.8\%  & 66.7\%  &&
      100.0\% & 100.0\% & 100.0\% && 178.2\% & 188.7\% & 185.6\% &&
      99.9\% & 97.9\% & 95.9\% && 94.8\% & 97.2\% & 97.1\% \\
      \texttt{Dir. Coverage}  & 72.2\%  & 58.9\%  & 59.1\%  &&
      155.8\% & 188.7\% & 246.6\% && 100.0\% & 100.0\% & 100.0\% &&
      96.4\% & 99.6\% & 99.1\% && 98.5\% & 97.9\% & 97.1\% \\

      \midrule
      \multicolumn{19}{l}{\textbf{\textit{Dataset PGMORL}}} \\
      \texttt{Uniformity}     & 100.0\% & 100.0\% & 100.0\% &&
      123.0\% & 132.0\% & 144.8\% && 187.5\% & 234.6\% & 254.7\% &&
      94.8\% & 97.9\% & 99.5\% && 96.0\% & 97.2\% & 98.7\% \\
      \texttt{Coverage}       & 79.9\%  & 60.8\%  & 56.4\%  &&
      100.0\% & 100.0\% & 100.0\% && 169.7\% & 182.3\% & 227.1\% &&
      98.3\% & 98.7\% & 99.4\% && 96.9\% & 97.6\% & 98.7\% \\
      \texttt{Dir. Coverage}  & 51.1\%  & 38.5\%  & 45.9\%  &&
      280.4\% & 347.4\% & 482.3\% && 100.0\% & 100.0\% & 100.0\% &&
      100.0\% & 100.0\% & 100.0\% && 100.0\% & 100.0\% & 100.0\% \\

      \bottomrule
    \end{tabular}
  }

  \caption{Comparison of three methods for Pareto pruning. We report
    average values of five measures, each normalized by the best
    solution at the instance level. $(\uparrow)$ indicates that higher
    values are better, and $(\downarrow)$ indicates that lower values
  are better.} \label{table:main_res}
\end{table*}

\subsection{Approval Objectives}\label{sec:AO}

For approval objectives, our problems become easier from a
computational perspective. We establish polynomial-time solvability
for every fixed number of objectives $d \in \N$: For this, we call
two alternatives \emph{equivalent} if they are evaluated the same
under every objective.
Observe that there can be at most $2^d$ pairwise non-equivalent alternatives.
As it is never optimal for any of our measures to pick two equivalent
alternatives, it suffices to brute force over all at most
$\binom{2^d}{k} \leq 2^{2^d}$ size-$k$ subsets of pairwise
non-equivalent alternatives:

\begin{restatable}{proposition}{lvaluedeasy}\label{prop:lvalued_easy}
  For any fixed $d\in \mathbb{N}$, \textsc{Uniformity / Coverage /
  Directed Coverage Pareto Pruning} are solvable in polynomial time
  for $d$ approval objectives.\footnote{The result extends to all
  $l$-valued objectives for fixed $l\in \mathbb{N}$.}
\end{restatable}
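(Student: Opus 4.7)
The plan is to exploit the fact that with $d$ approval objectives, each alternative corresponds to a vector in $\{0,1\}^d$, so there are at most $2^d$ distinct evaluation profiles. I would call two alternatives \emph{equivalent} if they coincide on every objective, and as a preprocessing step, group $A$ into at most $2^d$ equivalence classes in time $O(|A| \cdot d)$, letting $A' \subseteq A$ be a set of representatives, one per class, so that $|A'| \le 2^d$.

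The key observation to establish is that, whenever $k \le |A'|$, there exists an optimal slate $S \subseteq A'$. For \textsc{Uniformity Pareto Pruning}, any slate containing two equivalent alternatives has value $0$, whereas any slate of pairwise non-equivalent alternatives has value at least $1$, since two distinct points in $\{0,1\}^d$ differ in at least one coordinate. For \textsc{Coverage Pareto Pruning} and \textsc{Directed Coverage Pareto Pruning}, if a slate $S$ contains two equivalent alternatives $x$ and $y$, then for every $a \in A$ the inner distance $\min_{s \in S} \|a - s\|$ (resp.\ $\min_{s \in S} \|a - s\|_+$) is unchanged when $y$ is deleted, since $\|a - x\| = \|a - y\|$ (resp.\ $\|a - x\|_+ = \|a - y\|_+$); replacing $y$ by any $z \in A \setminus S$ from a new equivalence class can then only weakly decrease the inner minimum for every $a$, and hence the outer maximum as well.

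In the edge case $k > |A'|$, the optimum is trivial and needs no search: for uniformity it equals $0$ (duplicates are forced), while for coverage and directed coverage, including one representative from each equivalence class and then filling the remaining $k - |A'|$ slots arbitrarily ensures that every $a \in A$ lies at distance $0$ from some selected alternative, yielding optimum $0$.

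It then remains to handle the case $k \le |A'| \le 2^d$. I would enumerate all $\binom{|A'|}{k} \le 2^{2^d}$ size-$k$ subsets of $A'$; for each candidate slate, the relevant measure can be evaluated in time $O(|A| \cdot d \cdot k)$ via a single pass through $A$, and the best candidate is returned. This yields a total running time of $2^{2^d} \cdot \mathrm{poly}(|A|, d, k)$, which is polynomial for every fixed $d$. The only mildly subtle step is the replacement argument for coverage and directed coverage, but it reduces to the monotonicity of $\min$ under enlarging the inner set. The extension to $l$-valued objectives for any fixed $l$ is immediate, with $l^d$ replacing $2^d$ throughout.
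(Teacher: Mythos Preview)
The proposal is correct and takes essentially the same approach as the paper, which also groups alternatives into at most $2^d$ equivalence classes and brute-forces over the at most $\binom{2^d}{k}\le 2^{2^d}$ size-$k$ subsets of representatives. Your treatment is more detailed---explicitly justifying why selecting two equivalent alternatives is never optimal and separately handling the edge case $k>|A'|$---whereas the paper states these points more tersely.
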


We complement this result with an NP-hardness result for an unbounded
number of approval objectives.
To show this result, we draw inspiration from the classic
(non-metric) hardness proofs for \textsc{Discrete $k$-center} and
\textsc{$p$-dispersion} on graphs \citep{hakimi_optimum_1964,
erkut_discrete_1990}.
The idea is that given a graph $G = (V, E)$, we construct an
alternative $a_v \in A$ for every $v \in V$ and add objectives such
that the distance between $a_v$ and $a_w$ is small if $\{a_v, a_w\}
\in E$ and large otherwise. Hardness is then a straightforward
reduction from \textsc{Independent set} for uniformity, and
\textsc{Dominating set} for coverage and directed coverage.

\begin{restatable}{proposition}{ucapprovalhard}\label{prop:uc_approval_highd_hard}
  \textsc{Uniformity / Coverage / Directed Coverage Pareto Pruning}
  are NP-hard, even if all objectives are approval objectives.
\end{restatable}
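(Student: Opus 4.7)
The plan is to reduce from \textsc{Independent Set} (for uniformity) and from \textsc{Dominating Set} (for coverage and directed coverage). Given a graph $G = (V, E)$ of maximum degree $\Delta$, I build an instance of Pareto pruning with alternatives $\{a_v : v \in V\}$ and two families of approval objectives, chosen so that pairwise distances in $A$ take only two values, depending solely on adjacency in $G$.

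The construction is as follows. For each edge $e \in E$, introduce one ``edge'' objective $h_e$ that approves $a_v$ iff $v \in e$. For each $v \in V$, additionally introduce $\Delta - \deg(v) + 1$ ``identity'' objectives, each approving only $a_v$. The identity objectives immediately imply that every pair $a_u, a_v$ is Pareto incomparable, so the set of Pareto optimal alternatives is exactly $\{a_v : v \in V\}$. A direct count over the two families then yields, for all $u\neq v$,
\[
  ||a_u - a_v|| \;=\; 2\Delta + 2 - 2\cdot\bbone[\{u,v\}\in E]
  \quad\text{and}\quad
  ||a_u - a_v||_+ \;=\; \Delta + 1 - \bbone[\{u,v\}\in E],
\]
so each of the two distances takes a small value when $u,v$ are adjacent in $G$ and a large value when they are not.

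With these distances in hand the three reductions are immediate. For uniformity, a size-$k$ slate $\{a_v : v \in S\}$ attains $I_U = 2\Delta + 2$ iff every pair in $S$ is non-adjacent, i.e., iff $S$ is an independent set of size $k$ in $G$. For coverage, the slate attains $I_C \le 2\Delta$ iff every $w \notin S$ has a neighbor in $S$, i.e., iff $S$ is a dominating set; the identical reasoning for directed coverage gives $I_{DC} \le \Delta$ iff $S$ is a dominating set. All three reductions run in polynomial time, and since \textsc{Independent Set} and \textsc{Dominating Set} are NP-hard, so are the corresponding pruning problems.

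The main technical obstacle is obtaining pairwise distances that do not depend on individual vertex degrees. This is what the padding is for: the term $\Delta - \deg(v) + 1$ both equalizes the total number of $1$-entries across all $a_v$ (to exactly $\Delta + 1$) and, via the ``$+1$'', guarantees at least one identity objective per vertex even when $\deg(v) = \Delta$, which in turn secures Pareto optimality. Once the distances collapse to two clean values, the correspondences with independent and dominating sets are completely routine.
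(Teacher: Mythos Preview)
Your proof is correct and follows essentially the same approach as the paper: both reduce uniformity from \textsc{Independent Set} and (directed) coverage from \textsc{Dominating Set} by introducing one approval objective per edge (approving its two endpoints) together with singleton ``identity'' objectives that equalize pairwise distances and guarantee Pareto optimality. The only difference is in how the padding is organized: the paper attaches $|V|-2$ singleton objectives to every edge (one per non-incident vertex), obtaining distances $2|E|$ versus $2|E|-2$, whereas you attach $\Delta-\deg(v)+1$ singleton objectives to every vertex, obtaining distances $2\Delta+2$ versus $2\Delta$; your variant uses fewer objectives and makes the ``all alternatives have the same number of $1$-entries'' property explicit, but the reductions are otherwise identical.
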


\section{Experiments} \label{sec:exp}

We conduct an experimental evaluation of the slates returned by the
three solution methods we consider. In this section, we use the terms
uniformity, coverage, and directed coverage to refer both to the
underlying quality measures ($I_U$, $I_C$, and $I_{DC}$,
respectively), which we use to evaluate slates, and the respective
solution methods that optimize for one of them. To distinguish, we
use typewriter font when referring to the solution method, i.e., the
slate obtained by solving the corresponding optimization problem
(e.g., we write \texttt{Uniformity} to refer to  \textsc{Uniformity
Pareto Pruning}).

\paragraph{Setup}
We consider three different datasets. Datasets ZDT \citep{zdt_2000} containing six
instances with two objectives and DTLZ \citep{dtlz_2002} containing seven instances with three objectives
 are widely used for the evaluation of multiobjective
evolutionary algorithms.\footnote{The Pareto fronts of these problems
are taken from the pymoo \citep{pymoo_2020} library.}
For a more realistic example, we consider the dataset PGMORL
containing six instances, where the alternatives correspond to
simulated agents performing a simple task evaluated under two
objectives. \citet{pgmorl_2020} created these benchmark instances to
evaluate their multiobjective evolutionary algorithm
PGMORL.\footnote{We use the Pareto fronts calculated by \citet{pgmorl_2020}.}
We compute all slates via integer linear programming (ILP)
formulations, solved using Gurobi.
For feasibility reasons, for the six instances from these datasets in
which the Pareto front contains more than $200$ alternatives, we
delete all but $200$ randomly sampled alternatives from the instance.
We consider three different values of $k$, i.e., $k=5\%\cdot |A|$,
$k=10\%\cdot |A|$, and $k=25\%\cdot |A|$.

\paragraph{Results}
We evaluate each computed slate $S$ using five quality measures:
uniformity $I_U$, coverage $I_C$, directed coverage $I_{DC}$,
hypervolume,\footnote{For a set of alternatives $A$, and a reference
  point $r$, the hypervolume of $S \subseteq A$ is the volume of $C =
  \{x \in \R^d \mid x \text{ dominates } r \text{ and } x \text{ is
  dominated by some } a \in S\}$.
  Hypervolume is seen to capture both efficiency and diversity of
alternatives.} and the average summed quality of the selected
alternatives, i.e., $\nicefrac{1}{k} \sum_{a \in S} \sum_{j=1}^d
a_j$. The last two measures capture different notions of slate
efficiency. To enable a meaningful comparison across solution methods
and aggregation across instances, we normalize all scores within each
instance by dividing by the score of the best-performing slate under
the respective measure. For example, when evaluating uniformity
$I_U$, we divide the uniformity score of each slate by the maximum
uniformity achieved across all methods, which is by definition
\texttt{Uniformity}, for that instance. \Cref{table:main_res} reports
the normalized values, averaged over all instances in each dataset.
For measures marked with $(\uparrow)$, higher values indicate better
performance; for those marked with $(\downarrow)$, lower values are~preferred.

\paragraph{Analysis} We discuss some patterns observed in
\Cref{table:main_res}. While the choice of $k$ does influence
methods' performance, no consistent influence of changing $k$ is
visible. Therefore, we focus on observations that hold across all
three considered values of $k$.
First, we observe substantial relative differences between the three
solution methods in terms of their performance under the uniformity,
coverage, and directed coverage measures. This underscores that the
choice of method can have significant practical implications. We
observe that \texttt{Coverage} consistently outperforms
\texttt{Directed Coverage} with respect to uniformity, and
\texttt{Uniformity} outperforms \texttt{Directed Coverage} with
respect to coverage. This suggests that, despite differences in their
formal definitions, \texttt{Uniformity} and \texttt{Coverage} exhibit
more similar behavior to each other than either does to
\texttt{Directed Coverage}. In contrast, when evaluating performance
under the directed coverage measure, no consistent trend emerges as
to whether \texttt{Uniformity} or \texttt{Coverage} performs better.
However, both return slates that, from the perspective of directed
coverage, are typically more than $50\%$ worse than those produced by
the dedicated \texttt{Directed Coverage} method. This illustrates
that if one cares about the directed coverage measure, using one of
the two more established approaches is insufficient.

When evaluating performance with respect to hypervolume and average
summed objective value, which are more efficiency-focused, the
differences between the solution methods are less pronounced. On  ZDT
and PGMORL, \texttt{Directed Coverage} consistently outperforms
\texttt{Coverage}, which in turn outperforms \texttt{Uniformity}. For
DTLZ, which method performs better depends on the choice of~$k$.
While the differences are smaller than for the other measures, these
results still provide evidence that \texttt{Directed}
\texttt{Coverage} tends to select more efficient solutions. This is
also intuitive: by design, \texttt{Directed} \texttt{Coverage} avoids
selecting alternatives that are only marginally better in some
objectives while being worse in all others in comparison to other
alternatives. At the instance level, we further observe that, unlike
the other two methods, \texttt{Directed} \texttt{Coverage} tends to
avoid selecting large numbers of alternatives from regions populated
by less-efficient alternatives; see \Cref{fig:comp} in the appendix
for some examples.

In \Cref{app:experiments} and our supplementary material available on github, we include further plots that support and
extend our findings. For example, on the instance level, we observe
that for all methods, increasing $k$  yields substantial improvements
in the quality measures when $k$ is small. However, as $k$ grows, the
marginal gains diminish considerably. We also find that the
performance of a solution method with respect to a measure it does
not explicitly optimize can vary significantly with small changes~in~$k$.

\section{Discussion} \label{sec:disc}
We presented a systematic study of quality measures for  Pareto
pruning, including the first axiomatic analysis and a comprehensive
complexity investigation. We hope that our work enables more
principled arguments for and against different measures in
multiobjective optimization and contributes to a clearer
understanding of their tractability. Motivated by the shortcomings
revealed in our axiomatic analysis, we proposed the new measure of
directed coverage, which performs competitively or even favorably in
our experiments.

There are several promising directions for future work. First, it
would be valuable to complement our axiomatic analysis with
characterization and impossibility results, and design axioms tailored to ordinal or approval objectives (in particular, $\varepsilon$-split proofness and extremism monotonicity do not translate to these settings).
Second, our algorithmic analysis leaves open whether Pareto pruning remains hard for ordinal objectives with a fixed number of objectives.
Third, extending our analysis to further quality measures would be worthwhile.
Lastly, it would be intriguing to further explore the connection
between Pareto pruning and previous work in social choice,
particularly the paradigms of proportional representation and
diversity in multiwinner voting \citep{faliszewski2017multiwinner}.
While in our work, we interpreted solutions as candidates and
objectives as voters, it would also be fruitful to explore a social
choice modeling in which solutions serve as both candidates and
voters, ranking other solutions by similarity.
This would embed the problem in recent work on centroid clustering in
the social choice literature
\citep{DBLP:conf/icalp/Micha020,DBLP:conf/nips/Kellerhals024}. It
would be interesting to analyze whether existing Pareto pruning
methods satisfy solution concepts from this setting, and conversely,
whether algorithms from that literature can offer meaningful
guarantees or performances for Pareto pruning.

\newpage
\clearpage

\setcounter{secnumdepth}{2}
\renewcommand{\thesubsection}{\thesection.\arabic{subsection}}

\appendix
\newpage{}
\onecolumn

\section{Background on Multiobjective Optimization} \label{sec:RW}

In the multiobjective optimization literature, various paradigms have
been developed to incorporate the DM into the solution process (see
  the surveys by
\citet{marler2004survey,DBLP:journals/candie/PetchrompoCBWP22,DBLP:conf/dagstuhl/2008moo,ehrgott2005multicriteria}).
We refer to the classification by
\citet{DBLP:journals/candie/PetchrompoCBWP22}, which distinguishes
between four approaches:: \emph{a priori}, \emph{interactive},
\emph{a posteriori}, and \emph{pruning}. A priori methods assume that
the DM’s complete preferences over objectives are available before
the optimization and collapse the multiobjective problem into a
(weighted) single-objective one \citep{marler2010weighted}.
Interactive methods iteratively improve the solution by alternating
between eliciting feedback from the DM on specific solutions and
updating the solution accordingly
\citep{DBLP:journals/access/XinCCIHL18}. A posteriori methods aim to
generate a large set of Pareto optimal solutions for the DM to choose
from \citep{marler2004survey}, whereas pruning methods seek to reduce
this set by selecting a smaller, representative subset to avoid
overwhelming the DM \citep{DBLP:journals/ress/TaboadaBCW07}.

The approach discussed in this paper belongs to the category of
\emph{post-optimality} pruning methods, where pruning occurs after
computing the Pareto front; in contrast,  \emph{intra-optimality}
methods integrate pruning directly into the optimization algorithm
\citep{DBLP:conf/aaai/DemirovicS20,DBLP:conf/emo/EmmerichBN05}.
In the absence of additional information of the DM's preferences,
post-optimality pruning methods typically either aim to put together
a set of solutions with a good total performance
\citep{DBLP:conf/ppsn/BrankeDDO04} or to select a set of solutions
that reflect the entirety of the Pareto front  \citep{taboada2007data}.
A common strategy for achieving these goals, also pursued in this
paper, is to maximize a predefined quality or diversity measure.
Alternative approaches include applying clustering algorithms to
group similar solutions
\citep{DBLP:journals/eor/ZioB11,taboada2007data} and manually
selecting well-distributed solutions from the Pareto front
\citep{DBLP:journals/ress/WangLFLC20}.

\section{Axiomatic Analysis}\label{app:axan}

\subsection{Monotonicity}

\Monotonicity*

\begin{proposition}\label{prop:dc_monotonic}
  Directed Coverage satisfies monotonicity.
\end{proposition}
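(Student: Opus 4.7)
The plan is to show that given $S$ optimal in $A$ with $x\in S$ and $y$ dominating $x$, we can exhibit some optimal slate of $A':=(A\setminus\{x\})\cup\{y\}$ that contains $y$. I fix any optimal $T^*\in\mathcal{S}(I_{DC},A',k)$ with value $v^*:=I_{DC}(T^*,A')$. If $y\in T^*$, we are immediately done. The interesting case is $y\notin T^*$, where I will argue that $S':=(S\setminus\{x\})\cup\{y\}$ is itself optimal in $A'$.

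The two workhorses are elementary termwise dominance inequalities. Because $y_i\ge x_i$ for all $i$, applying monotonicity of $\max(\cdot,0)$ and summing gives $\|a-y\|_+\le\|a-x\|_+$ (so $y$ covers any point at least as well as $x$ does) and $\|y-a\|_+\ge\|x-a\|_+$ (so $y$ is at least as hard to cover as $x$). These are the only facts about dominance I will use.

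The rest is a chain of three inequalities. First, since $y\notin T^*$ forces $T^*\subseteq A\setminus\{x\}\subseteq A$, I view $T^*$ as a slate in $A$ and show $I_{DC}(T^*,A)\le v^*$: the only point in $A$ but not $A'$ is $x$, and by the second dominance inequality the best cover of $x$ by $T^*$ is at most the best cover of $y$ by $T^*$, which is $\le v^*$ since $y\in A'$. Second, optimality of $S$ in $A$ gives $I_{DC}(S,A)\le I_{DC}(T^*,A)\le v^*$. Third, I show $I_{DC}(S',A')\le I_{DC}(S,A)$: for each $a\in A\setminus\{x\}$ the first dominance inequality implies that replacing $x$ by $y$ in the slate can only decrease the min directed distance to $a$, while the new point $a=y$ is covered at zero cost by $y\in S'$. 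Chaining the three gives $I_{DC}(S',A')\le v^*$, so $S'$ is optimal in $A'$ and contains $y$.

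The conceptual subtlety I want to flag is that $S':=(S\setminus\{x\})\cup\{y\}$ is \emph{not} unconditionally optimal in $A'$: if every optimal slate in $A'$ must contain $y$ because $y$ is strictly useful for coverage, then $I_{DC}(S,A)$ can be strictly larger than $v^*$, and this gap can propagate to $S'$. Tiny instances already exhibit this. This is precisely why I fix an optimal $T^*$ up front and split on whether $y\in T^*$; the axiom only requires \emph{some} optimal slate containing $y$, and the direct swap $S\mapsto S'$ only has to work in the complementary case, which is exactly where the bound $I_{DC}(T^*,A)\le v^*$ saves us. Modulo this observation, all inequalities are routine termwise comparisons.
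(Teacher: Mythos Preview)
Your proof is correct and rests on exactly the same two dominance inequalities and the same three-step chain as the paper's argument. The only difference is framing: the paper applies the chain to an \emph{arbitrary} slate $S'\subseteq A\setminus\{x\}$ (i.e., any slate of $A'$ omitting $y$), obtaining
\[
I_{DC}(S',A') \;\ge\; I_{DC}(S',A) \;\ge\; I_{DC}(S,A) \;\ge\; I_{DC}\bigl((S\setminus\{x\})\cup\{y\},\,A'\bigr),
\]
and concludes directly that the swapped slate weakly beats every $y$-free slate, hence some optimal slate contains $y$. You instead fix one optimal $T^\ast$ and split on $y\in T^\ast$; in the $y\notin T^\ast$ branch you run the identical chain with $S'=T^\ast$.

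Your ``conceptual subtlety'' paragraph is in fact correct: $(S\setminus\{x\})\cup\{y\}$ need not be optimal in $A'$ (one can build small $3$-objective examples where a different partner for $y$ does strictly better), so a naive ``swap and declare optimal'' argument would be wrong. Your case split handles this cleanly. The paper's phrasing sidesteps the issue entirely by never asserting that the swapped slate is optimal---only that it dominates all $y$-free slates, which already forces the optimum to be attained among $y$-containing slates. So both routes are valid; the paper's is slightly more economical, while yours makes the potential pitfall explicit.
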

\begin{proof}
  Let $A$ be the set of alternatives, $x \in A$, $y \in \R^d$ and $x$
  dominated by  $y$.
  For any $z \in A$, we get $||z - y||_+ \leq ||z - x||_+$ and $||x -
  z ||_+ \leq ||y - z||_+$, since $z_i - y_i \leq z_i - x_i$ and $x_i
  - z_i \leq y_i - z_i$ for all $i \in [d]$.

  Let $S \in \mathcal{S}(I_{DC}, A, k)$ with $x \in S$, $A_y \coloneq
  A \setminus \{x\} \cup \{y\}$ and $S' \subseteq A \setminus \{x\}$.
  We get $I_{DC}(S', A_y) \geq I_{DC}(S', A) \geq I_{DC}(S, A) \geq I_{DC}(S
  \setminus\{x\}\cup\{y\}, A_y)$.
  Therefore any slate not containing $y$ is at most as good as $S
  \setminus\{x\}\cup\{y\}$ in $A_y$, so any $S_y \in
  \mathcal{S}(I_{DC}, A_y, k)$ must contain $y$.
\end{proof}

\begin{proposition}\label{prop:unic_not_monotonic}
  Uniformity and Coverage do not satisfy monotonicity.
\end{proposition}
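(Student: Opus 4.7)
The plan is to produce explicit small-scale counterexamples in $\mathbb{R}^2$ for both measures, based on the following common idea. I start from a symmetric instance that has two tied optimal slates, one containing the to-be-improved alternative $x$ and one not. I then improve $x$ to $y$ by a tiny gain in one coordinate and a much larger gain in the other, so that $y$ becomes an ``almost-clone'' of one of the extreme Pareto points in the $\ell_1$ sense. This hurts $y$ both for uniformity (its nearest-neighbor distance within any slate drops sharply) and for coverage (it now covers the opposite extreme poorly), while the $y$-free optimal slate is completely unaffected.

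Concretely I would use $A = \{a_1,a_2,a_3,a_4\} = \{(0,100),(40,60),(60,40),(100,0)\}$, which consists of pairwise incomparable, hence Pareto optimal, points, and replace $x := a_2 = (40,60)$ by $y := (40.5,99)$. Then $y$ dominates $x$ and is pairwise incomparable with $a_1, a_3, a_4$, so $A' := (A\setminus\{x\})\cup\{y\}$ is again a Pareto-optimal set to which the axiom applies.

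For coverage I would take $k=1$. A routine computation shows $\mathcal{S}(I_C,A,1) = \{\{a_2\},\{a_3\}\}$ with common coverage value $120$, so $x \in \{a_2\} \in \mathcal{S}(I_C,A,1)$. In $A'$, however, the singleton $\{y\}$ has coverage $\|y-a_4\| = 158.5$, while $\{a_3\}$ still has coverage $120$ and the two extreme singletons have coverage $\geq 200$. Hence $\mathcal{S}(I_C,A',1) = \{\{a_3\}\}$ and no optimal slate contains $y$, witnessing the violation. For uniformity I would take $k=3$. Analogous computations give $\mathcal{S}(I_U,A,3) = \{\{a_1,a_2,a_4\},\{a_1,a_3,a_4\}\}$ with common value $80$, so $x \in \{a_1,a_2,a_4\} \in \mathcal{S}(I_U,A,3)$. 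In $A'$, every $y$-containing triple has uniformity bounded above by either $\|y-a_1\| = 41.5$ (if $a_1$ is also in the slate) or $\|y-a_3\| = 78.5$ (otherwise, i.e., for the triple $\{y,a_3,a_4\}$), hence strictly below $80$, while $\{a_1,a_3,a_4\}$ still has uniformity $80$. So $\mathcal{S}(I_U,A',3) = \{\{a_1,a_3,a_4\}\}$ and again $y$ is not selected.

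The only real obstacle is calibrating the perturbation from $x$ to $y$ so that $y$ strictly dominates $x$, remains incomparable with all other alternatives, and simultaneously produces the strict inequalities needed in both the coverage and the uniformity argument. This is resolved by choosing one coordinate of $y$ very close to the corresponding coordinate of the adjacent extreme alternative (here $y_2 = 99$ against $a_1 = (0,100)$) while making the gain in the other coordinate (here $y_1 = 40.5$ against $x_1 = 40$) just large enough to dominate $x$; any sufficiently small perturbation of these numbers would work equally well.
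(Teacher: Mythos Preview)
Your proof is correct. Both you and the paper exhibit explicit small counterexamples exploiting the same phenomenon: improving an alternative in one coordinate can push it closer in $\ell_1$ to another alternative, simultaneously reducing its value to uniformity and making it easier to cover. The main difference is packaging: the paper gives two separate $\mathbb{R}^3$ instances (three points with $k=2$ for uniformity, four points with $k=2$ for coverage), while you use a single four-point $\mathbb{R}^2$ instance that handles both measures at once (with $k=3$ and $k=1$, respectively), starting from a symmetric configuration with tied optima and breaking the tie by moving $x$ to an almost-clone of an extreme point. Your route is slightly more unified and stays in the lowest possible dimension; the paper's uniformity example is slightly smaller in the number of alternatives. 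Neither buys anything essential over the other.
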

\begin{proof}
  For uniformity, let $A = \{(2, 0, 0), (0, 2, 0), (0, -2, 1)\}$ and $k = 2$.
  By checking all slates with $2$ elements, one can verify that an
  optimal solution to the uniformity problem is $S = \{(2, 0, 0), (0, -2, 1)\}$.
  Replacing  $(0, -2, 1)$ by $(0, 0, 1)$ we see that the only optimal
  slate for $A' = \{(2, 0, 0), (0, 2, 0), (0, 0, 1)\}$ is $S = \{(0,
  2, 0), (2, 0, 0)\}$.

  For coverage consider $A = \{(3, -10, 0), (1, 3, 0), (2, 2, 1), (0,
  0, 3)\}$. The optimal slate for coverage with $k = 2$ is given by
  $S = \{(3, -10, 0), (2, 2, 1)\}$ for a coverage value of $6$.
  Now replacing  $(3, -10, 0)$ by $(3, 1, 0)$, we get $A' = \{(3, 1,
  0), (2, 2, 1), (1, 3, 0),  (0, 0, 3)\}$. The optimal solution for
  coverage is $S' = \{(2, 2, 1), (0, 0, 3)\}$ with a coverage value
  of $I_C(S', A') = 3$.
\end{proof}

\subsection{$\veps$-Split Proofness}\label{ax:epssplit}
\EpsSplitProof*

\begin{proposition}\label{prop:uni_epssplit}
  Uniformity is $\veps$-split proof.
\end{proposition}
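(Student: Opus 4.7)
The plan is to choose $\varepsilon$ small relative to two intrinsic quantities of $(A,k)$ and then argue that any optimal slate $S_\varepsilon$ of the perturbed instance $A' := (A \setminus \{x\}) \cup \{y_\varepsilon, z_\varepsilon\}$ contains at most one of the two split copies, so that ``swapping back'' yields an optimal slate of the original instance.

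First, I would fix the two intrinsic quantities. Let $D := \min_{a \neq b \in A} ||a - b|| > 0$, which is well-defined since $A$ is finite (the edge cases $k < 2$ or $|A| < k$ are trivial). Let $\mathcal{V} := \{I_U(S) : S \subseteq A, |S| = k\}$; this is a finite set, and if $|\mathcal{V}| = 1$ then every size-$k$ slate of $A$ is optimal and the axiom holds trivially. Otherwise, let $\delta > 0$ be the minimum gap between two distinct elements of $\mathcal{V}$. I would then set $\varepsilon < \min(D, \delta)/3$; note that this choice depends only on $A$ and $k$, as required.

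The first substantive step is to rule out that $S_\varepsilon$ contains both $y_\varepsilon$ and $z_\varepsilon$: such a slate would satisfy $I_U(S_\varepsilon) \leq ||y_\varepsilon - z_\varepsilon|| \leq ||y_\varepsilon - x|| + ||x - z_\varepsilon|| < 2\varepsilon$ by the triangle inequality. Conversely, taking any $S^\star \in \mathcal{S}(I_U, A, k)$ and substituting $x \mapsto y_\varepsilon$ whenever $x \in S^\star$ yields a size-$k$ slate in $A'$ whose pairwise distances each shift by at most $\varepsilon$, so the optimum uniformity in $A'$ is at least $I_U(S^\star) - \varepsilon \geq D - \varepsilon > 2\varepsilon$, a contradiction. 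I then define $S_A$ by replacing the unique split copy in $S_\varepsilon$ (if any) by $x$; this is a valid size-$k$ subset of $A$, and by the same triangle-inequality shift, $|I_U(S_\varepsilon) - I_U(S_A)| \leq \varepsilon$.

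It remains to show $S_A \in \mathcal{S}(I_U, A, k)$, and here the gap $\delta$ does the work. Suppose for contradiction that some $S' \subseteq A$ with $|S'| = k$ satisfies $I_U(S') > I_U(S_A)$; by definition of $\delta$ one in fact has $I_U(S') \geq I_U(S_A) + \delta$. Embedding $S'$ into $A'$ as before yields $\tilde{S} \subseteq A'$ with $I_U(\tilde{S}) \geq I_U(S') - \varepsilon$, and then the chain $I_U(S_A) + \varepsilon \geq I_U(S_\varepsilon) \geq I_U(\tilde{S}) \geq I_U(S') - \varepsilon \geq I_U(S_A) + \delta - \varepsilon$ forces $2\varepsilon \geq \delta$, contradicting the choice of $\varepsilon$. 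The main obstacle is precisely this last step: without the discrete gap $\delta$, an arbitrarily small perturbation could in principle flip which size-$k$ slate is strictly optimal, and extracting a single $\varepsilon$ that works uniformly over all $x \in A$ and all nearby $y_\varepsilon, z_\varepsilon$ requires exploiting that only finitely many uniformity values arise on~$A$.
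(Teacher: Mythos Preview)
Your proof is correct and follows essentially the same approach as the paper's: choose $\varepsilon$ small relative to both the minimum pairwise distance and a discrete gap, rule out the possibility that both split copies are selected via the triangle inequality, then swap back and use the gap to force optimality. The only cosmetic difference is the gap parameter---you take the minimum gap among the finitely many uniformity values $\{I_U(S): S\subseteq A,\ |S|=k\}$, whereas the paper takes the minimum gap among all pairwise distances in $A$; since the former set is contained in the latter, your $\delta$ is at least the paper's $d_{\min}$ and both choices work equally well.
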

\begin{proof}
  Let $S \in \mathcal{S}(I_U, A, k)$. Let $D = \{||x - y|| \mid x,y
  \in A, x \neq y\}$ be the set of distances occurring between
  alternatives of $A$ and $d_{\min} = \min_{d_1, d_2 \in D, d_1 \neq d_2} |d_1
  -d_2|$ be the minimal difference between two such distances.
  Let $0 < \veps < \min\left(\frac{I_U(S)}{3}, \frac{d_{\min}}{2}\right)$.
  Lastly, let $x \in A$ and $y_\veps, z_\veps \in \R^d$ with $||x -
  y_\veps|| < \veps$ and $||x - z_\veps|| < \veps$.

  Let $A_\veps \coloneq A \cup \{y_\veps, z_\veps\} \setminus \{x\}$.
  First note that for any $a \in A$ we get $||a - y_\veps|| \geq ||a
  - x|| - ||x - y_\veps|| > ||a - x|| - \veps$ due to the triangle inequality.
  In turn, it also follows for any $T \subseteq A$ with $x \in T$, that $I_U(T) >
  I_U(T \setminus \{x\} \cup \{y_\veps\}) - \veps$.

  Let $S_\veps \in \mathcal{S}(I_U, A_\veps, k)$. $S_\veps$ cannot
  contain both $y_\veps$ and $z_\veps$: We have $I_U(S \setminus \{x\}
  \cup \{y_\veps\}) > I_U(S) - \veps > \frac{2}{3} I_U(S) > 2\veps$,
  since $\veps < \frac{I_U(S)}{3}$. Therefore, any optimal $S_\veps
  \subseteq A_\veps$ cannot contain both $y_\veps$ and $z_\veps$,
  since $||y_\veps - z_\veps|| \leq ||y_\veps - x|| + ||x - z_\veps||
  < 2 \veps$. $S \setminus \{x\} \cup \{y_\veps\}$ would achieve a higher
  uniformity score in $A_\veps$.

  Without loss of generality suppose $y_\veps \in S_\veps$ and
  $z_\veps \notin S_\veps$.  Define $S' \coloneq S \setminus \{x\}
  \cup \{y_\veps\}$ and $S_\veps' \coloneq S_\veps \setminus
  \{y_\veps\} \cup \{x\}$. Then we use the observation above and that
  $S_\veps$ is optimal to derive
  \[I_U(S) - 2\veps < I_U(S') - \veps \leq I_U(S_\veps) - \veps <
  I_U(S_\veps') \leq I_U(S)\]  and therefore $|I_U(S) -
  I_U(S_\veps')| < 2\veps < d_{\min}$. Now, since $I_U(S)$ and
  $I_U(S_\veps')$ are both members of $D$, they must be equal, since
  their difference is smaller than the minimum difference of two
  elements in $D$. Therefore $S_\veps'$ is an optimal slate.

  Lastly suppose $y_\veps, z_\veps \notin S_\veps$. As $S$ is an
  optimal slate of $A$ and $S_\veps \subseteq A$, $I_U(S_\veps) \leq I_U(S)$.
  If further $x \notin S$, then also $I_U(S) \leq I_U(S_ \veps)$,
  since $S_\veps$ is optimal and $S, S_\veps \subseteq A_\veps$.
  If $x \in S$, then $I_U(S) \leq I_U(S') + \veps \leq I_U(S_\veps) +
  \veps$. Either way $|I_U(S) - I_U(S_\veps)| < \veps < d_{\min}$ and
  therefore $I_U(S) = I_U(S_\veps)$. So $S_\veps$ is optimal in $A$.
\end{proof}

\begin{proposition}\label{prop:cdc_not_epssplit}
  Coverage and Directed Coverage are not $\veps$-split proof.
\end{proposition}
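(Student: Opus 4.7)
The plan is to exhibit, for each of the two measures, a small instance witnessing the violation of $\varepsilon$-split proofness. The structural reason is that both coverage and directed coverage can strictly benefit from replacing a single ``compromise'' alternative by two slightly perturbed copies that cover different parts of the alternative space. To falsify the axiom it suffices to show: for some $A$ and $k$, for every $\varepsilon>0$ there is an $x\in A$ and perturbations $y_\varepsilon,z_\varepsilon\in\mathbb{R}^d$ within $\varepsilon$ of $x$ such that an optimal slate $S_\varepsilon$ of the modified instance contains both $y_\varepsilon$ and $z_\varepsilon$. Such an $S_\varepsilon$ falsifies both conditions at once: (i) fails since $y_\varepsilon,z_\varepsilon\notin A$, and (ii) fails since $S_\varepsilon\setminus\{y_\varepsilon,z_\varepsilon\}\cup\{x\}$ has only $k-1$ elements and therefore cannot lie in $\mathcal{S}(I,A,k)$.

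A single geometric configuration will work for both measures. I would take $A=\{x,a,b\}\subseteq\mathbb{R}^2$ with $x=(0,0)$, $a=(-1,1)$, $b=(1,-1)$ (these three points are pairwise Pareto incomparable) and $k=2$. For any $\varepsilon>0$, set $\delta=\varepsilon/4$ and let $y_\varepsilon=(-\delta,\delta)$ and $z_\varepsilon=(\delta,-\delta)$. Both satisfy $\|x-y_\varepsilon\|=\|x-z_\varepsilon\|=2\delta<\varepsilon$, and offsetting along the anti-diagonal keeps $y_\varepsilon,z_\varepsilon$ Pareto incomparable to each other and to $a,b$, so the modified instance is a valid Pareto set.

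For \emph{coverage}, I would check by direct computation that in the modified instance the slate $\{y_\varepsilon,z_\varepsilon\}$ achieves $I_C=2-2\delta$ (with $a$ and $b$ each covered at distance $2-2\delta$), and that a short case analysis over the remaining five size-$2$ subsets yields coverage at least $2-2\delta$. Hence $\{y_\varepsilon,z_\varepsilon\}$ is optimal and, as explained above, witnesses the failure of the axiom. For \emph{directed coverage}, the same points work: by symmetry the relevant directed distances satisfy $\|a-y_\varepsilon\|_+=\|b-z_\varepsilon\|_+=1-\delta$ and $\|a-z_\varepsilon\|_+=\|b-y_\varepsilon\|_+=1+\delta$, so $\{y_\varepsilon,z_\varepsilon\}$ achieves $I_{DC}=1-\delta$, which a routine check over the other size-$2$ subsets shows to be optimal.

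The main obstacle is choosing the perturbation so that (a) $y_\varepsilon,z_\varepsilon$ stay Pareto incomparable to each other (ruling out the naive axis-aligned split that would make one dominate the other), and (b) the double-replacement slate is genuinely optimal and not merely suboptimal. Both points are resolved by using the diagonal split above and by a short, finite case analysis; no delicate limiting arguments are needed, since the strict improvement in (directed) coverage from $2$ (resp.\ $1$) to $2-2\delta$ (resp.\ $1-\delta$) appears already at first order in $\delta$.
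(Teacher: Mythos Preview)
Your argument is correct. The instance $A=\{(0,0),(-1,1),(1,-1)\}$ with $k=2$ and the anti-diagonal split $y_\varepsilon=(-\delta,\delta)$, $z_\varepsilon=(\delta,-\delta)$ indeed keeps all points pairwise Pareto incomparable, and the case analysis confirms that $\{y_\varepsilon,z_\varepsilon\}$ attains the optimal coverage $2-2\delta$ and optimal directed coverage $1-\delta$ in the modified instance (tied with, e.g., $\{a,b\}$ and $\{a,z_\varepsilon\}$, but ties are irrelevant since the axiom must hold for \emph{every} optimal $S_\varepsilon$). Your observation that (ii) is vacuously violated because the resulting set has size $k-1$ is also sound.

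The paper follows the same underlying idea---split a central compromise point so that the two copies cover opposite halves---but implements it differently. It starts from a five-point configuration $\{(\pm 1,\pm 1),(0,0)\}$ in $\mathbb{R}^2$ (which is \emph{not} a Pareto set) where the split yields a \emph{unique} optimum $\{(-\varepsilon,0),(\varepsilon,0)\}$, and then lifts everything to $\mathbb{R}^4$ via $f(x_1,x_2)=(\tfrac{x_1}{2},-\tfrac{x_1}{2},\tfrac{x_2}{2},-\tfrac{x_2}{2})$. This embedding simultaneously makes all images Pareto incomparable and satisfies $\|f(x)-f(y)\|_+=\tfrac{1}{2}\|x-y\|$, so a single distance computation handles both coverage and directed coverage at once. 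Your construction is more elementary---it stays in $\mathbb{R}^2$ with only three points and no embedding trick---at the modest cost of doing the coverage and directed-coverage checks separately and of having a non-unique optimum (which, as noted, does not matter).
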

\begin{proof}
  Consider the set $A = \{(-1, 1),(-1, -1),(0, 0),(1, -1),(1, 1)\}$.
  One can then check that
  \[\min_{S \subseteq A, |S| = 2} \max_{a \in A} \min_{s \in S} ||a - s|| = 2.\]
  This minimum is achieved at any $S$ containing $(0, 0)$, for example $S = \{(0, 0), (1, 1)\}$. Now replace
  $(0, 0)$ by $(-\veps, 0), (\veps, 0)$.
  So, let $A_\veps = \{(-1, 1),(-1, -1),(-\veps, 0),(\veps, 0), (1,
  -1),(1, 1)\}$
  \[\min_{S \subseteq A_\veps, |S| = 2} \max_{a \in A} \min_{s \in S} ||a - s|| = 2-\veps.\]
  However, this minimum is uniquely achieved at $S_\veps = \{(-\veps, 0),
  (\veps, 0)\}$.

  To construct a counterexample for Coverage and directed Coverage
  consider the map $f: \R^2 \to \R^4, (x_1, x_2) \mapsto
  (\frac{x_1}{2},\frac{-x_1}{2},\frac{x_2}{2},\frac{-x_2}{2})$.
  One can verify that $f$ is injective and for $x,y \in \R^2$ it
  holds that $||f(x) - f(y)|| = ||x - y||$ and  $||f(x) - f(y)||_+ =
  \frac{||x-y||}{2}$. More so, for any $x,y \in \R^2$, $f(x)$ does
  not dominate $f(y)$ and $f(y)$ does not dominate $f(x)$.
  Therefore, $f(A)$ and $f(A_\veps)$ constitute valid counterexamples for
  coverage and directed coverage.
\end{proof}

\subsection{Extremism Monotonicity}

\ExtremismMonotonicity*

\begin{proposition}\label{prop:uni_extreme}
  Uniformity satisfies extremism monotonicity.
\end{proposition}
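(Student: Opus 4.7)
The plan is to exhibit an explicit optimal slate in the modified instance $A' := (A \setminus \{x\}) \cup \{x'\}$ that contains $x'$, namely $S' := (S \setminus \{x\}) \cup \{x'\}$. The whole argument hinges on a clean distance identity that holds precisely because $x$ is extremal: since $x_i = \max_{a \in A} a_i$, every $y \in A \setminus \{x\}$ satisfies $y_i \le x_i < x_i + t = x'_i$, so $|x'_i - y_i| = (x_i - y_i) + t = |x_i - y_i| + t$; combined with $x'_j = x_j$ for all $j \neq i$ this yields
\[
||x' - y|| \;=\; ||x - y|| + t \qquad \text{for every } y \in A \setminus \{x\}.
\]
The minimum case is symmetric.

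Using this identity, I would first show $I_U(S') \geq I_U(S)$: pairs in $S'$ not involving $x'$ inherit their distances from $S$, and every pair involving $x'$ has its distance increased by exactly $t$ compared to the corresponding pair in $S$ involving $x$, so taking minima preserves the inequality. I would then compare $S'$ against an arbitrary competitor $T' \subseteq A'$ of size $k$ with $x' \notin T'$: any such slate lies inside $A \setminus \{x\} \subseteq A$, so by optimality of $S$ in $A$ we get $I_U(T') \leq I_U(S) \leq I_U(S')$. This gives the required conclusion: either some optimal slate in $A'$ already contains $x'$ (done directly), or every optimal slate in $A'$ avoids $x'$, in which case the inequality above shows $S'$ matches the optimal value and is itself an optimal slate containing $x'$.

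The only step requiring real attention is the pointwise distance identity, and it is the unique place where extremality enters. Extremality is essential: without it, pushing $x$ further in coordinate $i$ could shrink the Manhattan distance from $x$ to some other alternative (if $x_i$ lay strictly between the $i$-th coordinates of other alternatives), and this is precisely the source of the non-monotonicity examples underlying \Cref{prop:unic_not_monotonic}. Once the identity is in hand, the remainder is short bookkeeping across the two cases for whether the new competitor contains $x'$ or not.
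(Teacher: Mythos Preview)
Your proposal is correct and follows essentially the same approach as the paper: both establish the key identity $\|x'-y\|=\|x-y\|+t$ for all $y\in A\setminus\{x\}$ (using extremality of $x_i$), deduce $I_U\big((S\setminus\{x\})\cup\{x'\}\big)\ge I_U(S)$, and then use optimality of $S$ in $A$ to bound any slate $T'\subseteq A\setminus\{x\}$ not containing $x'$. Your write-up is a bit more explicit about the case split and why extremality is needed, but the argument is the same.
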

\begin{proof}
  For some $i \in [d]$ let $x_i \in \min_{a \in A} a_i$ and $A' = A
  \setminus \{x\} \cup \{x'\}$, then for any $y \in A$ we get
  $||x' - y|| = t + ||x - y||$, since $x_i$ was minimal. Let $S'
  \subseteq A \setminus \{x\}$, with $|S'| = k$ and $S \in
  \mathcal{S}(I_U, A, k)$ with $x \in S$ , then we get $I_U(S) \geq
  I_U(S')$, due to the optimality of $S$ and $I_U(S \setminus \{x\}
  \cup \{x'\}) \geq I_U(S)$ due to the observation above.
  Therefore $I_U(S') \leq I_U(S) \leq I_U(S \setminus \{x\} \cup \{x'\})$. This
  implies that either $S \setminus \{x\} \cup \{x'\}$ is already
  optimal, or any slate achieving a higher uniformity score must contain $x'$.
  The case where $x_i \in \max_{a \in A} a_i$ can be handled by an
  analogous argument.
\end{proof}

\begin{proposition}\label{prop:cdc_not_extreme}
  Coverage and directed Coverage do not satisfy extremism monotonicity.
\end{proposition}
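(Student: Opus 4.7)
The proposition is a negative axiomatic claim, so the plan is to exhibit one explicit counterexample for each of the two measures. In each case I need a Pareto-optimal set $A$, an optimal slate $S \in \mathcal{S}(I, A, k)$ containing an extreme alternative $x$, and a push direction such that no optimal slate of the modified instance $A' := (A \setminus \{x\}) \cup \{x'\}$ contains $x'$. The guiding intuition is that although preserving $x'$ appears natural (its extremeness only intensifies), the metric interplay among the remaining alternatives can make a different slate strictly better in $A'$.

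For coverage, I would use a three-dimensional example that exploits extremeness in a coordinate where all alternatives share the same value. Concretely, take $A = \{(0,10,0),\,(1,1,0),\,(3,0,0)\}$ with $k = 1$. All three points are Pareto optimal, and a direct computation of the three singleton coverage values establishes $S = \{(1,1,0)\}$ as the unique optimum (value $10$). Since every alternative has $a_3 = 0$, the alternative $(1,1,0)$ satisfies $x_3 = \max_{a \in A} a_3$. Pushing $x$ along coordinate $3$ to $x' = (1,1,5)$ and recomputing the three singleton coverages on $A'$ shows that $\{(3,0,0)\}$ becomes the unique optimum (value $13$), since both $\{(1,1,5)\}$ and $\{(0,10,0)\}$ incur an additional $5$ in coordinate $3$ when covering other alternatives. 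A brief check confirms all points of $A'$ are Pareto optimal as well.

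For directed coverage, I would first isolate an asymmetry that dictates the push direction: if $x_i = \max_{a \in A} a_i$, then $||a - x'||_+ = ||a - x||_+$ for every $a \neq x$ (since $a_i \leq x_i < x'_i$ makes the $i$-th term vanish in both), so replacing $x$ by $x'$ in any slate preserves the DC value and the axiom actually holds in that direction. Hence the counterexample must use the min direction, where $||a - x'||_+ = ||a - x||_+ + t$, making $x'$ strictly worse at covering other alternatives. A two-dimensional instance then suffices: take $A = \{(0,10),\,(10,0)\}$ with $k = 1$, start from the optimal slate $S = \{(0,10)\}$ (DC value $10$), and push $x = (0,10)$ along coordinate $1$ (where $x_1 = 0 = \min_{a \in A} a_1$) to $x' = (-t, 10)$. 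The two singleton DC values on $A'$ are $10 + t$ for $\{x'\}$ and $10$ for $\{(10,0)\}$, so $\{(10,0)\}$ is the unique optimum and the axiom is violated.

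The main obstacle is locating the coverage counterexample: in two dimensions the axiom turns out to be hard to violate, since endpoints of the Pareto curve naturally remain in any ``central'' optimal slate after being pushed further out. One has to identify a structural feature --- here, a coordinate in which all alternatives coincide --- that decouples the push of $x$ from the metric structure of the remaining alternatives. Once this trick is found, the three-dimensional construction is essentially dictated by it.
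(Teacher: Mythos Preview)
Your proof is correct and follows essentially the same strategy as the paper: small explicit counterexamples with $k=1$ (a three-point set in $\mathbb{R}^3$ for coverage, a two-point set in $\mathbb{R}^2$ for directed coverage), differing from the paper's examples only in the specific numbers chosen. Your additional observation that directed coverage in fact \emph{satisfies} extremism monotonicity in the max direction---so that any counterexample must use the min direction---is a nice structural remark that the paper's proof does not include.
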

\begin{proof}
  Let $k = 1$ and $A = \{(3, 0, 0), (0, 3, 0), (2, 1, 1)\}$. The only
  optimal slate for coverage is $S = \{(2, 1, 1)\}$, with $I_C(S, A) = 5$.
  In particular $(2, 1, 1)$ is an extreme alternative for objective
  three. However, the optimal slate in $A' = \{(3, 0, 0), (0, 3, 0),
  (2, 1, 3)\}$ is
  $S' = \{(3, 0, 0)\}$ for a coverage value of $I_C(S', A') = 6$.

  For directed coverage consider $k = 1$ and $A = \{(2, 0), (0,
  1)\}$, then $\{(2, 0)\}$ is the unique optimal slate for
  directed coverage. In particular, $(2, 0)$ is minimal under the second objective. But in $A' = \{(2, -3),
  (0, 1)\}$ the only
  optimal slate is $\{(0, 1)\}$.
\end{proof}

\subsection{Standout Consistency}
\StandoutConsistency*
\begin{proposition}\label{prop:dc_winner}
  Directed coverage is standout consistent.
\end{proposition}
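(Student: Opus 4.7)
The plan is to prove the contrapositive in a direct swap argument: if $x$ is a standout alternative but $x \notin S$ for some size-$k$ slate $S$, we exhibit a strictly better slate $S'$ of the same size, contradicting optimality of $S$.

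Set $\alpha := \min_{a \in A \setminus \{x\}} \|x-a\|_+$ and $\beta := \max_{a \in A \setminus \{x\}} \|a-x\|_+$, so by the standout hypothesis $\alpha > \beta$. First, I would establish a lower bound: if $x \notin S$, then every $s \in S$ lies in $A \setminus \{x\}$, hence $\|x-s\|_+ \geq \alpha$. Taking $a = x$ as a witness in the outer maximum defining $I_{DC}$ yields
\[ I_{DC}(S,A) \;\geq\; \min_{s \in S} \|x-s\|_+ \;\geq\; \alpha. \]

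Second, I would construct the improving slate. Pick any $s^\star \in S$ (possible because $k \geq 1$) and let $S' := (S \setminus \{s^\star\}) \cup \{x\}$; note $|S'| = k$ because $x \notin S$. To bound $I_{DC}(S',A)$ from above I would split on the witness $a$ in the outer maximum: for $a = x$ the point $x \in S'$ gives distance $0$, and for any $a \in A \setminus \{x\}$ the point $x \in S'$ certifies
\[ \min_{s' \in S'} \|a - s'\|_+ \;\leq\; \|a - x\|_+ \;\leq\; \beta. \]
Hence $I_{DC}(S',A) \leq \beta < \alpha \leq I_{DC}(S,A)$, contradicting optimality of $S$ (recall lower values of $I_{DC}$ are better). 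Therefore $x$ belongs to every optimal slate.

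There is no real obstacle here: the argument is entirely structural once one recognises that $x$ serves simultaneously as a hard-to-cover witness (for any slate excluding it) and as a cheap cover (for every other alternative). The only subtle point is remembering that $I_{DC}$ is minimised, so the comparison $\beta < \alpha$ constitutes the desired strict improvement, and that $k \geq 1$ is needed to ensure some $s^\star \in S$ exists to swap out.
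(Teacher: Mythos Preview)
Your proof is correct and follows essentially the same approach as the paper: both establish that any slate excluding $x$ has directed coverage at least $\alpha$ (using $x$ as the uncovered witness) and any slate containing $x$ has directed coverage at most $\beta$ (using $x$ as the universal cover), then invoke $\alpha > \beta$. The only cosmetic difference is that you phrase it as a swap argument producing a specific better slate $S'$, whereas the paper directly compares arbitrary slates with and without $x$; the underlying inequalities are identical.
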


\begin{proof}
  Let $x \in A$ be a standout alternative. Let $l = \min_{a \in A
  \setminus \{x\}} ||x - a||_+$ and $r = \max_{a \in A \setminus
  \{x\}}||a - x||_+$. Let $S \subseteq A \setminus \{x\}$, then
  \[I_{DC}(S,A) = \max_{a \in A} \min_{s \in S} ||a - s||_+ \geq
    \min_{s \in S} ||x - s||_+ \geq \min_{a \in A \setminus \{x\}} ||x
  - a||_+ = l.\]
  On the other hand, for $S' \subseteq A$ with $x \in S$, we get
  \[I_{DC}(S', A) = \max_{a \in A} \min_{s \in S'} ||a - s||_+ \leq
  \max_{a \in A} ||a - x||_+ = \max_{a \in A \setminus \{x\}} ||a - x||_+ = r\]
  Since $x$ is a standout alternative we get \[I_{DC}(S, A) \geq l > r
  \geq I_{DC}(S', A),\]
  so any slate minimizing $I_{DC}$ must contain $x$.
\end{proof}
\begin{proposition}\label{prop:unic_not_winner}
  Uniformity and Coverage are not standout consistent.
\end{proposition}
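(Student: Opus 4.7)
The plan is to exhibit a single small counterexample that refutes standout consistency for both Uniformity and Coverage simultaneously. The guiding intuition is that the standout property is defined through the \emph{asymmetric} quantity $\|\cdot\|_+$, while uniformity and coverage use the symmetric Manhattan norm $\|x-y\|=\|x-y\|_+ +\|y-x\|_+$. So we aim for a standout alternative $x$ that dominates each competitor by a lot on one objective but is worse by only $\veps$ on another — this makes the $\ell_1$-distance from $x$ to each competitor small, while forcing the competitors to lie far from each other in $\ell_1$.

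Concretely, I will take $d=2$ and the instance
\[A=\bigl\{x=(1,1),\ a=(0,1+\veps),\ b=(1+\veps,0)\bigr\}\]
for some small $\veps>0$. The first step is to verify the standout condition for $x$: a direct computation gives $\|x-a\|_+=\|x-b\|_+=1$ and $\|a-x\|_+=\|b-x\|_+=\veps$, so $\min_{y\neq x}\|x-y\|_+=1>\veps=\max_{y\neq x}\|y-x\|_+$. The second step is to compute the relevant Manhattan distances, yielding $\|x-a\|=\|x-b\|=1+\veps$ and $\|a-b\|=2+2\veps$, and then to evaluate uniformity and coverage on all three possible size-$2$ slates.

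For $k=2$, this immediately gives $I_U(\{a,b\})=2+2\veps$ while $I_U(\{x,a\})=I_U(\{x,b\})=1+\veps$, so $\{a,b\}$ is the unique uniformity-optimal slate and it excludes the standout $x$. For coverage, all three slates tie at $I_C=1+\veps$, and since the axiom demands $x\in S$ for \emph{every} optimal $S$, the existence of the optimal slate $\{a,b\}\in \mathcal{S}(I_C,A,2)$ that omits $x$ already refutes standout consistency for $I_C$.

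The only subtlety, and the thing that could trip up a reader, is the strict inequality in the definition of a standout alternative together with the ``for every optimal slate'' quantifier in the axiom: I must argue that the coverage ties above are genuine and that taking $\veps>0$ (rather than $\veps=0$) is necessary to get a strict standout. Once this is handled, both claims — that Uniformity and that Coverage fail standout consistency — follow from the single instance above.
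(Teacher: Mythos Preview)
Your proposal is correct and follows essentially the same approach as the paper: exhibit a small Pareto-optimal instance in which a standout alternative is excluded from some optimal slate. The paper's own counterexample is even more minimal---it uses $A=\{(0,1),(2,0)\}$ with $k=1$, where $(2,0)$ is standout but $\{(0,1)\}$ is optimal for both measures---while you use three points and $k=2$; but the idea and the mechanism (symmetric $\ell_1$ ignoring the asymmetry that makes $x$ standout) are identical.
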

\begin{proof}
  Let $A = \{(0, 1), (2, 0)\}$ and $k = 1$, then for both uniformity
  and coverage $S = \{(0, 1)\}$ is an optimal slate, but $(2, 0)$ is
  the unique standout alternative in $A$.
\end{proof}

\subsection{Outlier Consistency}
\OutlierConsistency*
\begin{proposition}\label{prop:c_distance}
  Coverage is outlier consistent.
\end{proposition}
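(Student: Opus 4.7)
The plan is to show that any optimal slate must contain the outlier, by producing a slate that contains $x$ and has coverage strictly better than any slate that omits $x$. Let $l := \min_{a \in A \setminus \{x\}} \|x - a\|$ and $r := \max_{y,z \in A \setminus \{x\}} \|y - z\|$, so that the outlier condition reads $l > r$.

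First I would lower-bound the coverage of any slate that avoids $x$. If $S \subseteq A \setminus \{x\}$ with $|S| = k$, then $x$ itself must be covered by some alternative in $S$, so
\[
I_C(S, A) \;=\; \max_{a \in A} \min_{s \in S} \|a - s\| \;\geq\; \min_{s \in S} \|x - s\| \;\geq\; l,
\]
since every $s \in S$ lies in $A \setminus \{x\}$ and the minimum defining $l$ is taken over exactly this set.

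Next I would upper-bound the coverage of a suitably chosen slate containing $x$. Pick any $T \subseteq A \setminus \{x\}$ with $|T| = k - 1$; this is where the hypothesis $k \geq 2$ enters, ensuring $T$ is non-empty. Let $S' := \{x\} \cup T$ and fix any $t \in T$. For $a = x$, we have $\min_{s \in S'} \|a - s\| = 0$; for $a \in A \setminus \{x\}$, we have $\min_{s \in S'} \|a - s\| \leq \|a - t\| \leq r$. Hence $I_C(S', A) \leq r$.

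Combining the two bounds gives $I_C(S', A) \leq r < l \leq I_C(S, A)$ for every $S \subseteq A \setminus \{x\}$ of size $k$, so no such $S$ can be optimal; every optimal slate must contain $x$. I do not anticipate a serious obstacle here: the only subtlety is that the bound on covering alternatives in $A \setminus \{x\}$ requires at least one non-outlier in $S'$, which is exactly what the assumption $k \geq 2$ delivers.
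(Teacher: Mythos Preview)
Your proof is correct and follows essentially the same approach as the paper's: both define $l$ and $r$, lower-bound the coverage of any slate omitting $x$ by $l$, upper-bound the coverage of a slate containing $x$ (together with at least one other alternative, using $k\geq 2$) by $r$, and conclude from $l>r$. The only cosmetic difference is that the paper states the upper bound for an arbitrary slate $S'$ containing $x$ and some $b\in S'\setminus\{x\}$, whereas you construct one such slate explicitly; either suffices.
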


\begin{proof}
  Let $x \in A$ be an outlier alternative. Let $l = \min_{a \in
  A\setminus\{x\}} ||x - a||$ and $r = \max_{y, z \in A \setminus
  \{x\}}||y-z||$. Let $S \subseteq A \setminus \{x\}$, with $|S| \geq 2$, then
  \[I_{C}(S, A) = \max_{a \in A} \min_{s \in S} ||a - s|| \geq \min_{s
  \in S} ||x - s|| \geq \min_{a \in A \setminus \{x\}} || x - a|| = l.\]
  On the other hand, for $S' \subseteq A$ with $x \in S'$ and some
  other $b \in S' \setminus \{x\}$, we get
  \[I_{C}(S', A) = \max_{a \in A} \min_{s \in S'} ||a - s|| \leq \max_{a
    \in A \setminus \{x\}} ||a - b|| \leq \max_{y,z \in A \setminus
  \{x\}} ||y-z|| = r\]
  Since $x$ is an outlier alternative we get \[I_{C}(S) \geq l > r
  \geq I_{C}({S'}),\]
  so any slate minimizing $I_{C}$ must contain $x$.
\end{proof}
\begin{proposition}\label{prop:uni_dc_not_distance}
  Uniformity and Directed Coverage are not outlier consistent.
\end{proposition}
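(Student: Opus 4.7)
The plan is to construct explicit counter-example instances, one for each measure, in which an outlier alternative $x$ exists but some optimal slate excludes it.

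For directed coverage, I will work in $\R^2$ and exploit the asymmetry of $\|\cdot\|_+$. The idea is to pick $x$ so that it is Pareto-incomparable with the other alternatives and very far in Manhattan distance, yet only exceeds each non-outlier by a tiny amount $\delta$ in one coordinate. Concretely, I propose $A=\{x,a,b\}$ with $x=(-M,\delta)$, $a=(0,0)$, $b=(1,-M)$ for some large $M$ and small $\delta>0$, and $k=2$. A direct calculation then shows $\|x-a\|_+ = \delta$ and $\|x-b\|_+ = \delta + M$, so the slate $\{a,b\}$ achieves $I_{DC}(\{a,b\},A)=\delta$, since $x$ is the only point to be covered and $a$ covers it up to $\delta$. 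Any slate containing $x$ must cover either $a$ or $b$ through $x$, which is expensive because $\|a-x\|_+$ and $\|b-x\|_+$ both contain the $M$ term, yielding a directed coverage of order $M$ or $1$. Thus $\{a,b\}$ is uniquely optimal. Finally, checking $\min(\|x-a\|,\|x-b\|)>\|a-b\|$ for $M$ large enough confirms $x$ is an outlier.

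For uniformity the situation is subtler: since the outlier-to-non-outlier distances are all larger than every non-outlier pair distance, including $x$ in a slate is always \emph{weakly} beneficial, so a strict improvement from excluding $x$ cannot occur. Therefore I must produce a tie, i.e., an instance in which the non-outlier-only slate matches the best uniformity attainable with $x$. My plan is to build an instance with $|A|=4$ and $k=3$ where the three non-outlier alternatives are pairwise Pareto-incomparable and Manhattan-equidistant. In $\R^2$ this is impossible because for any Pareto chain $y_1<y_2<y_3$ (with decreasing second coordinate) one has $\|y_1-y_3\|=\|y_1-y_2\|+\|y_2-y_3\|$, ruling out three equal distances. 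Moving to $\R^3$ removes this obstruction: the standard basis vectors $e_1,e_2,e_3$ are pairwise incomparable and all at Manhattan distance $2$. Adding $x=(-1,10,10)$ yields a Pareto front, and a quick computation shows $\min_{a\in A\setminus\{x\}}\|x-a\|=20>2=\max_{y,z\in A\setminus\{x\}}\|y-z\|$, so $x$ is an outlier. For $k=3$, every size-$3$ slate has uniformity exactly $2$: the slate $\{e_1,e_2,e_3\}$ because all three pairs are at distance $2$, and every slate containing $x$ and two basis vectors because the sole basis-basis pair in it contributes distance $2$ while the $x$-to-basis distances are at least $20$. Hence $\{e_1,e_2,e_3\}$ is optimal and excludes $x$.

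The main obstacle is the uniformity case, and specifically realizing the \enquote{tie} construction. The standard intuition is that uniformity should reward placing distant points and therefore should always favor including an outlier, but the axiom demands inclusion in every optimal slate, so a single tied slate suffices to break it. The key insight is to use pairwise equidistant Pareto-incomparable points, which the additivity of Manhattan distance along a monotone chain forbids in $\R^2$, motivating the move to $\R^3$ with the basis-vector construction.
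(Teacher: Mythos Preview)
Your uniformity counterexample is correct and follows the same idea as the paper: place three pairwise Manhattan-equidistant Pareto-incomparable points and one outlier, so that every size-$3$ slate has the same uniformity and the non-outlier slate is optimal. The paper uses $\{e_1,e_2,e_3,(0,0,0,20)\}\subseteq\R^4$; your $\R^3$ construction with $x=(-1,10,10)$ is equally valid (and one dimension smaller).

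Your directed coverage counterexample, however, has a genuine gap. With $x=(-M,\delta)$, $a=(0,0)$, $b=(1,-M)$ one computes
\[
\|x-a\|=M+\delta,\qquad \|x-b\|=2M+1+\delta,\qquad \|a-b\|=M+1,
\]
so the outlier condition $\min(\|x-a\|,\|x-b\|)>\|a-b\|$ reduces to $M+\delta>M+1$, i.e.\ $\delta>1$. This is \emph{independent of $M$}; making $M$ large does not help. On the other hand, the three slates have
\[
I_{DC}(\{a,b\},A)=\delta,\quad I_{DC}(\{x,a\},A)=\|b-a\|_+=1,\quad I_{DC}(\{x,b\},A)=\min(\|a-x\|_+,\|a-b\|_+)=M,
\]
so $\{a,b\}$ is optimal (even weakly) only if $\delta\le 1$. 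The two requirements $\delta>1$ and $\delta\le 1$ are incompatible, so your instance never simultaneously makes $x$ an outlier and $\{a,b\}$ optimal.

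The fix is to add a coordinate in which $x$ is very poor, so that $x$ becomes far in $\|\cdot\|$ from the others without becoming hard to cover via $\|\cdot\|_+$. The paper does exactly this: it takes $A=\{(1,0,0,0),(0,1,0,0),(0,0,1,-10)\}$ with $k=2$; the last point is an outlier (distances $12,12$ versus $2$), yet $\{(1,0,0,0),(0,1,0,0)\}$ attains $I_{DC}=1$, which ties the other two slates. A similar repair works in $\R^2$ if you allow ties: e.g.\ $A=\{(0,2),(2,1),(3,0)\}$ has outlier $(0,2)$ (distances $3,5$ versus $2$) and all three size-$2$ slates achieve $I_{DC}=1$, so the slate $\{(2,1),(3,0)\}$ excluding the outlier is optimal.
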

\begin{proof}
  Let $A = \{(1, 0, 0,0), (0, 1, 0,0), (0, 0, 1,0), (0,0,0,20)\}$ and
  $k = 3$, then for uniformity, any optimal slate achieves at most a
  uniformity value of $2$. In fact, $\{(1, 0, 0, 0), (0, 1, 0, 0), (0,
  0, 1, 0)\}$ is optimal, but $(0, 0, 0, 20)$ is a outlier alternative.

  For directed coverage consider $A = \{(1, 0, 0, 0), (0, 1, 0, 0),
  (0, 0, 1, -10)\}$. Then $(0, 0, 1, -10)$ is an outlier, but $\{(1,
  0, 0, 0), (0, 1, 0, 0)\}$ is an optimal slate.
\end{proof}

\section{A Relation Between Metric $k$-center and Metric
$p$-dispersion}\label{app:compare}

We introduce the following notation for the $k$-center and
$p$-dispersion problems. For a metric space $(X, d)$ and a subset $S
\subseteq X$ we define $C(S, X) = \max_{x \in X}\min_{s \in S}d(x,
s)$ and $U(S) = \min_{x,y \in S} d(x, y)$.

\citet{shier_min-max_1977} noted a duality between $p$-dispersion
$k$-center on tree graphs: An optimal $k$-center solution with
$k$-points achieves the same objective value as an optimal
$p$-dispersion solution with $k+1$-points. While this duality does
not yield an equality in our setting, we can still prove that the
objective values only differ by a factor of at most $2$.

\begin{proposition}
  Let $(X, d)$ be a metric space and $k \in \N, k \geq 1$. Let $K_k =
  \min_{S \subseteq X, |S| = k}C(S, X)$ and $M_{k+1} = \max_{S
  \subseteq X, |S| = k+1} U(S)$. Then $K_k \leq M_{k+1} \leq 2 K_k$.
\end{proposition}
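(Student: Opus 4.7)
The plan is to prove the two inequalities $K_k\le M_{k+1}$ and $M_{k+1}\le 2K_k$ independently. The upper bound is a clean pigeonhole-plus-triangle-inequality argument, while the lower bound requires a bit more care: in general metric spaces, the natural attempt to cover $X$ by $T\setminus\{t^*\}$ for a well-chosen $t^*$ inside an optimal $(k+1)$-dispersion set $T$ can fail (the point of $X$ whose unique close neighbour in $T$ was $t^*$ becomes uncovered), so I instead argue by contradiction.

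For $M_{k+1}\le 2K_k$, I fix an optimal $k$-center $S\subseteq X$, so $C(S,X)=K_k$, and pick any $T\subseteq X$ with $|T|=k+1$. Since $|T|>|S|$, pigeonhole supplies two distinct points $t,t'\in T$ sharing a nearest centre $s\in S$, and the triangle inequality gives $d(t,t')\le d(t,s)+d(s,t')\le 2K_k$. Hence $U(T)\le 2K_k$ for every such $T$, and maximising over $T$ yields $M_{k+1}\le 2K_k$.

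For $K_k\le M_{k+1}$, I argue by contradiction: assume $K_k>M_{k+1}$, so every $k$-subset $S\subseteq X$ admits a witness $x(S)\in X$ with $d(x(S),s)>M_{k+1}$ for all $s\in S$. I then construct inductively a subset $\{y_1,\dots,y_i\}\subseteq X$ whose pairwise distances all strictly exceed $M_{k+1}$: at step $i\le k$, I extend the current $i$-set to a $k$-subset $S_i$ of $X$ by adjoining any $k-i$ further points, and set $y_{i+1}:=x(S_i)$. The choice of $x(S_i)$ guarantees $d(y_{i+1},y_j)>M_{k+1}$ for every $j\le i$, preserving the invariant and automatically ensuring distinctness of the new point from the previous ones. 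After $k$ iterations I obtain $\{y_1,\dots,y_{k+1}\}\subseteq X$ with minimum pairwise distance strictly larger than $M_{k+1}$, contradicting the definition of $M_{k+1}$ as the maximum of $U$ over $(k+1)$-subsets.

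The main obstacle is this second direction: the assumption $K_k>M_{k+1}$ provides a distant witness only when fed a subset of size exactly $k$, so the growing dispersed set has to be padded (with any available points, which merely needs $|X|\ge k+1$, an implicit premise for $M_{k+1}$ to be defined) before each invocation of the hypothesis. Once this padding trick is in place, the rest of the proof is routine bookkeeping.
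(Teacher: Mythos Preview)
Your proof is correct. The upper bound $M_{k+1}\le 2K_k$ is argued exactly as in the paper: pigeonhole on an optimal $k$-center forces two points of any $(k{+}1)$-set to share a centre, and the triangle inequality finishes. For the lower bound $K_k\le M_{k+1}$, however, you take a genuinely different route.

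The paper proceeds constructively: it picks a $(k{+}1)$-subset $T_{k+1}$ that lexicographically maximises the sorted vector of pairwise distances (hence in particular attains $M_{k+1}$), removes one endpoint $b$ of a closest pair, and argues that $T_{k+1}\setminus\{b\}$ is already a $k$-center of value at most $M_{k+1}$; otherwise a far witness could replace $b$ and improve the lexicographic objective. Your argument is instead the farthest-first (Gonzalez) idea run in contrapositive: assuming $K_k>M_{k+1}$, every $k$-set has a witness at distance $>M_{k+1}$ from all its members, and by padding and iterating you grow a $(k{+}1)$-set with all pairwise distances $>M_{k+1}$, contradicting the maximality of $M_{k+1}$. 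Your version is slightly more elementary (no need for the lexicographic refinement, and it only uses that the supremum defining $C(S,X)$ exceeds $M_{k+1}$, not that it is attained), while the paper's version is more informative in that it exhibits an explicit $k$-center---namely $T_{k+1}\setminus\{b\}$---witnessing $K_k\le M_{k+1}$. The padding step and the implicit assumption $|X|\ge k+1$ are handled correctly.
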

\begin{proof}
  To see that $M_{k+1} \leq 2 K_k$, we turn to the original proof of
  \citet{shier_min-max_1977}: Let $S_k \subseteq X, |S_k| = k$ with
  $C(S_k, X) = K_k$. So for every point $x \in X$ there exists a
  point $s \in S_k$ such that $d(s, x) \leq K_k$. Thus for any set $T
  \subseteq X$ with $|T| = k+1$, by the pigeon hole principle there
  must exist distinct $a, b \in T$ and a $s \in S$ such that $d(a, s)
  \leq K_k$ and $d(b, s) \leq K_k$. Now by the triangle inequality
  $d(a, b) \leq 2 \cdot K_k$. Since this holds for any set $T$ with
  $k+1$ points, $M_{k+1} \leq 2 K_k$ follows.

  To show $K_k \leq M_{k+1}$, let $T_{k+1} \subseteq X$ be a subset
  lexicographically maximizing the sorted sequence $(d(u, v))_{u, v
  \in T_{k+1}}$. Since $T_{k+1}$ is a lexicographic maximizer, it
  also maximizes $\min_{x,y \in T_{k+1}} d(x,y) = U(T_{k+1})$ and
  therefore $U(T_{k+1}) = M_{k+1}$. Let $a, b \in T_{k+1}$ with $d(a,
  b) = M_{k+1}$. We now claim that $C(T_{k+1} \setminus \{b\}, X)
  \leq M_{k+1}$. Suppose there is a point $x \in X$ such that $d(x,
  t) > M_{k+1}$ for all $t \in T_{k+1} \setminus \{b\}$, then
  $T_{k+1} \setminus \{b\} \cup \{x\}$ is lexicographically larger
  than $T_{k+1}$, since we have replaced an occurrence of the minimal
  distance $M_{k+1}$ in the sorted sequence $(d(u,v))_{u,v \in
  T_{k+1}}$ by only strictly larger distances involving
  $x$. Therefore for some optimal $S$, with $|S| = k$ we get $K_k =
  C(S, X) \leq C(T_{k+1} \setminus \{b\}, X) \leq M_{k+1}$.
\end{proof}

\section{Algorithmic Analysis}
\subsubsection{}

\kcentertwodhard*

\begin{proof}
  We show NP-hardness via a reduction from \textsc{3SAT}. Let $Q =
  (V, \mathcal{W})$ be a SAT formula, where $V$ is the set of
  variables and $\mathcal{W}$ is the set of clauses.
  For a given SAT formula we aim to construct a set of points
  $A \subseteq \R^2$ such that a $k$-center solution $S$ with a
  certain value exists if and only if the SAT formula admits a solution.
  The basic idea is to encode variables $i \in V$ as large
  \emph{circuits} $C_i$ of points in $\R^2$, with each of the two assignments for $i$
  encoded as one of two possible options for $C_i \cap S$ for any set $S$ that
  achieves a certain threshold under the $k$-center measure.
  To represent a \emph{clause} $W \in \mathcal{W}$, one then
  geometrically joins the three circuits corresponding to the clauses variables.
  Since these circuits are placed in $\R^2$, some of them may need to
  cross in order to form some clause.
  This is handled by a \emph{junction} ensuring that key properties
  are maintained when two circuits cross.
  See \Cref{fig:hard_sketch} for an overall sketch of the construction.

  \begin{figure}[h]
    \centering
    \includegraphics[width=6cm]{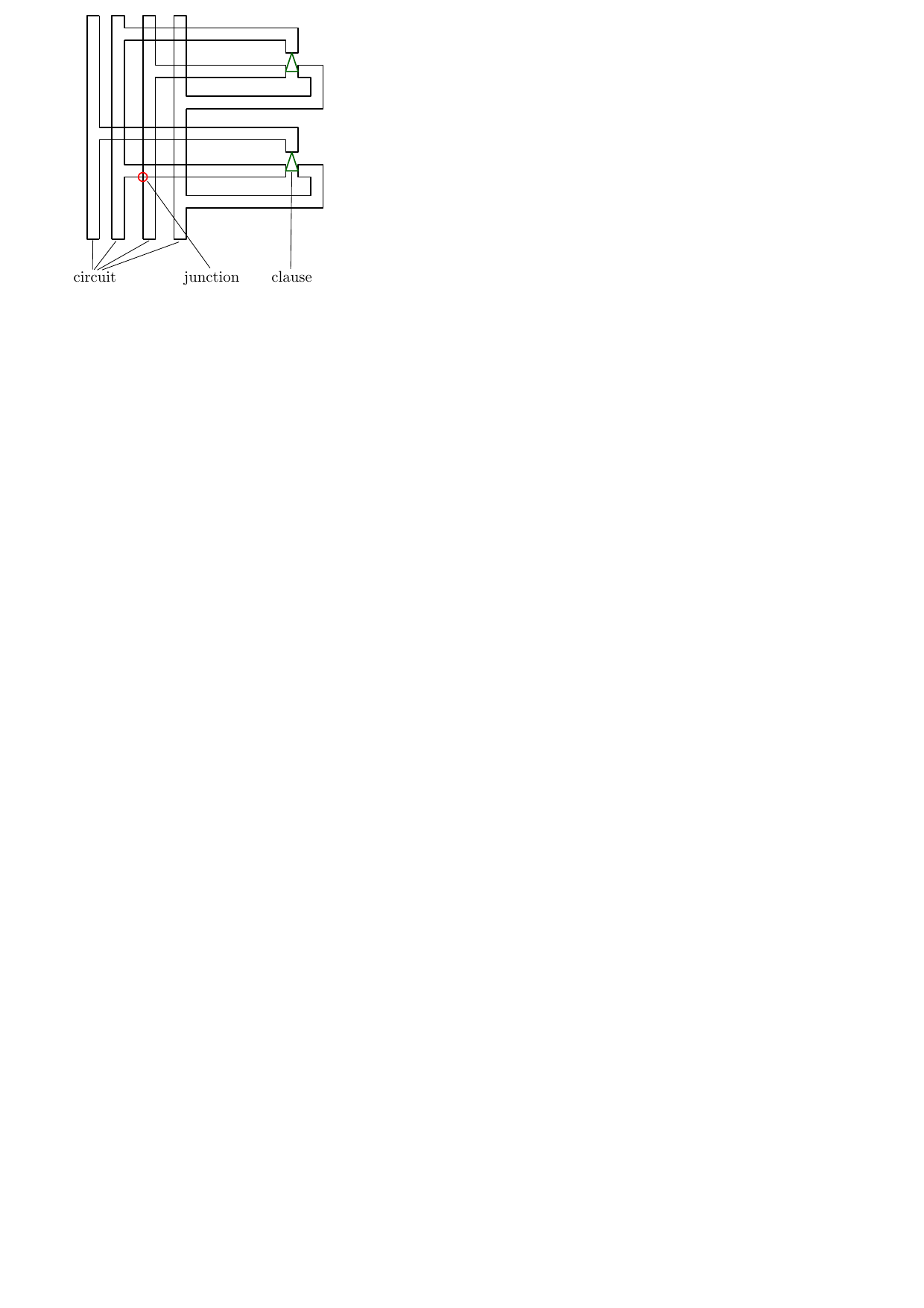}
    \caption{General sketch of the reduction in
    \Cref{thm:kcenter_hard}. Clauses, circuits, and junctions are highlighted.}
    \label{fig:hard_sketch}
  \end{figure}

  We will construct $A$ such that there exists a subset $S \subseteq A$ iwth $I_C(S, A) \leq 4$ if and only if there exists
  a satisfying assignment to the SAT formula. In the following, we will therefore say that $s \in A$ \emph{covers} $a \in A$, if $||s - a|| \leq 4$.
  This means that $I_C(S, A) \leq 4$ if and only if every point in $A$ is covered by some point in $S$.
  We start by describing the circuits. We build a circuit $C_i$ sequentially by placing a sequence of
  points $x,y,z$ and $c$ on a line with $d(x,y) = d(y,z) = 2$, so $d(x,z) =
  4$ and then set the last point $c$ with $d(z, c) = 4$.
  The next triple $x',y',z',c'$ will  be placed such that $d(c, x') = 4$.
  Further points will be placed such that this pattern of four points $x,y,z,c$ repeats until the circuit closes.
  Suppose $C_i$ contains $l_i$ copies of $x, y, z, c$, then we will cyclically label the $t$-th copy of $x,y,z,c$ along $C_i$ by
  $x_i^t, y_i^t, z_i^t, c_i^t$ by fixing $x_i^1, y_i^1, z_i^1, c_i^1$ and an orientation of $C_i$ arbitrarily.
  Often, we will be forced to take corners along a circuit, but this is not an issue for the desired distances as any right-angled turn will
  maintain the desired distances between points.
  For examples of circuits, see \Cref{fig:variable}.

  To explain how an assignment of variable $i$ is encoded by $S \cap C_i$, we claim
  that any subset $S \subseteq C_i$ with $I_{C_i}(S, C_i) \leq 4$ and
  $|S| = l_i$ must consist of either all $x_i^t$ or all $z_i^t$.
  First, observe that no point can cover more than $4$ points of $C_i$.
  In fact, since $|S| = l_i$ and $|C_i|  = 4\cdot l_i$, every point in
  $S$ must cover exactly $4$ unique points of $C_i$ and every point
  in $C_i$ must be covered by exactly one point in $S$.
  Any copy of $y$ or $c$ only covers $3$ points, so they cannot be
  contained in $S$.
  Finally, if $S$ contained some $x$ and some $z$, there would be an
  index $t$ such that $z_i^{t}$ and $x_i^{t+1}$ are both in $S$ and
  then $c_i^{t}$ would be covered twice. So it must hold that $S$
  consists of all $x_i^t$ or all $z_i^t$.
  Having established this, the choice of literal for variable $i$ is encoded by whether $S$ consists of all copies of $x$ or $z$.
  We identify $x$ with a choice of $1$ and $z$ with a choice of $0$ for variable $i$.

  \begin{figure}[h]
    \begin{minipage}[c]{0.4\textwidth}
      \centering
      \includegraphics[width=7cm]{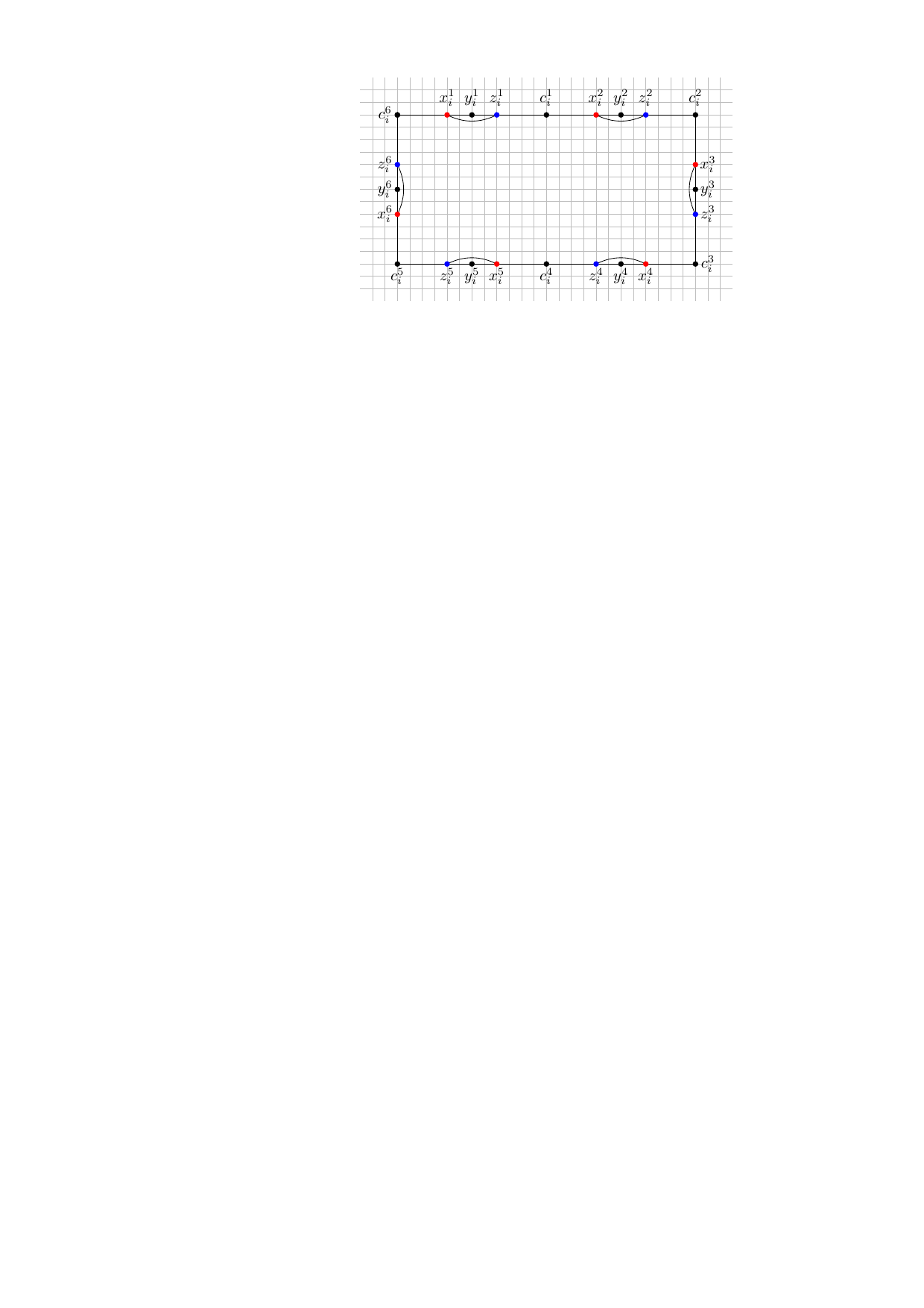}
    \end{minipage}
    \hspace{1cm}
    \begin{minipage}[c]{0.4\textwidth}
      \centering
      \includegraphics[width=7cm]{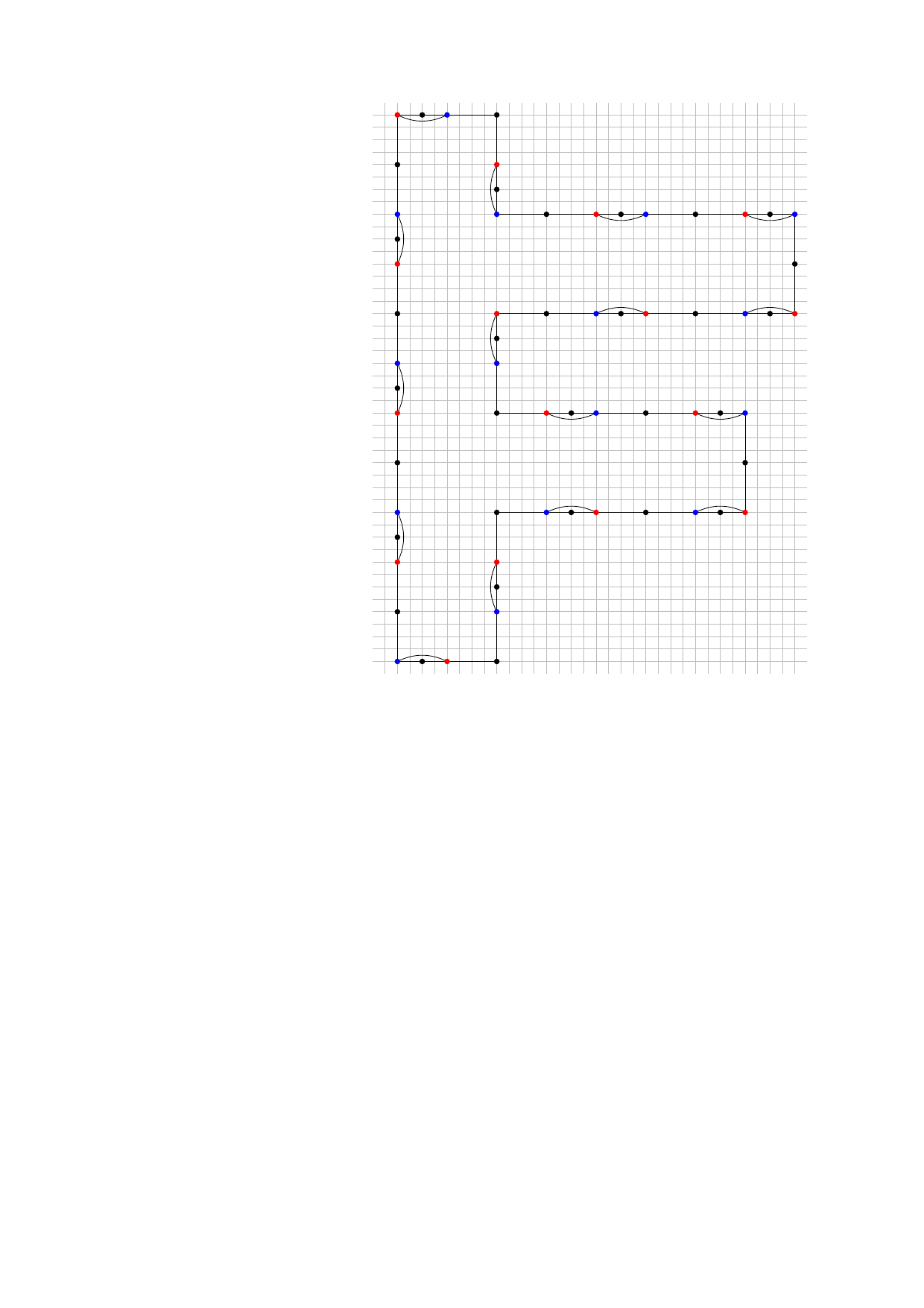}
    \end{minipage}
    \caption{Left: A circuit $C_i$ representing a variable. We draw
      an edge between two points when they can cover each other. This
      circuit has $24 = 4 \cdot 6$ points. The only way to cover all
      points with $6$ points is to select either all $x$ (red), or all
      $z$ (blue). The background grid in $\Z^2$ is displayed. Right: A larger circuit with two arms
    extending to the right.}
    \label{fig:variable}
  \end{figure}

  To represent a clause $W = (l_1 \lor l_2 \lor l_3)$ we join the
  three circuits $C_{i_1}, C_{i_2}, C_{i_3}$ where $C_{i_j}$ is the
  circuit corresponding to  literal $l_j$'s underlying variable $i_j$.
  We introduce a new point $p_W$ with $||p_W - p_{i_j}|| \leq 4$ for
  exactly one $p_{i_j} \in C_{i_j}$ in each of the three circuits.
  Specifically, we construct the gadget in such a way that $p_{i_j} = x_{i_j}^t$
  for some $t$, if setting $i_j$ to $1$ fulfills $W$ and $p_{i_j} =
  z_{i_j}^t$ if setting $i_j$ to $0$ fulfills $W$. See
  \Cref{fig:clause} for how to connect three circuits, such that this
  condition is fulfilled.

  To see that this represents a clause, suppose that we have chosen
  some set $S$ which contains exactly all occurrences of $x$ or all
  occurences of $z$ for each circuit $C_{i_j}$ and that $p_W \notin S$.
  $p_W$ is then covered if and only if there is some circuit
  $C_{i_j}$ for which the unique $v \in C_{i_j}$ with $||p_W - v||
  \leq 4$ is a member of $S$. By construction $v$ must be of type
  $x$ or type $z$ and since we must select either all $x$ or all $z$
  for any circuit, we have made the choice for circuit $C_{i_j}$
  which corresponds to fulfilling literal $l_j$.
  Therefore, this accurately represents the constraint that one of
  the literals in $W$ must be fulfilled.

  \begin{figure}[h]
    \centering
    \includegraphics[width=7cm]{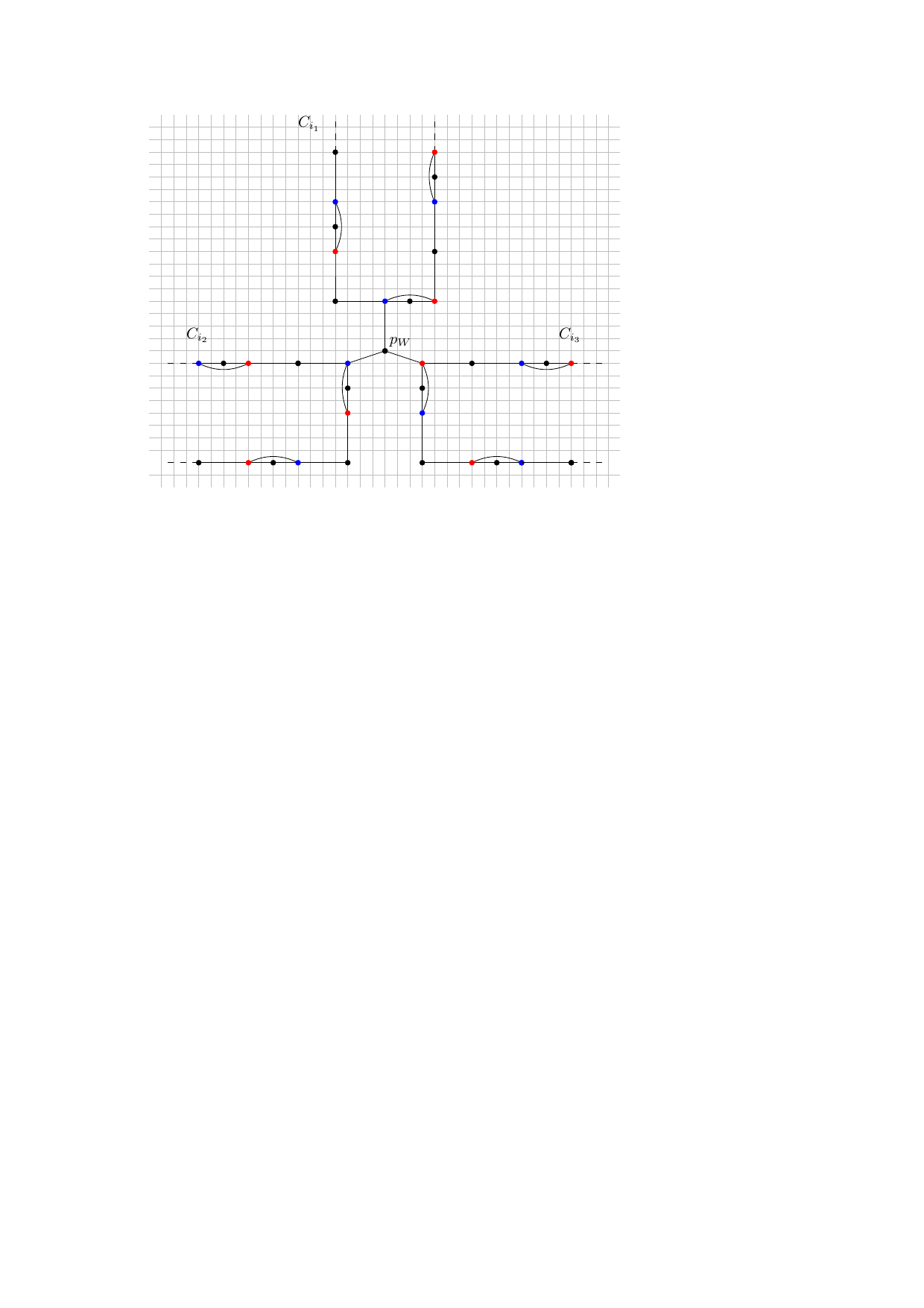}
    \caption{A clause gadget, representing clause $W = (i_1 \lor i_2
      \lor \lnot i_3)$. The three
      circuits $C_{i_1}, C_{i_2}, C_{i_3}$ are combined to form the
    clause gadget.}
    \label{fig:clause}
  \end{figure}

  The two gadgets we have described above almost suffice to complete the proof.
  But as one can see in \Cref{fig:hard_sketch} different circuits may
  need to cross.
  Thus, we place another gadget, a \emph{junction}, at the crossing
  of two circuits which ensures that the property, that one must
  select all occurrences of $x$ or $z$ in each circuit is always maintained.
  See \Cref{fig:junction} for a sketch of such a junction gadget.

  We describe a junction at the crossing of two circuits $C_i$ and$ C_j$,
  where $C_i$ is the circuit going through the junction horizontally
  and $C_j$ goes through the junction vertically. We construct the
  junction directly after some $c_i^s$ and $c_j^t$. Without loss of
  generality we assume that $c_i^s$ is to the left of the junction
  and $c_j^t$ is above the junction. We place four central
  points $p_{x,x}, p_{x,z}, p_{z,x}, p_{z,z}$ on the corners of a square with side length $2$.
  On the right side of the square we proceed with circuit $C_i$,
  placing the next point $c_i^{s+1}$ and continuing the circuit as usual
  by placing the next triple $x_i^{s+2}, y_i^{s+2}, z_i^{s+2}$.
  We proceed analogously for $C_j$ on the bottom with $c_j^{t+1}$ and $x_j^{t+2}, y_j^{y+2}, z_j^{t+2}$.
  Note that the square will be placed such that $||p_{x, x} - c_i^s|| = ||p_{x, z} - c_i^s|| = 4$, similarly
  $||p_{x, x} - c_j^t|| = ||p_{z, x} - c_j^t|| = 4$, $||p_{z, x} - c_i^{s+1}|| = ||p_{z, z} - c_i^{s+1}|| = 4$, and
$||p_{x, z} - c_j^{t+1}|| = ||p_{z, z} - c_j^{t+1}|| = 4$. Every other distance from any $p$ to any other point in a circuit
  will be larger than $4$.

  As the notation suggests, we now want to establish that for a set $S$ of appropriate size covering all points, $p_{x, x} \in S$ exactly if
  in $C_i$ and $C_j$ all $x$ are selected, $p_{z, x}$ is selected exactly if all $z$ are selected in $C_i$ and all $z$ are selected in $C_j$, while $p_{x, z} \in S$ when all $x$ are selected in $C_i$
  while all $z$ are selected in $C_j$, and $p_{z, z} \in S$ exactly if all $z$ are selected in $C_i$ and $C_j$. We only describe the argument for $p_{x, x}$ but the reasoning for the other cases is analogous.

  Suppose $A$ consists of the union of two circuits $C_i$ and $C_j$ and a junction $J$ between the two circuits. $C_i$ (resp. $C_j$) contains $l_i$ (resp. $l_j$) copies of all four $x,y,z,c$ and one more copy of $c$ (the point $c_i^{s+1}$ resp. $c_j^{t+1}$) added by the junction. Together with the $4$ points $p$ in the central square of the junction. So, $ |A| = 4 l_i + 4 l_j + 6$. Now consider some set $S$ with $|S| = l_i + l_j + 1$ that covers all points of $A$. Any point in $C_i$ or $C_j$ covers at most $4$ unique points of $A$.
  Therefore, one of $p_{x,x}, p_{x,z}, p_{z,x}, p_{z,z}$ must be contained in $S$, otherwise not all points would be covered. We assume $p_{x, x} \in S$, the other cases are similar. $p_{x, x}$ covers $p_{x,x}, p_{x,z}, p_{z,x}$ and $p_{z,z}$, as well as $c_i^s$ and $c_j^t$. Since $S$ must contain $l_i + l_j$ additional points and there are $4 \cdot l_i + 4 \cdot l_j$ points left to be covered, any other point must cover exactly $4$ unique remaining points. In particular, $z_i^s$ must be covered.
  One can check that the only point covering $z_i^s$ and 4 unique points in total is $x_i^s$, since $c_i^s$ is already covered. Therefore $x_i^s \in S$.
  Continuing this argument, if $x_i^s \in S$, the only option to cover $z_{i-1}^s$ is $x_i^{s-1}$.
  Repeating this argument then shows that then implies that in $C_i$ all $x$ must be selected. By similar reasoning, also in $C_j$ all $x$ must be selected.

  \begin{figure}[h]
    \centering
    \includegraphics[width=7cm]{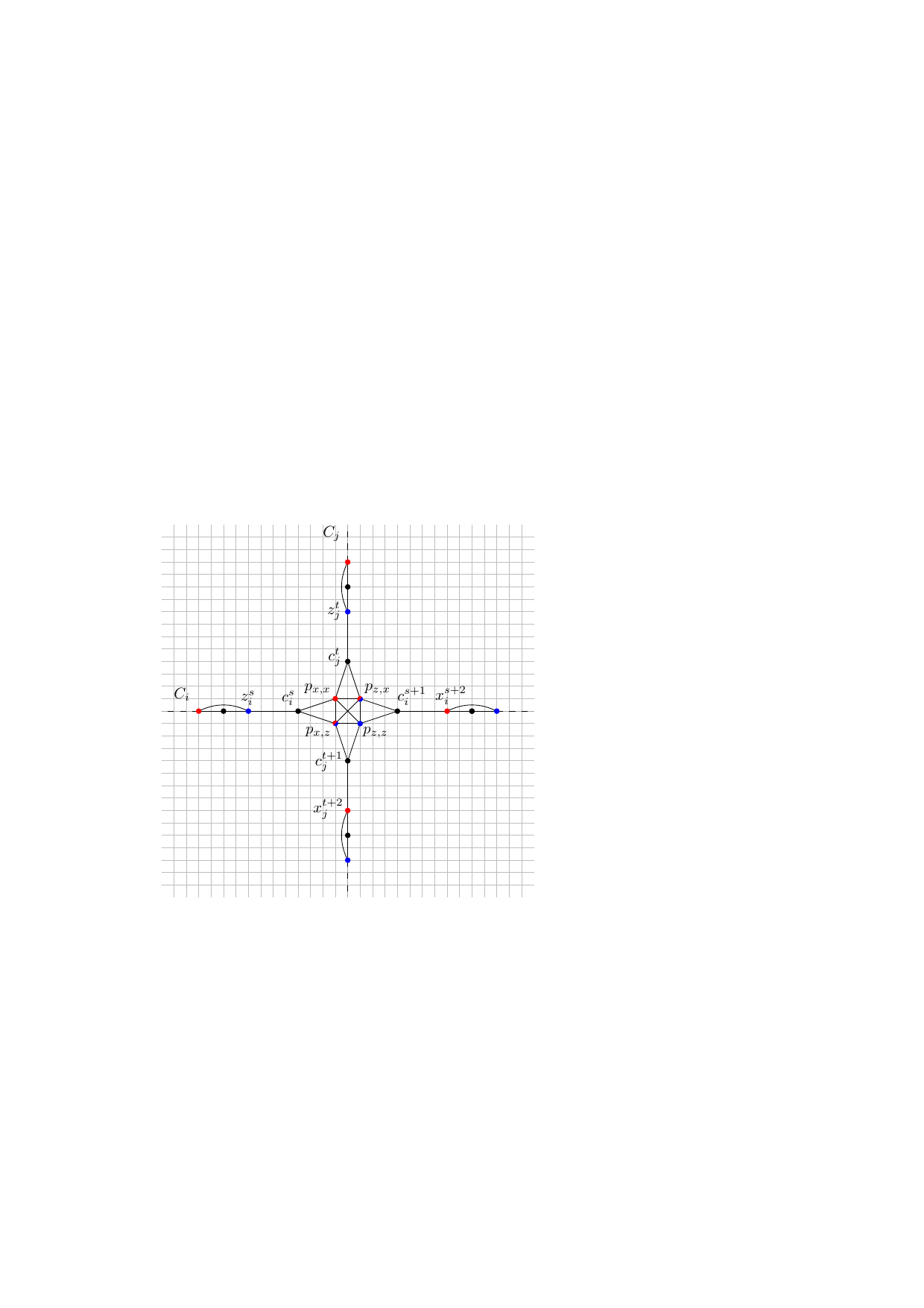}
    \caption{A junction between two circuits $C_i$ and $C_j$. Points
      in the central square are highlighted by which choice of points
      they induce for the circuits. For example: selecting $p_{x, z}$ implies that all $x$ (red) in $C_i$ and all $z$ (blue) in $C_j$ must be selected.
  }
    \label{fig:junction}
  \end{figure}

  To construct the set $A$ from these gadgets we proceed as follows.
  First, insert a series of parallel circuits $C_i$ for each variable $i$.
  They should be tall enough vertically to accomodate a clause gadget for all clauses $W$, so the length should be at least
  $L \cdot |W|$, where $L$ is some large enough constant.
  To make sure that the pattern of $x,y,z,c,$ can loop,  we choose height and width of a circuit such that the overall length of the circuit is divisible by $12$, the total length of one repeating segment of a circuit.
  After this introduce the points $p_W$ in a vertical stack to the right of all the circuits, making sure that the distance between any two $p_W$ is large enough to construct non interfering clause gadgets around each of them.
  Finally, construct each clause gadget by extending a horizontal arm from the three $C_i$ involved in the clause. The arm extending from $C_i$ will cross all circuits to the right of $C_i$,
  before forming the clause gadget around $p_W$.
  We then need to ensure that we place a junction whenever an arm crosses another circuit, and make sure that the closest point to $p_W$ in the arm must be $x$ or $z$ depending on whether setting $i$ to
  $1$ or $0$ fulfills $W$. To ensure these properties, we insert an appropriate number of \emph{creases} (see \Cref{fig:crease}) into straight line segments of a circuit.
  These are minor modifications to the circuit, that don't change the cover relation but slightly shift the cyclic sequence along a sequence. By inserting an appropriate number of creases in a long enough segment, we can ensure that e.g. some $x_i^t$ is always at the desired position, thus allowing us to construct junctions and clauses exactly where needed, without having to consider parity issues in the cyclic sequence.
  Note that for any given gadgets this always requires at most $12$ creases, since the repeating segment $x,y,z,c$ has a length of $12$, so we ensure initially that we always have enough space to place up to $12$ creases between any two junction or clause gadgets.

  \begin{figure}[h]
    \centering
    \includegraphics[width=9cm]{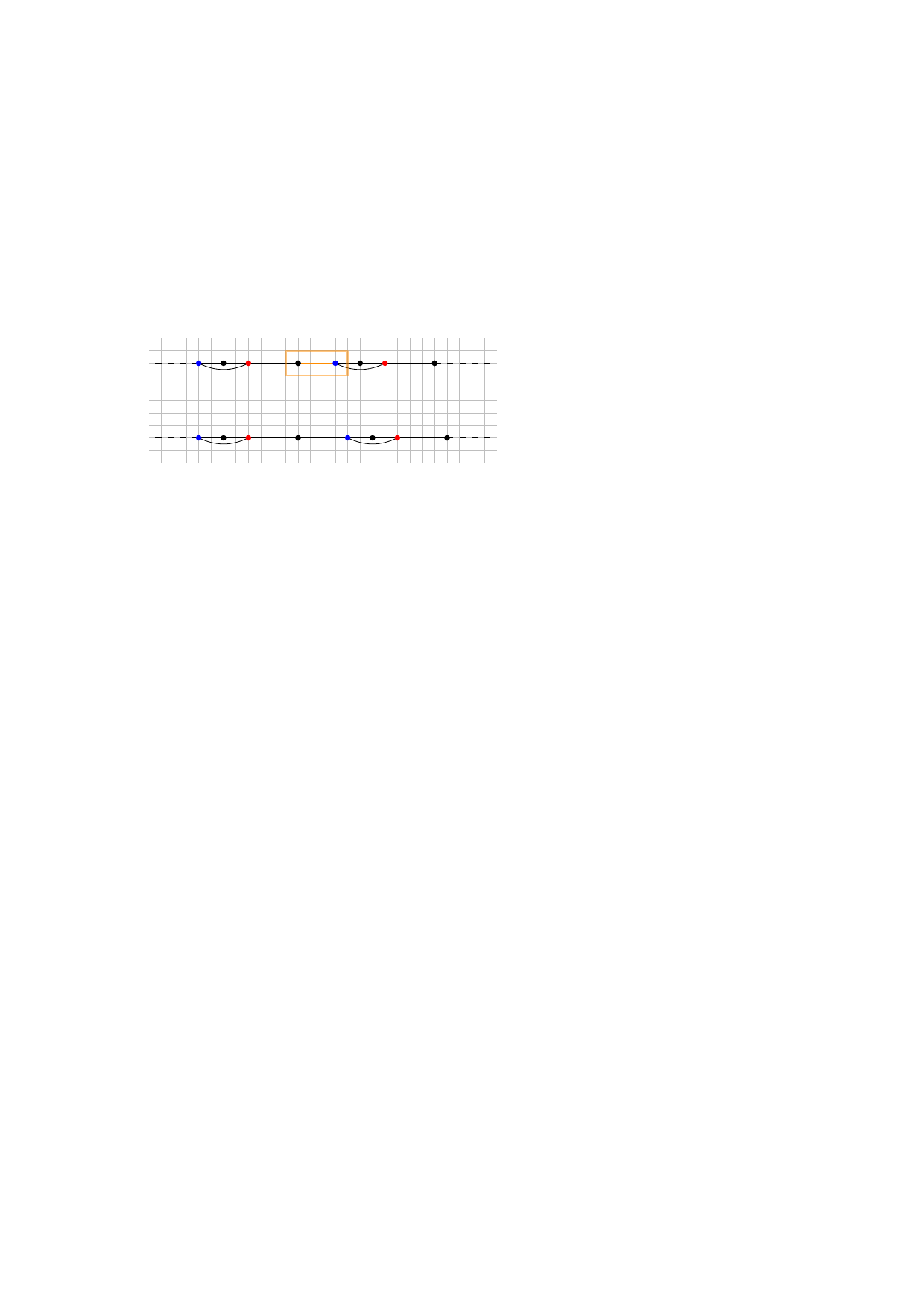}
    \caption{Two circuits running in parallel. In the top circuit a
      crease is highlighted. Note that the cyclic sequence $x,y,z,c$ is
      shifted  forwards by exactly one position compared to the circuit
    without a crease.}
    \label{fig:crease}
  \end{figure}

  Now let $Q = (V, \mathcal{W})$ be a 3SAT formula, and let $A$ be the set of
  points obtained via the described construction.
  Let $C_1, \dots, C_{|V|}$ be the set of circuits in $A$, each
  circuit $C_i$ containing $l_i$ copies of the set of points $x,y,z, c$.
  Let $n_J$ be the number of junctions in $A$.
  We claim that $Q$ is satisfiable if and only if there exists a
  subset of $S$ of $A$ with $|S| = \sum_{i \in V}l_i + n_J$ such that
  $I_C(S, A) \leq 4$.

  For the forward direction let $M$ be a satisfying assignment. For
  each variable $i \in V$, $S$ will contain all $x_i^t$, if $i$ is
  set to $1$ in a satisfying assignment or all $z_i^t$ otherwise.
  In addition, for a junction $J$ between $C_i$ and $C_j$, add
  $p_{v_i, v_j}$, where $v_i = x$ if all $x_i^t$ are selected
  and $v_i = z$, if all $z_i^t$ are selected.
  By construction, all circuits and junctions are then covered. Every
  point $p_W$ is covered, since every clause is fulfilled and we
  therefore have selected one of the points covering $p_W$ for at
  least one of the adjacent circuits.

  For the backward direction, let $S$ cover every point $A$. In
  particular, $S$ must  cover all circuits and junctions. Only at
  most $n_j$ points can cover $6$ unique points, and any point other point
  can cover at most $4$ points of a circuit. Therefore, to cover the
  $4 \sum_{i \in V} l_i + 6 n_J$ points from junctions and circuits in
  $A$, we must select exactly one point in every junction, and $l_i$
  points of circuit $C_i$. The arguments above then imply that we must have
  selected all $x$ or all $z$ for each circuit $C_i$. Indeed, $S$ cannot
   contain any $p_W$, since $p_W$ only covers three
  points lying in any circuit. In turn, this yields a valid
  assignment of the variables of $Q$ based on the selection of
  $x$ or $z$. To finish the proof, observe that, since the $p_W$ must
  be covered, this assignment must be a satisfying assignment. We
  have chosen one of the points covering $p_W$ for every $W$ and
  therefore made an assignment to a literal satisfying clause $W$.
\end{proof}

In the proof above note that the constructed set $A$ only uses
integral points. For uniformity it was not explicitly mentioned in
the original work of \citet{wangkuo1988} that such a construction
only requires integral points, but using such a construction will
turn out to be convenient in future proofs. Therefore we state it formally here.

\begin{corollary}\label{cor:integral}
  \textsc{Discrete $k$-Center} and \textsc{$p$-Dispersion} are
  NP-Complete in $\R^2$ equipped with the Manhattan distance, even if
  all points are integral, placed on a grid whose size is bounded by
  a polynomial in the number of points and a fixed distance threshold of $4$.
\end{corollary}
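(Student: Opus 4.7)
The plan is to prove the corollary by carefully inspecting the construction from the proof of \Cref{thm:kcenter_hard}, verifying that every placed point lies on the integer grid, and bounding the total grid size. For the $k$-center part, the argument essentially amounts to a careful bookkeeping exercise: within each circuit, consecutive points in a block $x,y,z,c$ are at Manhattan distances $2, 2, 4$ along axis-aligned segments, and blocks are connected by another step of length $4$, so all circuit points can be placed at even integer coordinates. Right-angle turns preserve Manhattan distances and integrality. The junction gadget introduces a central $2\times 2$ square of points $p_{x,x}, p_{x,z}, p_{z,x}, p_{z,z}$, again on the integer grid, and the crease modification is a local axis-aligned detour that only shifts parity along the circuit. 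Thus every gadget can be laid out with integer coordinates and the fixed covering threshold stays exactly $4$.

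Next, I would bound the grid size. Let $n=|V|$ and $m=|\mathcal{W}|$. Place the $n$ variable circuits as vertical strips of height $\Theta(m)$ (each circuit must host up to $3m$ horizontal arms). The circuits are stacked horizontally, using width $\Theta(n)$, and the $p_W$ points are stacked to the right in a strip of height $\Theta(m)$. Clause arms and junctions each occupy $O(1)$ area, and at most $12$ creases are inserted between any two gadgets, adding only constant length per arm. Summing up, the enclosing rectangle has side lengths $O(n+m)$, which is polynomial in the size of $A$ as required. Correctness (i.e., the biconditional with satisfiability) is inherited verbatim from the proof of \Cref{thm:kcenter_hard}, and membership of \textsc{Discrete $k$-Center} in NP is immediate.

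For the $p$-Dispersion part, the plan is to invoke the reduction of \citet{wangkuo1988} from \textsc{3SAT}, which follows the same style of circuit/clause construction with gadgets placed on an axis-aligned pattern. One would re-examine that construction and note that (i) all coordinates can be taken to be integers (up to a global scaling of the unit distance), (ii) the resulting point set sits in a rectangle of size polynomial in the number of variables and clauses, and (iii) the dispersion threshold is a fixed constant that, after rescaling, can be taken to equal $4$. Rescaling to clear denominators preserves NP-hardness because the Manhattan distance scales linearly. Alternatively, if re-checking Wang--Kuo's reduction is inconvenient, one can mimic the construction in \Cref{thm:kcenter_hard}: use the same circuits, junctions, and clause gadgets, but argue that $\sum_i l_i + n_J$ selected points achieve minimum pairwise Manhattan distance $\geq 4$ iff the formula is satisfiable, leveraging the duality between $k$-center and $p$-dispersion on these rigid gadget structures.

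The main obstacle, and the only place where real work is needed, is ensuring that the polynomial grid bound survives the crease insertions and the cumulative alignment constraints between many clause arms and junctions: one must argue that the length of each straight segment can always be chosen large enough (but still $O(m)$) to accommodate up to $12$ creases wherever needed, without forcing cascading length increases elsewhere. A uniform upper bound on the spacing between consecutive junctions/clauses, chosen once at the start as a constant multiple of the repeat period $12$, suffices for this and keeps the final grid size polynomial.
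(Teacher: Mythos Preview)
Your treatment of the \textsc{Discrete $k$-Center} half is correct and matches the paper: the paper simply observes that the construction in \Cref{thm:kcenter_hard} already uses integral points and threshold~$4$, and gives essentially the same polynomial grid bound you sketch (horizontal extent linear in the number of circuits, vertical extent linear in the number of clauses).

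For the \textsc{$p$-Dispersion} half, there is a gap. The reduction of \citet{wangkuo1988} is for the \emph{Euclidean} distance, not the Manhattan distance, so one cannot simply ``re-examine'' it and rescale: the circuit, clause, and junction gadgets must be redesigned so that the relevant distance relations (e.g., which pairs are within the threshold and which are not) hold under $\ell_1$ rather than $\ell_2$. This is precisely the work the paper does---it exhibits Manhattan-compatible, integer-coordinate versions of the circuit, clause, and junction gadgets (in a figure), and only after that step does the polynomial-grid and fixed-threshold-$4$ claim follow. Your phrasing ``all coordinates can be taken to be integers (up to a global scaling of the unit distance)'' suggests you are treating this as a bookkeeping issue, but scaling a Euclidean construction does not make it a Manhattan construction.

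Your alternative route---reusing the \Cref{thm:kcenter_hard} gadgets directly and appealing to a ``duality between $k$-center and $p$-dispersion on these rigid gadget structures''---does not work as stated. The only general relation available (the one the paper proves in its appendix) says that the optimal $k$-center value with $k$ points and the optimal $p$-dispersion value with $k+1$ points differ by at most a factor of~$2$; this is far too weak to transfer hardness at a fixed integral threshold. If you want to avoid redesigning Wang--Kuo's gadgets, you would still need to argue \emph{for these specific gadgets} that a dispersion-$\geq 4$ selection exists iff the formula is satisfiable, which amounts to reproving a $p$-dispersion reduction from scratch rather than invoking any duality.
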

\begin{proof}
  For \textsc{discrete $k$-Center} it is enough to see that in the proof of
  \Cref{thm:kcenter_hard} we only place integral points and a fixed threshold of $4$.

  For \textsc{$p$-Dispersion} we modify the original proof by
  \citet{wangkuo1988} for the euclidean distance to a proof for the
  manhattan distance, while placing points only on integral coordinates.
  Through analyzing the original proof, it becomes clear that all
  that is required are minor modifications to the circuits,
  junctions, and clauses. We show how to modify these constructions
  in \Cref{fig:integral_sketch}.

  \begin{figure}
    \centering
    \begin{minipage}[c]{.25\textwidth}
      \includegraphics[width=0.9\textwidth]{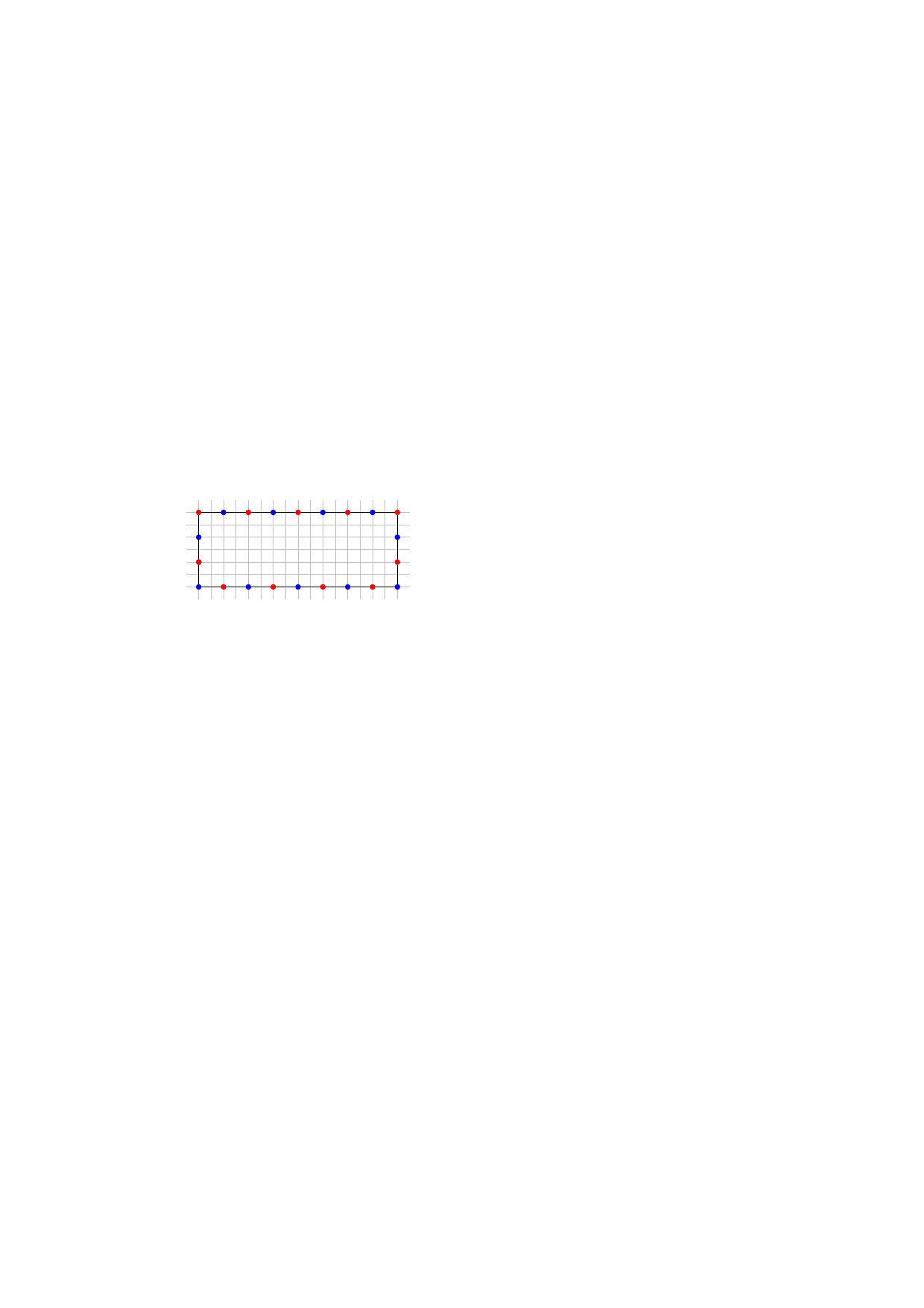}
    \end{minipage}
    \hfill
    \begin{minipage}[c]{.45\textwidth}
      \includegraphics[width=0.9\textwidth]{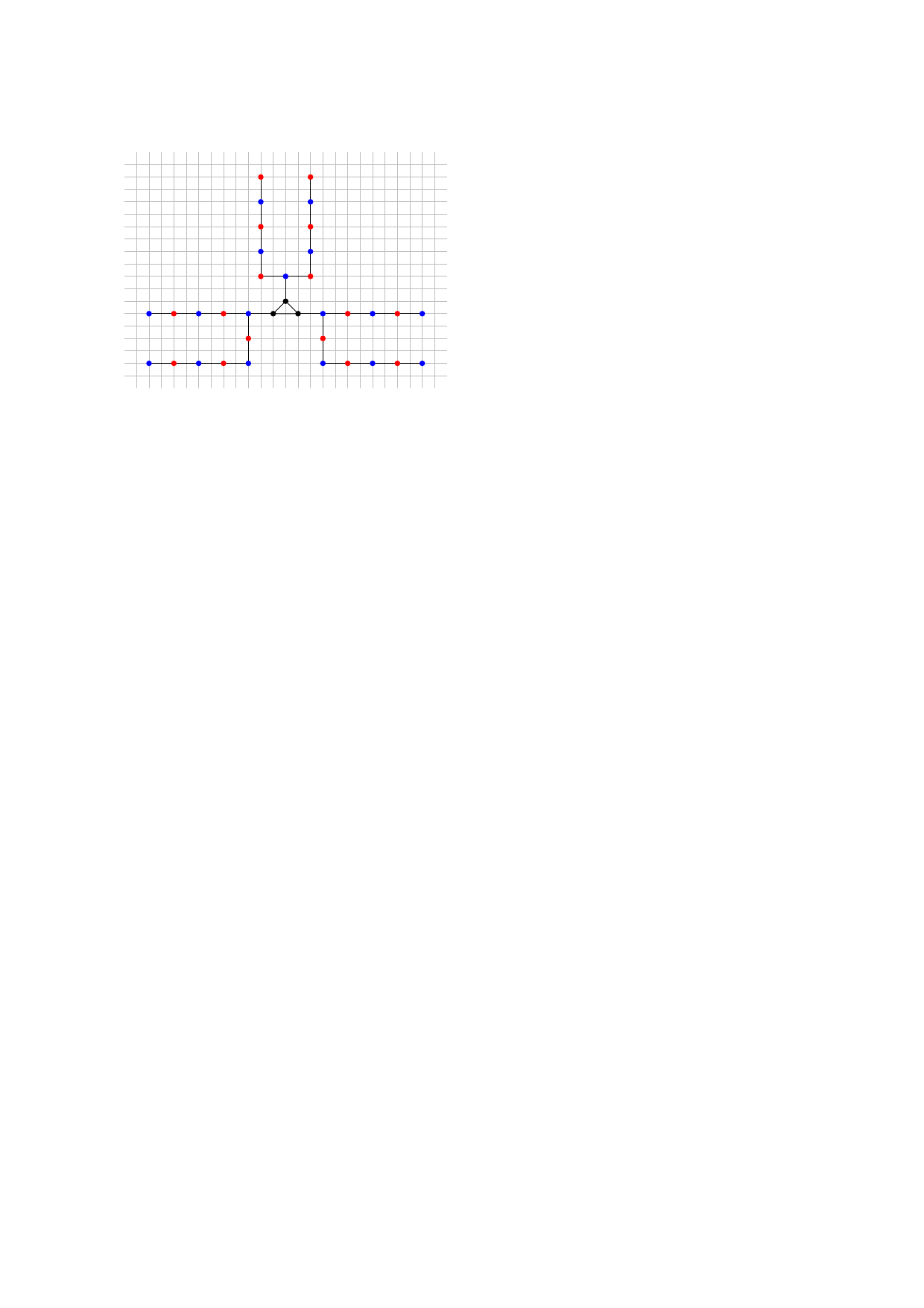}
    \end{minipage}
    \hfill
    \begin{minipage}[c]{.25\textwidth}
      \includegraphics[width=0.95\textwidth]{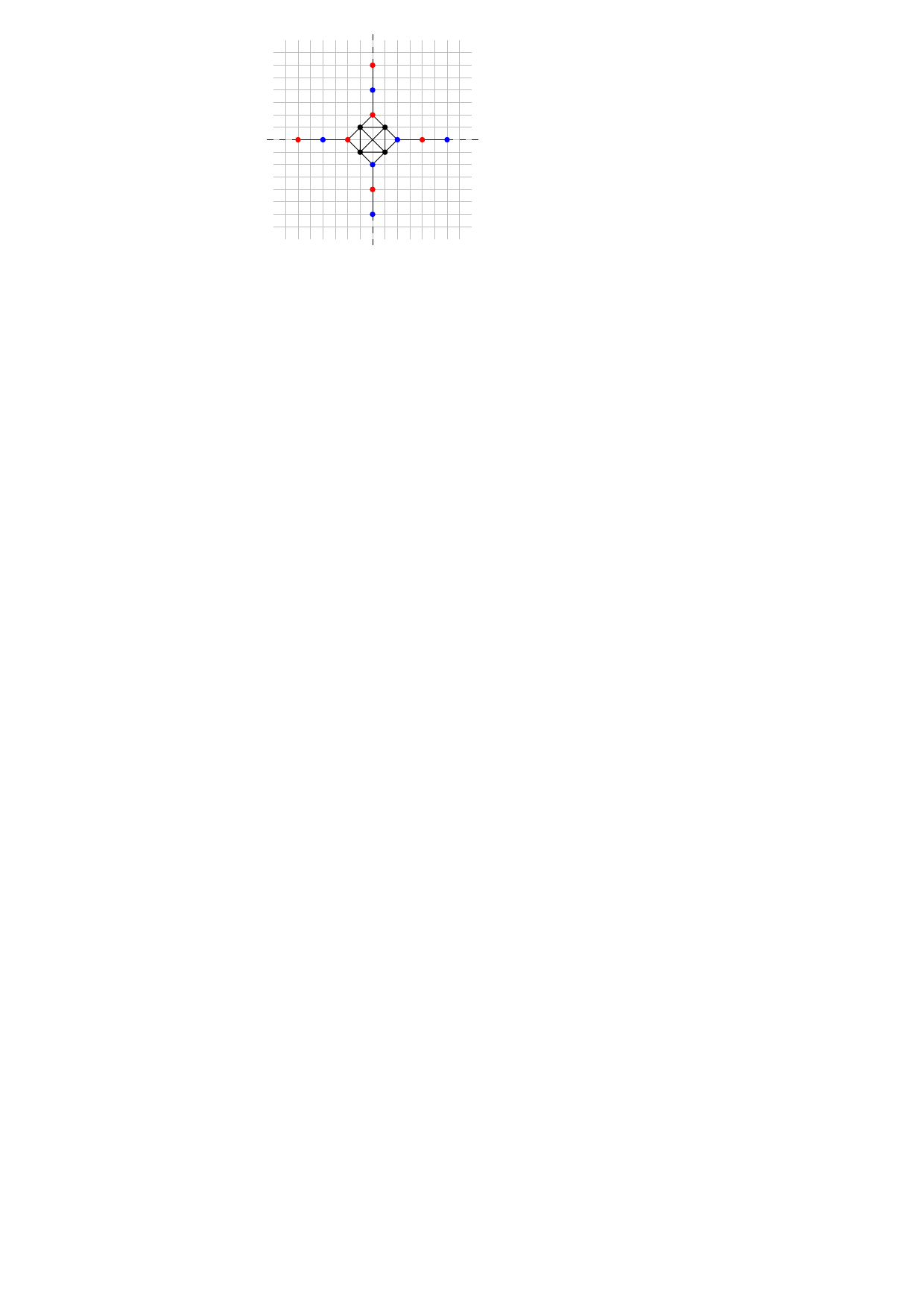}
    \end{minipage}
    \caption{Circuit, Clauses, and junctions in the modified proof
      that $p$-dispersion is NP-Complete in Dimension $2$ under the
    manhattan distance. The goal is to find a subset $S$ with $I_U(S) \geq 4$.}
    \label{fig:integral_sketch}
  \end{figure}

  To see that the highest occurring coordinate (assuming all
  coordinates are positive) of some point is bounded by a polynomial,
  note that the coordinates horizontally are bounded by a linear
  function in the number of circuits and the coordinates vertically
  are bounded by a linear function in the number of clauses, both of
  which are smaller than the total number of points.
\end{proof}

\subsection{Hardness for Pareto Pruning in fixed dimension}

\dimthreeprune*
\begin{proof}
  We denote the scalar product between vectors $x,y \in \R^d$ by
  $\langle x, y \rangle = \sum_{i = 1}^d x_i \cdot y_i$. Let $\veps >
  0, n = (1, \veps, \veps) \in \R^3$. Let $H \subseteq \R^3$ be the
  hyperplane orthogonal to $n$ and let $x, y \in H$.
  We claim that neither $x$ dominates $y$ or $y$ dominates $x$:
  Without loss of generality suppose $x$ dominates $y$, then
  \[ 0 < (x_1 - y_1) + \veps (x_2 - y_2) + \veps (x_3 - y_3) =
  \langle x, n \rangle - \langle y, n \rangle.\]
  This implies that one of $\langle x, n \rangle$ or $\langle y, n
  \rangle$ must not be $0$, proving the claim by contraposition.

  Now note that $e_1 \coloneq (-\veps, 0, 1)$ and $e_2 \coloneq
  (-\veps, 1, 0)$ form a basis for $H$. For some $A \in \R^2$ consider the map
  $f: A \to H, (x_1, x_2) \mapsto x_1 e_1 + x_2 e_2$. To show
  hardness for \textsc{Uniformity Pareto Pruning} and
  \textsc{Coverage Pareto Pruning}, we want to adapt the hardness
  proofs from \Cref{cor:integral}.

  For coverage, let $Q = (V, \mathcal{W})$ be a SAT instance.
  Construct the set of points $A_Q$ in $\R^2$, as in the proof of
  \Cref{thm:kcenter_hard}. Now, setting $\veps = \frac{1}{8}$ and
  mapping $A_Q$ through $f$ we claim that $I_C(S, A_Q) \leq 4 \iff
  I_C(f(S), f(A_Q)) \leq \frac{9}{2}$.

  Note that all points in the reduction in \Cref{thm:kcenter_hard}
  are integral, thus, their distances are also integral. Therefore,
  if $I_C(S, A_Q) > 4$, it follows that $I_C(S, A_Q) \geq 5$. Now for
  any $x, y \in A_Q$ with $||x - y|| \leq 4$ we get
  \begin{multline*}
    ||f(x) - f(y)|| = \veps | x_1 - y_1 + x_2 - y_2 | + |x_1 - y_1| +
    |x_2 - y_2| \leq \veps ||x - y|| + ||x - y|| = \left(1 +
    \frac{1}{8}\right) 4 = \frac{9}{2}
  \end{multline*}
  on the other hand for $x, y \in A_Q$ with $||x - y|| \geq 5$, we get
  \[||f(x) - f(y)|| \geq ||x - y|| \geq 5\]
  therefore $I_C(S, A_Q) \leq 4$ if and only if $I_C(f(S), f(A_Q))
  \leq \frac{9}{2}$.
  Since no two points in $f(A_Q)$ dominate another, this means that
  $f(A_Q)$ is a valid input for \textsc{Coverage Pareto Pruning} and
  therefore $Q$ is satisfiable if and only if there exists some
  subset $S' \subseteq f(A_Q)$ such that $I_C(S', A_Q) \leq \frac{9}{2}$.

  To show hardness for \UPP{} we employ a similar argument using the
  reduction in \Cref{cor:integral}. Here, we have that a SAT instance
  $Q = (V, \mathcal{W})$ is satisfiable if and only if $I_U(S, A_Q)
  \geq 4$. Similarly to above we get that $||x - y|| \geq 4$ implies
  $||f(x) - f(y)|| \geq 4$ and $||x -y || \leq 3$ implies $||f(x) -
  f(y)|| \leq \veps || x - y || + || x - y|| \leq 3 + \frac{3}{8} < 4$.
  Therefore $Q$  is satisfiable if and only if there exists a subset
  $S \subseteq A_Q$, such that $I_U(S) \geq 4$.
\end{proof}

\dcdimtwoeasy*
\begin{proof}
  We present an algorithm for two objectives. For one objective, the
  problems are trivial, since there is only one unique Pareto optimal
  solution. Let $A = \{a_1, a_2, \dots, a_n\}$ be the set of Pareto
  optimal alternatives, where $a_i = (x_i, y_i)$ and $x_1 \leq x_2
  \leq \dots \leq x_n$.
  Since all points in $A$ are Pareto optimal this implies $y_1 \geq
  y_2 \geq \dots \geq y_n$. Also define $A_i \coloneq \{a_1, \dots, a_i\}$
  for $i \in [n]$.
  Let $\delta_{ij} \coloneq \max_{j < t < i} \min(||a_t - a_i||_+,
  ||a_t - a_j||_+)$.

  For $i \in [n]$, declare a table $T$ via
  \[T[i, l] \coloneq \min_{S \subseteq A_i, |S| = l, a_i \in S}
  I_{DC}(S, A_i). \]
  A recursion for $T$ is given by
  \[T[i, l] = \min_{j = 1}^{i-1} \max(T[j, l-1], \delta_{ij}).\]
  To see why, let $S_i \in \argmin_{S \subseteq A_i, |S| = l, a_i \in
  S} I_{DC}(S, A_i)$, $j = \max\{j \mid a_j \in S_i \setminus
  \{a_i\}\}$ and $S_j = S_i \setminus \{a_i\}$.
  Observe that $I_{DC}(S_i, A_i) = \max\{I_{DC}(S_j, A_j),
  \delta_{ij}\}$. The idea is that for elements $a_{j'}$ with $j' <
  j$ the closest element of $S_i$ cannot be $a_i$, since $a_j \in
  S_i$ and $j' < j$. The $\delta_{ij}$ term then accounts for the cost of the remaining elements that still
  need to be covered.
  To initialize $T$, we set $T[i, 1] = ||a_1 - a_i||_+$ for all $i \in [n]$.

  Note that we can recover the optimal objective value from $T$ using
  the following formula: $\min_{S \subseteq C, |S| = k} I_{DC}(S, A)
  = \min_{i \in [n]} \max\{T[i,k], x_n - x_i\}$:
  We iterate over all possible options for a rightmost point of $S$
  and then compare the optimal solutions.

  Filling the table naively using dynamic programming we first need to sort the $a_i$ by
  their $x$ coordinate. Then we require a runtime of $n^3$ to determine
  all $\delta_{ij}$. $T$ has $n\cdot k$ entries, each of which takes
  time $O(n)$ to fill. Thus we get an overall runtime of $O(n^3 + n^2k)$.
  This runtime can be improved by employing the techniques
  \citet{DBLP:journals/cor/VazPFKS15} use to refine the runtime in
  their algorithm for \textsc{Coverage Pareto Pruning}, to yield an improved total runtime of
  $O(nk + n\log n)$.
\end{proof}

\begin{lemma}\label{lem:directed_trigrid}
    Let $\Gamma = (V, E)$ be the graph of the infinite hexagonal grid (see \Cref{fig:tri_sketch}) with 
    {\footnotesize \begin{itemize}
        \item $V = \{(i,j) \mid i, j \in \Z\}$,
        \item $E = \left\{\bigl((i,j), (i',j')\bigr) \mid (i,j) \in V, (i',j') \in \{(i+1,j), (i+1, j-1), (i, j-1), (i-1, j), (i-1,j+1), (i, j+1)\}\right\}$.
    \end{itemize}}
    Denote the length of a shortest path between two vertices $v,w$ in $\Gamma$ by $d_\Gamma(v, w)$.
    There is an embedding $f:V \to \R^3$ such that $d_\Gamma(v, w) = ||f(v) - f(w)||_+$.
\end{lemma}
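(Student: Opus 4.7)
The plan is to exhibit the explicit embedding $f(i,j) := (i,\; j,\; -(i+j))$ and verify the identity by showing both sides equal $\max(|a|,|b|,|a+b|)$, where $(a,b) := (i-i',\, j-j')$ for $v=(i,j)$, $w=(i',j')$. Since both $d_\Gamma$ and $||\cdot||_+$ are translation-invariant in this sense, I only need to handle the case $v=(0,0)$, $w=(a,b)$.

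For the right-hand side, a direct computation gives $||f(v)-f(w)||_+ = \max(a,0) + \max(b,0) + \max(-(a+b),0)$. Using $\max(x,0) = (x + |x|)/2$ and noting that $a + b + \bigl(-(a+b)\bigr) = 0$, the linear parts cancel, leaving $\tfrac{1}{2}\bigl(|a| + |b| + |a+b|\bigr)$. I would then establish the elementary identity: for any reals $x,y,z$ with $x+y+z=0$, $|x|+|y|+|z| = 2\max(|x|,|y|,|z|)$. This follows by taking $|x|$ to be the maximum, noting that $x = -(y+z)$ forces $y$ and $z$ to have the same sign (otherwise $|x| = |y+z| < \max(|y|,|z|)$, contradicting maximality), and then $|y|+|z| = |y+z| = |x|$. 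Applied to $(x,y,z) = (a, b, -(a+b))$, this gives $||f(v)-f(w)||_+ = \max(|a|, |b|, |a+b|)$.

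For the left-hand side, I would derive the classical hexagonal-grid distance formula $d_\Gamma((0,0),(a,b)) = \max(|a|, |b|, |a+b|)$ as follows. The six edge directions are $\pm u_1, \pm u_2, \pm u_3$ with $u_1=(1,0)$, $u_2=(0,1)$, $u_3=(1,-1)$; any path is determined by integers $\alpha,\beta,\gamma$ with $(a,b) = \alpha u_1 + \beta u_2 + \gamma u_3 = (\alpha+\gamma,\; \beta-\gamma)$, and has length at least $|\alpha|+|\beta|+|\gamma|$, with equality realizable. Fixing $\gamma$, we get $\alpha=a-\gamma$ and $\beta=b+\gamma$, so $d_\Gamma((0,0),(a,b)) = \min_{\gamma\in\mathbb{Z}}\bigl(|a-\gamma|+|b+\gamma|+|\gamma|\bigr)$. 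The objective is piecewise linear with kinks at $\gamma\in\{0, a, -b\}$, and evaluates to $|a|+|b|$, $|a|+|a+b|$, and $|b|+|a+b|$ respectively. The minimum equals $\bigl(|a|+|b|+|a+b|\bigr) - \max(|a|,|b|,|a+b|)$, which by the identity above is $2M - M = M$ where $M=\max(|a|,|b|,|a+b|)$. Combining both sides yields $d_\Gamma(v,w) = ||f(v)-f(w)||_+$.

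The main obstacle is bookkeeping rather than depth: choosing the right third coordinate so that the ``missing'' hex-grid direction $u_3 = u_1 - u_2$ is correctly matched, and making sure the elementary identity for zero-sum triples is proved cleanly so that the two sides of the claim are reduced to the same closed-form expression. The formula is already standard in the literature (Luczak and Rosenfeld, 1976), and could alternatively be invoked instead of re-derived.
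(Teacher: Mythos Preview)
Your proposal is correct and uses exactly the same embedding as the paper, $f(i,j)=(i,j,-(i+j))$ (the paper writes this as $i\,e_1+j\,e_2$ with $e_1=(1,0,-1)$, $e_2=(0,1,-1)$). The only difference is in the verification: the paper invokes the Luczak--Rosenfeld formula for $d_\Gamma$ in its piecewise form ($|a|+|b|$ when $a,b$ share a sign, $\max(|a|,|b|)$ otherwise) and then checks the match against $\|f(v)-f(w)\|_+$ by a sign-based case split, whereas you derive the distance formula from scratch and collapse both sides to the single closed form $\max(|a|,|b|,|a+b|)$ via the zero-sum-triple identity $|x|+|y|+|z|=2\max(|x|,|y|,|z|)$. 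Your route is a bit more self-contained and sidesteps the case analysis, but the underlying argument is the same.
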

\begin{proof}
    \begin{figure}
        \centering
        \includegraphics[width=6cm]{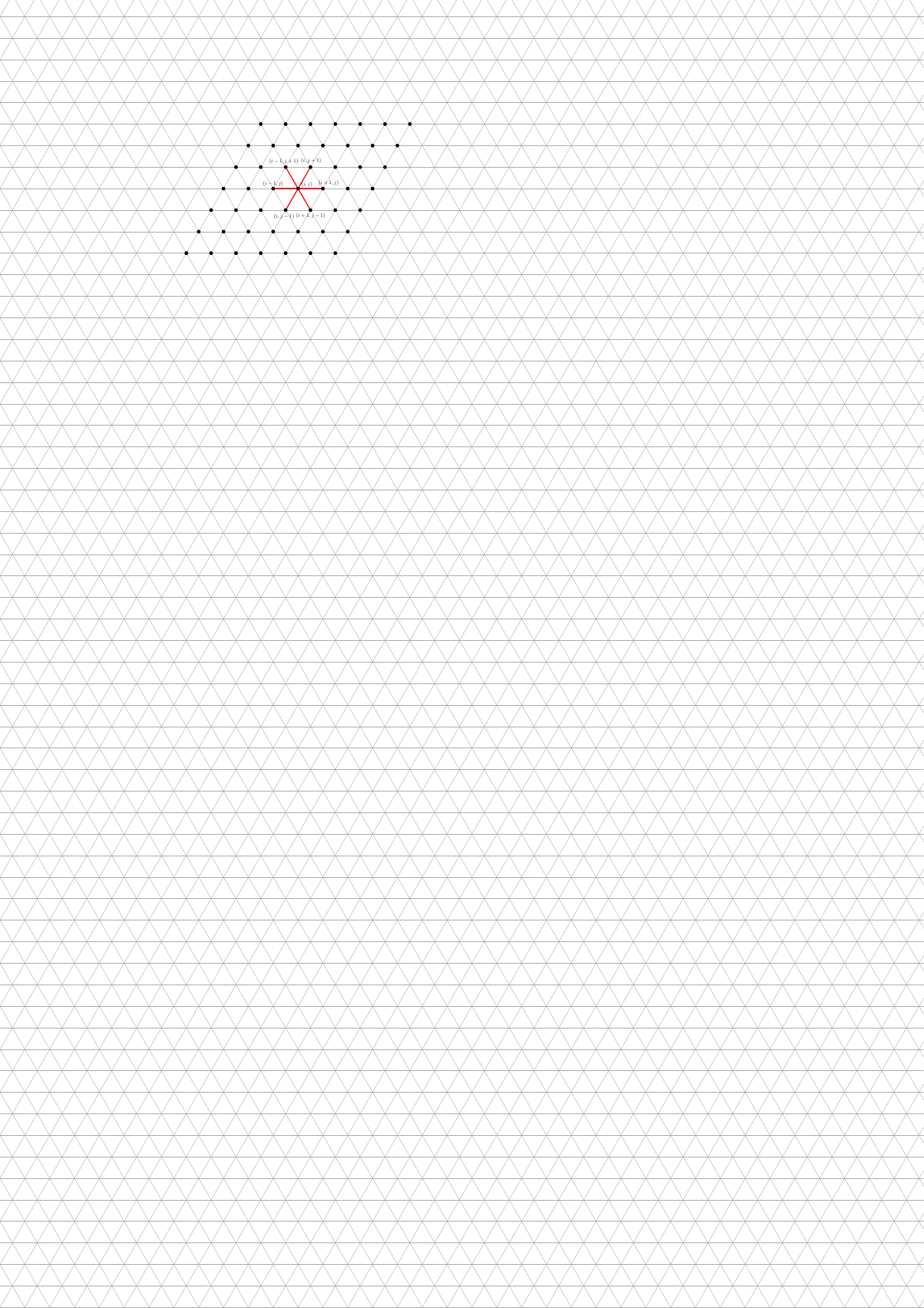}
        \caption{An excerpt of the triangular grid $\Gamma$. The neighborhood of $(i, j)$ is highlighted.}
        \label{fig:tri_sketch}
    \end{figure}
    \citet{luczak1976} have characterized $d_\Gamma$ as
    \[d_\Gamma((i,j), (h,k)) = \begin{cases} |i-h| + |j - k|, &\text{if } i-h \text{ and } j-k \text{ have the same sign}, \\ \max\{|i-h|, |j-k|\}, &\text{ otherwise}.\end{cases}\] 
    Let $e_1 = (1, 0, -1), e_2 = (0, 1, -1) \in \R^3$. We claim that the map $f:V \to \R^3, (i, j) \mapsto i \cdot e_1 + j \cdot e_2$ fulfills the desired properties. Let $(i, j), (h, k) \in V$ such that $i-h$ and $j -k$ have the same sign, then 
    \begin{align*}
        & ||f\bigl((i, j)\bigr) - f\bigl((h, k)\bigr)||_+ = ||i \cdot e_1 + j \cdot e_2 - h \cdot e_1 - k \cdot e_2||_+ \\
        &\qquad = \max\{0, i - h\} + \max\{0, j-k\} + \max\{0, h - i + k - j\}
        = |i - h| + |j-k|,
    \end{align*}
    where the last equality is true as either exactly the first two remain or the last term remains if $i -h$ and $j-k$ have the same sign. If $i - h$ and $j-k$ have different signs then
    \begin{equation*}
        \max\{0, i - h\} + \max\{0, j-k\} + \max\{0, h - i + k - j\} = \max\{|i -h|,|j-k|\}.
    \end{equation*}
    To see that the equality holds, without loss of generality let $| i - h| > |j -k|$. If $i - h > 0$, then all but the first term disappears. If $i - h < 0$, then the two last terms remain and sum up to $h - i = |i - h|$.
\end{proof}
\begin{lemma}\label{lem:trigrid_hard}
    \textsc{Discrete $k$-Center} is NP-Complete, if restricted to the Metric $d_\Gamma$ and $A \subseteq \Z^2$.
\end{lemma}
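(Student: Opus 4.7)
The plan is to prove NP-completeness by adapting the reduction from \Cref{thm:kcenter_hard} (via \Cref{cor:integral}), replacing the Manhattan distance in $\R^2$ with the graph metric $d_\Gamma$ on the triangular grid. Membership in NP is immediate: given a candidate slate and a threshold, we can evaluate $d_\Gamma$ in closed form via the formula of \citet{luczak1976} and verify the covering condition in polynomial time.

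For hardness, I would rebuild the three gadget types used in \Cref{thm:kcenter_hard}---circuits, junctions, and clause gadgets---but now lay them out along the three natural edge directions of $\Gamma$, namely $(1,0)$, $(0,1)$, and $(1,-1)$. Along any single such direction, $d_\Gamma$ coincides with the number of edge steps, so a circuit $C_i$ built from a repeating $x,y,z,c$ pattern with spacings $2,2,4,4$ along one direction retains the key covering property: at threshold $4$, a point covers at most four consecutive circuit vertices, forcing the selection of either all $x_i^t$ or all $z_i^t$. Corners and direction changes would be realized via short bends between the three grid directions; the distances across a bend only involve a constant number of nearby points and can be checked directly against the \citet{luczak1976} formula. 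The junction gadget would again be a small ``decision square'' of four points $p_{x,x}, p_{x,z}, p_{z,x}, p_{z,z}$ placed between the ends of two crossing circuits, with the same counting argument (pigeonhole on the total number of covered vertices) forcing consistency. The clause gadget is again a point $p_W$ placed so that at threshold $4$ it can only be covered from one of three designated positions, each corresponding to a truth assignment that satisfies the clause.

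The main obstacle is the \textbf{geometric routing}: unlike in the Manhattan setting, distances between points whose coordinate differences have opposite signs collapse to an $\ell_\infty$-like value, which is \emph{smaller} than the Manhattan distance. This could create unintended short-range coverings between parts of different gadgets that are meant to be ``far apart''. To address this, I would space gadgets so that all inter-gadget vertex pairs lie at graph distance strictly greater than $4$ in $\Gamma$---using the bound $d_\Gamma(v,w) \geq \max\{|v_1-w_1|,|v_2-w_2|\}$ it suffices to separate gadgets by more than $4$ in each coordinate. Within a gadget, I would verify case by case that all \emph{intended} distances of $2$ and $4$ genuinely equal the corresponding $d_\Gamma$ values (this is automatic when the points in question lie on a common grid line). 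As in the Manhattan proof, creases can be inserted into long straight segments of each circuit to keep the cyclic $x,y,z,c$ pattern aligned with the junction and clause positions.

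The overall correctness argument and polynomial-size bound then mirror those in \Cref{thm:kcenter_hard} and \Cref{cor:integral}: the reduction produces an instance with $A \subseteq \Z^2$ whose coordinates are polynomially bounded, and a 3SAT formula $Q$ is satisfiable iff there is a subset $S \subseteq A$ of size $\sum_i \ell_i + n_J$ with $\max_{a \in A}\min_{s \in S} d_\Gamma(s,a) \leq 4$.
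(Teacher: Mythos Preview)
Your proposal is correct and follows essentially the same approach as the paper: both rebuild the circuit, junction, clause, and crease gadgets from the Manhattan-distance reduction of \Cref{thm:kcenter_hard} on the triangular grid, verifying that the local covering relations are preserved under $d_\Gamma$. The only notable difference is cosmetic: the paper scales the construction down and works with coverage threshold~$2$, whereas you retain threshold~$4$ with the original $2,2,4,4$ spacing; either choice works once the handful of local distances at bends and gadget interfaces are checked against the \citet{luczak1976} formula.
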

\begin{proof}
    We employ a reduction from \textsc{3SAT} similar to the proof of \Cref{thm:kcenter_hard}. The key difference here is that, instead of considering points in $\R^2$ with the manhattan distance,
    distances are now given by $d_\Gamma$.
    To prove the theorem we therefore modify the gadgets and distance threshold from \Cref{thm:kcenter_hard} to comply with this modified distance metric.
    For $s, a \in \Z^2$ we will say $s$ covers $a$ whenever $d_\Gamma(s, a) \leq 2$. In general, we construct circuits, clauses, junctions and creases similarly to \Cref{thm:kcenter_hard},
    but they need to be modified to work with $d_\Gamma$ and the new threshold. In \Cref{fig:tri_gadget} we display how to construct each of these gadgets under $d_\Gamma$.
    Take note that, for each gadget, the graph of points which cover each other is identical to the gadgets employed in \Cref{fig:tri_gadget}.

    Now, given a \textsc{3SAT} formula, we can follow the general construction shown in \Cref{fig:tri_gadget} to construct a set of points $A$ by using the modified gadgets.
    Since the gadgets are analogous to those in \Cref{fig:tri_gadget}, we can reason about the gadgets in the same fashion to establish that there exists an integer $k$ such that the SAT formula has a solution, if and only if there is a subset $S \subseteq A$ such that $\max_{a \in A} \min_{s \in S} d_{\gamma}(s, a) \leq 2$. 
\begin{figure}[h]
    \centering
    \includegraphics[width=5cm]{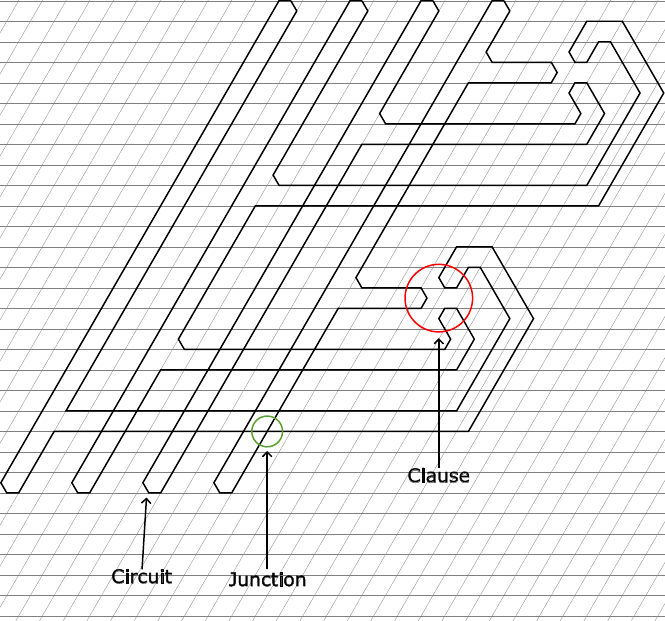}

    \begin{minipage}[c]{.2\textwidth}
        \includegraphics[width=3cm]{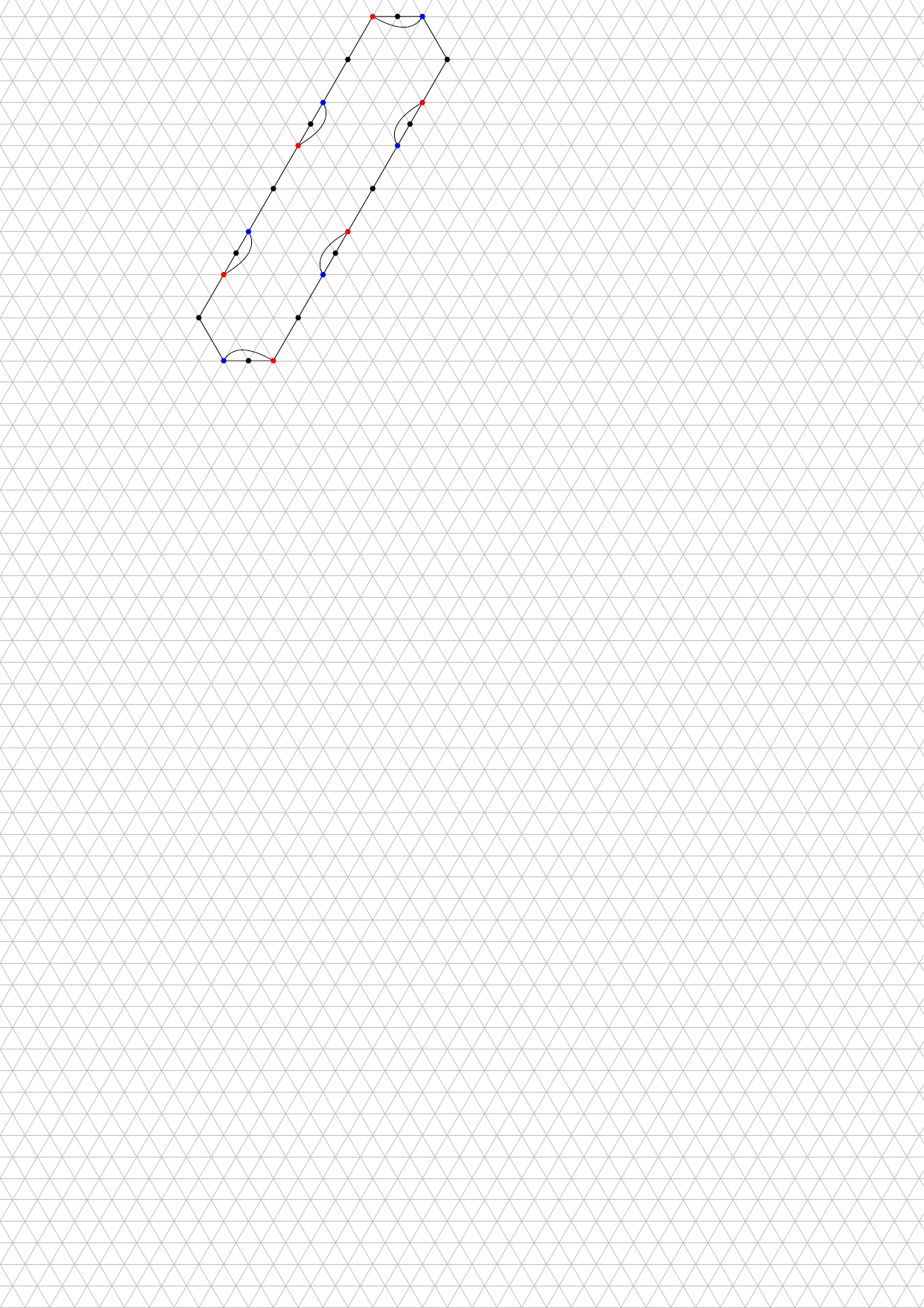}
    \end{minipage}
    \hfill
    \begin{minipage}[c]{.2\textwidth}
        \includegraphics[width=3cm]{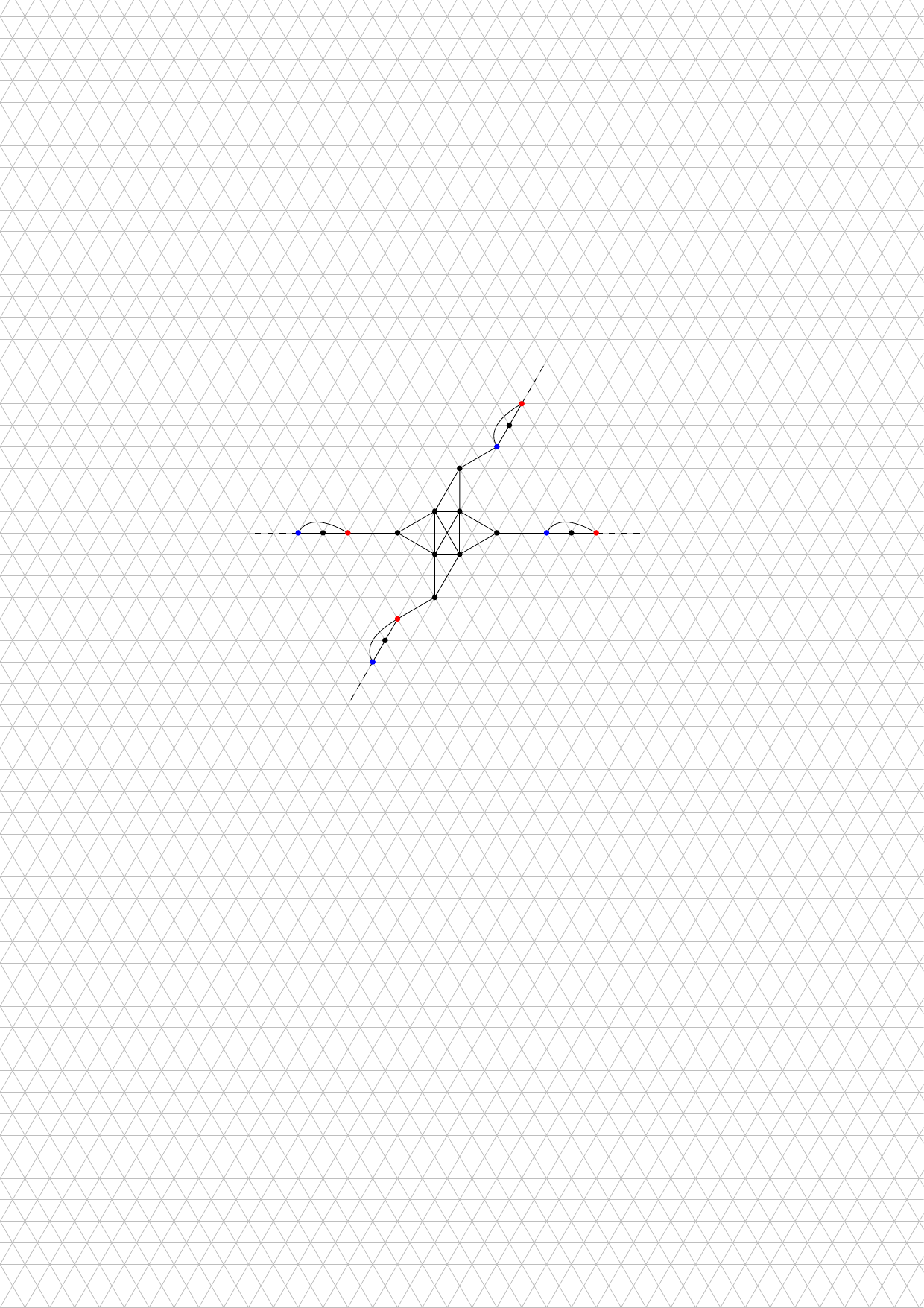}
    \end{minipage}
    \hfill
    \begin{minipage}[c]{.2\textwidth}
        \includegraphics[width=3cm]{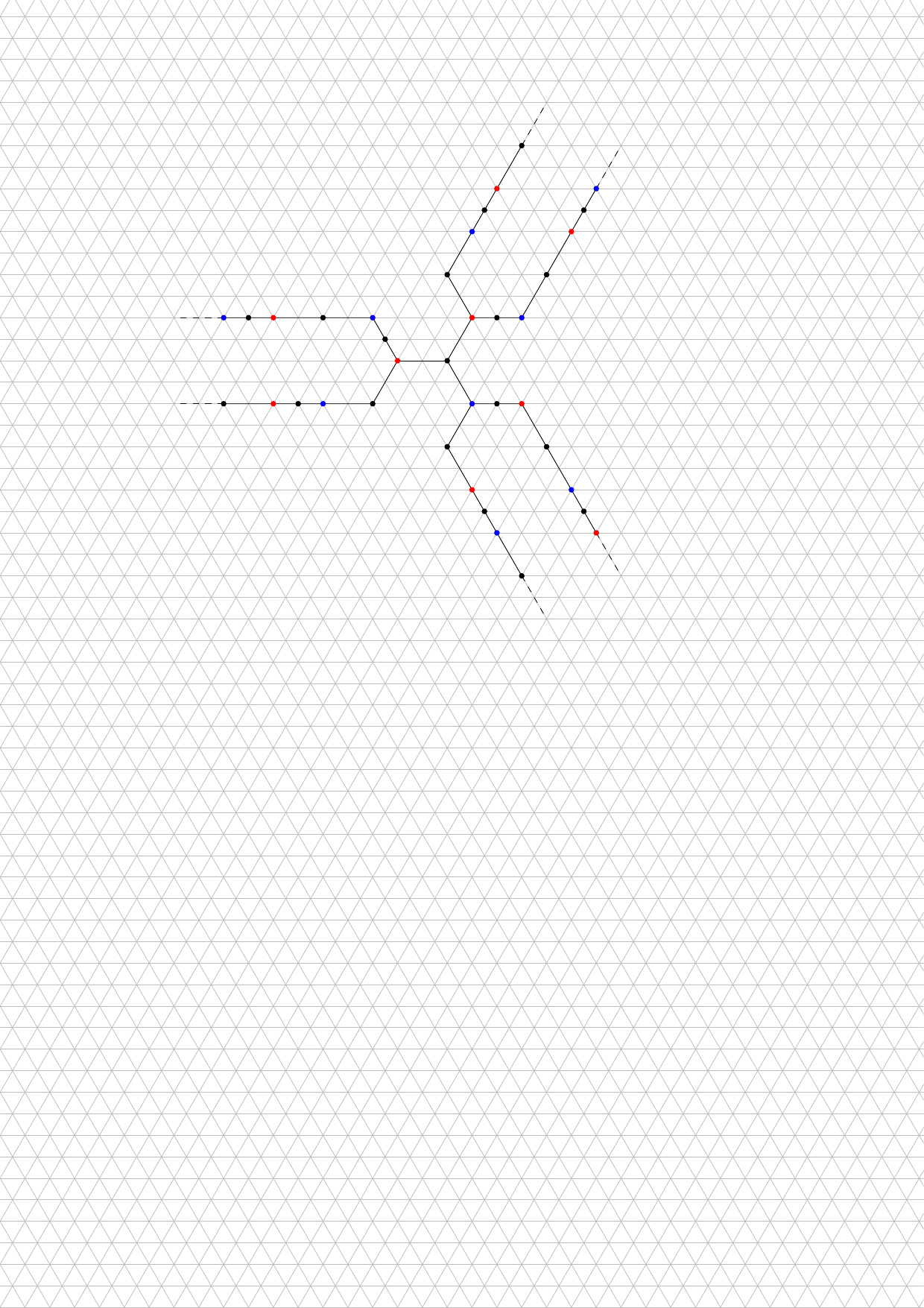}
    \end{minipage}
    \begin{minipage}[c]{.2\textwidth}
        \includegraphics[width=3cm]{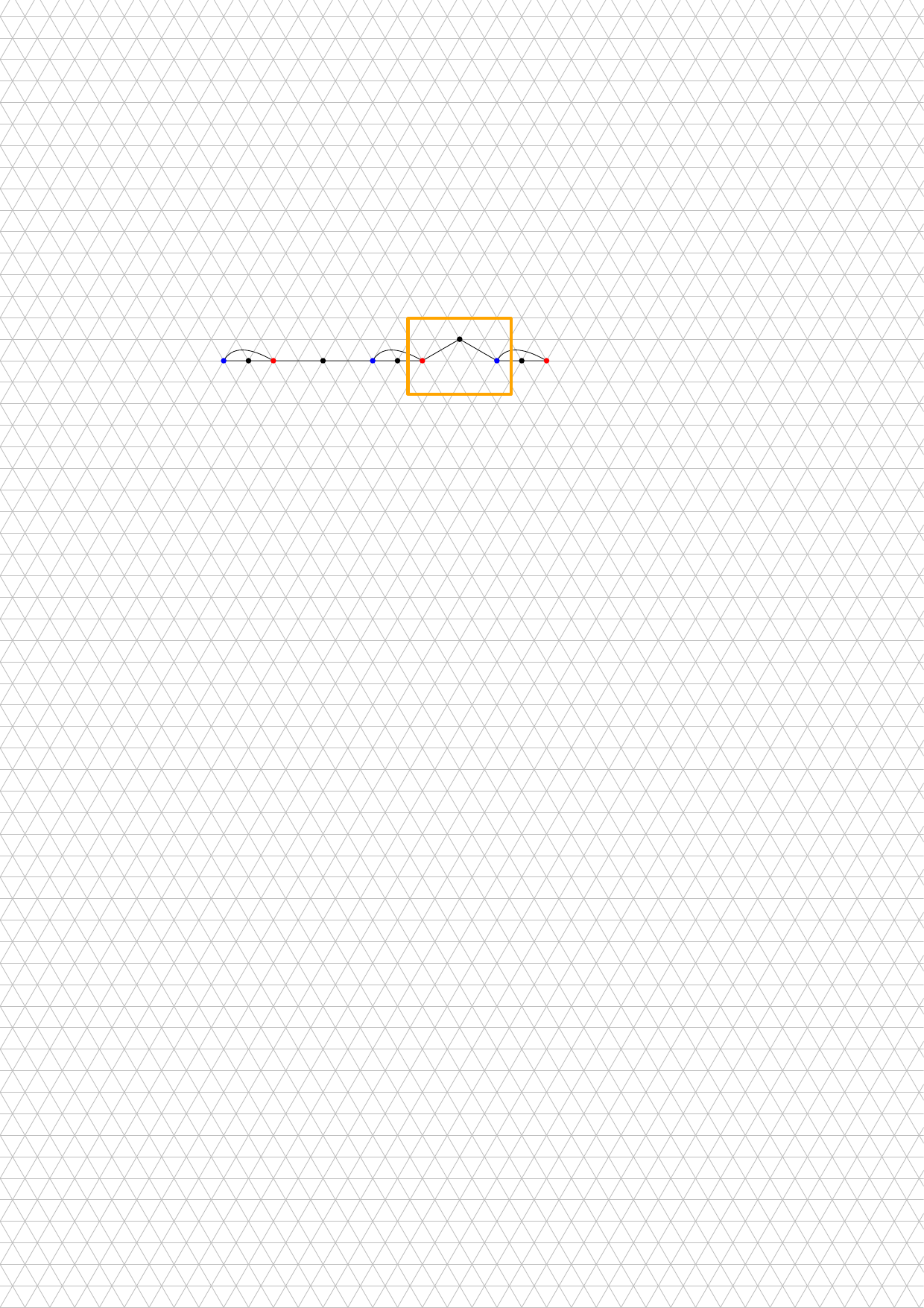}
    \end{minipage}
    \caption{Top: A sketch of the overall reduction in \Cref{lem:trigrid_hard}. Bottom: Circuits, Junctions, Clauses, and Creases in the reduction.}
    \label{fig:tri_gadget}
\end{figure}
\end{proof}
\dcdimthreehard*
\begin{proof}
    This is a direct consequence of \Cref{lem:directed_trigrid} and \Cref{lem:trigrid_hard}. Given a 3-SAT Formula $Q$, construct a set of points $A$ and integer $k$ as outlined in the proof of \Cref{lem:trigrid_hard}. Then, by applying the map $f$ from \Cref{lem:directed_trigrid}, we get that $Q$ has a solution if and only if there exists a subset $S \subseteq f(A)$, with $|S| = k$ such that $I_{DC}(S, f(A)) \leq 2$. Thus, directed coverage is NP-Complete for three objectives.
\end{proof}

\subsection{Ordinal Objectives}
\ucordinalhard*
\begin{proof}
  We show the statement for Coverage and Uniformity in
  \Cref{uc_ord_hard} and for Directed Coverage in \Cref{dc_ord_hard}.
\end{proof}

\begin{lemma}\label{lem:hard_map}
  Let $\Delta \in \N$. For any $n \in \mathbb{N}$, there is an
  injective map $f:[n]\times [n] \to \Z^2$ and a $\Delta'$, such that
  $||x - y|| \leq \Delta \iff ||f(x) - f(y)|| \leq \Delta'$ and no
  two points share a coordinate under $f$.
\end{lemma}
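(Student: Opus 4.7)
The plan is to construct $f$ by scaling the grid by a large factor $M$ and adding a small skew perturbation, concretely $f(i,j) := (Mi + j, \; Mj + i)$, where $M$ will be fixed at the end. The intuition is that $M$ should dominate so that the image distance behaves essentially like $M$ times the original distance, while the asymmetry between the two coordinates forces each of the $x$- and $y$-coordinates of the image points to be all distinct.

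First, I would verify the ``no shared coordinate'' property. For $(i_1, j_1) \neq (i_2, j_2)$, the equality $M i_1 + j_1 = M i_2 + j_2$ rearranges to $M(i_1 - i_2) = j_2 - j_1$; since $|j_2 - j_1| \leq n - 1$, choosing $M \geq n$ forces $i_1 = i_2$, whence $j_1 = j_2$, a contradiction. The symmetric argument handles the $y$-coordinates, and injectivity of $f$ follows as a corollary.

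Next, writing $a := i_1 - i_2$ and $b := j_1 - j_2$, the image distance equals $|Ma + b| + |Mb + a|$. A case analysis on the signs of $a$ and $b$ shows that, provided $M > n$ so that $Ma$ dominates $b$ and $Mb$ dominates $a$ in magnitude, this quantity equals $(M+1)(|a|+|b|)$ whenever $a$ and $b$ have the same sign or one is zero, and $(M-1)(|a|+|b|)$ when they have opposite signs. Hence the image distance always lies in the interval $[(M-1)d, (M+1)d]$, where $d := ||x - y||$ is the original (nonnegative integer) distance.

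Finally, I would set $\Delta' := (M+1)\Delta$ and pick $M > \max\{n, 2\Delta + 1\}$. The equivalence of the two distance conditions then follows from the separation $(M+1)\Delta < (M-1)(\Delta + 1)$, which is equivalent to $M > 2\Delta + 1$: if $d \leq \Delta$, the image distance is at most $(M+1)\Delta = \Delta'$; if $d \geq \Delta + 1$, it is at least $(M-1)(\Delta+1) > \Delta'$. The main point requiring care is the sign-based case analysis for $|Ma+b|+|Mb+a|$, as this is where the scaling coefficient oscillates between $M-1$ and $M+1$ and dictates the precise lower bound needed on $M$.
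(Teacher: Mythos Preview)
Your proposal is correct and follows essentially the same approach as the paper: the paper uses the identical map $f(x_1,x_2)=(t x_1+x_2,\,x_1+t x_2)$ with $t=\max\{n+1,2\Delta+2\}$ and $\Delta'=(t+1)\Delta$, matching your $M>\max\{n,2\Delta+1\}$. The only cosmetic difference is that the paper bounds $\|f(x)-f(y)\|$ in $[(t-1)d,(t+1)d]$ via the triangle and reverse triangle inequalities rather than your sign-based case split, but this yields the same interval and the same separation argument.
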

\begin{proof}
  Let $t = \max\{n+1, 2 \Delta+2\}$ let $f:[n] \times [n] \to \Z^2,
  (x_1,x_2) \mapsto (tx_1 + x_2, x_1 + tx_2)$ and $\Delta' = (t+1)\Delta$.
  No two points share a coordinate under $f$: suppose $f_1(x) = f_1(y)
  \iff t x_1 + x_2 = ty_1 + y_2$.
  Because $t > n$, taking the modulo of $t$ implies $x_2 = y_2$,
  leaving with us $t x_1 = t y_1$, so $x_1 = y_1$. An analogous argument can be used to show that $f_2(x) = f_2(y)$.

  Furthermore for $x,y \in \Z^2$ $||x - y|| \leq \Delta$ implies $|x_1 - y_1| + |x_2 -
  y_2| \leq \Delta$ and therefore
  \begin{align*}
    ||f(x) - f(y)|| & = |t(x_1 - y_1) + x_2 - y_2| + |t(x_2 - y_2) +
    x_1 - y_1|  \\
    & \leq t|x_1 - y_1| + |x_2 - y_2| +  t|x_2 - y_2| + |x_1 - y_1| \\
    &=  (t+1)(|x_1 - y_1| + |x_2 - y_2|) \\
    & \leq (t+1)\Delta = \Delta'.
  \end{align*}
  On the other hand, if $||f(x) - f(y)|| \leq \Delta' = (t+1)\Delta$, then
  \begin{align*}
    (t+1)\Delta &\geq ||f(x) - f(y)|| \\
    &=  |t(x_1 - y_1) + x_2 - y_2| + |t(x_2 - y_2) + x_1 - y_1| \\
    &\geq t|x_1 - y_1| - |x_2 - y_2| + t|x_2 - y_2| - |x_1 - y_1|\\
    & = (t - 1)||x-y||.
  \end{align*}
  This then implies $ ||x - y|| \leq \frac{t+1}{t-1}\Delta$. Using $t
  \geq 2\Delta + 2$, then gives
  $||x - y||\leq \frac{2 \Delta + 3}{2 \Delta + 1} \cdot \Delta =
  \Delta + \frac{2 \Delta}{2 \Delta + 1} < \Delta + 1$. Since $x$ and $y$ are integral , so must be $||x-y||$ and therefore $||x -y || \leq \Delta$. 
  So $f$ fulfills all desired properties.

  Also see \Cref{fig:coord_transform} for a sketch of $f$.

  \begin{figure}
    \centering
    \scalebox{.5}{\begin{tikzpicture}[
    scale=0.5,
    dot/.style={circle, fill=black, draw, minimum size=3.5pt, inner sep=0pt}
    ]

\def\gridspacing{1} 

\begin{scope}
\foreach \x in {0,1,...,8} {
    \foreach \y in {0,1,...,8} {
        \fill[blue!50] (\x*\gridspacing, \y*\gridspacing) circle (3pt);
    }
}

\coordinate[] (a) at (8.5, 4);
\coordinate[] (b) at (10.5, 4);
\end{scope}

\begin{scope}[xshift=11cm, yshift=-8cm, scale=.33]
\foreach \x in {0,1,...,8} {
    \foreach \y in {0,1,...,8} {
        \fill[blue!50] (8*\x*\gridspacing+\y, \x + 8*\y*\gridspacing) circle (9pt);
    }
}
\foreach \x in {0,1,...,72} {
    \foreach \y in {0,1,...,72} {
        \fill[black!50] (\x*\gridspacing, \y*\gridspacing) circle (3pt);
    }
}

\draw[->] (a) -- (b);
\end{scope}
\end{tikzpicture}}
    \caption{The transformation $f$ in the proof of
      \Cref{lem:hard_map}. Note that after applying the transformation,
    no two points share any coordinate.}
    \label{fig:coord_transform}
  \end{figure}
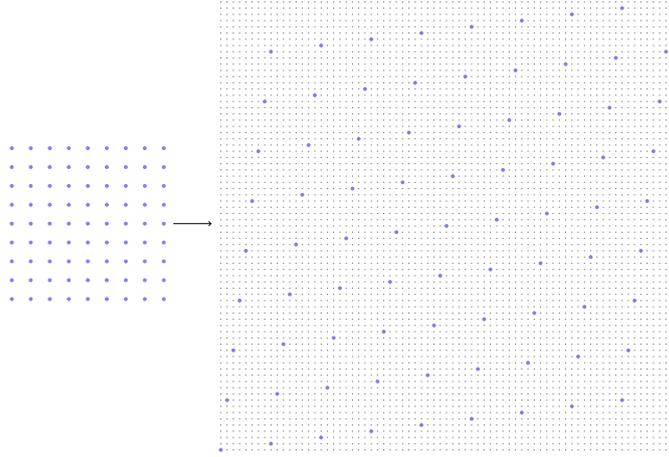

\end{proof}

\begin{proposition}\label{uc_ord_hard}
  \textsc{Uniformity / Coverage Pareto Pruning} is NP-hard, even if
  every objective is ordinal.
\end{proposition}

\begin{proof}
  We prove this statement via a reduction from $3$-SAT. We will only
  formally state the proof for coverage, but the proof for uniformity
  will rely on the same ideas.
  Let $Q$ be a SAT-Formula and $A_Q$ be the set of points constructed
  in the proof of \Cref{thm:kcenter_hard}.

  We construct a set of alternatives $X$ and a set of ordinal
  objective functions on $X$.
  We set $X \coloneq A_Q \cup D \cup \{x_*\}$, where $D$ is a set of
  dummy alternatives and $x_*$ is an additional distinguished
  alternative. Using the observations from \Cref{cor:integral} we
  assume that $A_Q \subseteq [M] \times [M]$ for some $M$ whose size
  is bounded by a polynomial in $|A_Q|$. Let $g, \Delta'$ be the map
  and threshold from \Cref{lem:hard_map} acquired for $n = M$ and $\Delta = 4$.

  We describe four different kind of ordinal objectives. Let $s, m
  \in \mathbb{N}$ be arbitrary for now. We will specify later how to
  choose them. $f_1: X \to [|X|]$ will map elements as follows:
  $f_1(x_*) = M + s$ and for $a \in A_Q$, $f_1(a) = g_1(a)$. Now
  $f_1$ restricted to $A_Q \cup \{x_*\}$ is injective, since no two
  points share a coordinate under $g$. To make it so that $f$ maps
  bijectively to $[|X|]$ we use the dummy alternatives in $D$ to fill the remaining
  positions arbitrarily. $f_2$ will be defined similarly, with
  $f_2(x_*) = M + s$, but for $a \in A_Q$, $f_2(a) = g_2(a)$. Again
  we fill the remaining positions with elements of $D$.

  The last two objectives are $f_\ell$ and $f_r$. $f_\ell$ maps $A_Q$
  to $\{M+s - |A_Q| + 1, \dots, M+s\}$ in some arbitrary way and
  $f_\ell(x_*) =M + s - |A_Q|$. $f_r$ is similar to $f_\ell$, except
  $A_Q$ is mapped to $\{M+1 - |A_Q| + 1, \dots, M+s \}$ in the
  reverse order of $f_1$. So for $a, b \in A_Q$, whenever $f_\ell(a)
  < f_\ell(b)$ we get $f_r(a) > f_r(b)$. Again, we complete $f_\ell$
  and $f_r$ using elements of $D$ such that they are bijective.
  Intuitively, the top positions under $f_\ell$ and $f_r$ are taken
  up by $A_Q$, followed by $x_*$ and then all of the $D$.

  To construct the total set of objectives now take $m \in \N$ copies
  of $f_1$ and $f_2$ each, and a single copy of $f_\ell$ and $f_r$ each.

  Now, every dummy alternative in $D$ is dominated by $x_*$, since
  $x_*$ achieves the highest objective value in $f_1$ and $f_2$ and a
  higher objective than any dummy alternative in $f_\ell$ and $f_r$.
  Hence, all dummy alternatives must be ignored for Pareto pruning.
  More so, no other alternative is Pareto dominated, since $a \in A_Q$ appears before
  $x_*$ in $f_l$ and $f_r$, and any pair $a, b \in A_Q$ is ranked in reverse orders
  by $f_l$ and $f_r$.
  Furthermore, for $a, b \in A_Q$ we get $|f_1(a) - f_1(b)| + |f_2(a)
  - f_2(b)| = |g_1(a) - g_1(b)_1| + |g_2(a)_2 - g_2(b)| = ||g(a) - g(b)||$.
  Since we have $m$ copies of $f_1$ and $f_2$, writing $f$ for the aggregate function of all objectives, we get
  \begin{align*}m \cdot ||g(a) - g(b)|| + 2 &\leq ||f(a) - f(b)|| \\ & =   m \cdot ||g(a)
    - g(b)|| + |f_\ell(a) - f_\ell(b)| + |f_r(a) - f_r(b)| 
    \leq m \cdot
  ||g(a) - g(b)|| + 2|A_Q|\end{align*} for $x_*$ we get $|f_1(a) - f_1(x_*)|
  \geq s$, so $||f(a) - f(x_*)|| \geq m \cdot s$.

  We now set $m \geq 2 |A_Q|$. For $a,b \in A_Q$ we now claim that
  $||a - b|| \leq \Delta$, if and only if $||f(x) - f(y)|| \leq m
  \cdot (\Delta' + 1)$.
  Let $||a - b|| \leq \Delta$, then \[||f(x) - f(y)|| \leq m \cdot
  ||g(a) - g(b)|| + 2 |A_Q| \leq m \cdot \Delta' + m = m \cdot (\Delta' + 1),\]
  On the other hand, if $||f(a) - f(b)|| \leq m \cdot (\Delta' +1)$, then
  \[ m \cdot (\Delta'+1) \geq || f(a) - f(b) || \geq m \cdot||g(a) -
  g(b)|| + 2 \implies \Delta' + 1 - \frac{2}{m} \geq ||g(a) - g(b)||.\]
  Since $||g(a) - g(b)||$ is integral this implies $||g(a) -g(b)||
  \leq \Delta'$ and therefore $||a - b|| \leq \Delta$.
  Additionally, notice that for $x_*$, by choosing $s > \Delta' + 1$ we get $||f(x_*) - f(a)||
  \geq m \cdot s > m \cdot (\Delta'+1)$ for all $a \in A_Q$.

  Let $f(X)_{PO}$ be the set of Pareto optimal points in $f(X) =
  \{f(x) \mid x \in X\}$. We now claim that that there exists a
  subset $S \subseteq f(X)_{PO}$ with $|S| = k+1$ and $I_{C}(S,
  f(X)_{PO}) \leq m (\Delta' + 1)$ if and only if the SAT formula $Q$
  is satisfiable.

  To establish this correspondence, first consider that, since
  $||f(x_*) - f(a)|| > m \cdot (\Delta' + 1)$, any such $S$
  must contain $f(x_*)$, otherwise $f(x_*)$ would not be covered. For the
  remaining $k$ points $S' = S \setminus \{f(x_*)\}$ we get that
  $I_C(S', f(A_Q)_{PO}) \leq m \cdot (\Delta' + 1)$ if and only if
  $I_C(S, A_Q) \leq \Delta$, since $||f(a) - f(b)|| \leq m \cdot
  (\Delta' + 1)$ if and only if $|| a -b || \leq \Delta$. Finally,
  $I_C(S, A_Q) \leq \Delta$ if and only if $Q$ is satisfiable by the
  construction of $A_Q$, completing the proof.

  The proof for uniformity is identical, other than using the
  observation that including $f(x_*)$ in $S$ never causes
  $I_U(S)$ to sink below $m \cdot (\Delta' + 1)$.
\end{proof}

\begin{proposition}\label{dc_ord_hard}
  \DCPP{} is NP-Hard, even if every objective is ordinal.
\end{proposition}
\begin{proof}
  We show hardness via reduction from \textsc{Exact Cover by 3-Sets}(X3C),
  with the additional assumption that every element appears in exactly three sets.
  Let $E$, $\mathcal{S} \subseteq 2^E$ constitute some X3C instance.
  For some $e \in E$ let $\mathcal{S}_e \coloneq \{S \in \mathcal{S}
  \mid e \in S\}$. By the above assumption $|\mathcal{S}_e| = 3$ for all $e \in E$.

  We declare ordinal objectives $f_e$ for every $e \in E$ and one
  additional objective $f_*$. For every element $e \in E$ we
  introduce a set of $6$ dummy alternatives $D_e$. We introduce an
  additional high-quality alternative $x_*$ and a corresponding set
  of $9$ dummy alternatives $D_*$.
  The set of alternatives $X$ will be given by $X \coloneq E \cup
  \mathcal{S} \cup \{x_*\} \cup D$, where $D = D_* \cup \bigcup_{e \in E} D_e$.
  So $|D| = 6|E| + 9$.

  Because the objectives are ordinal, an objective is fully
  determined by its ordering of alternatives.
  We only describe the order in which some objective ranks the
  alternatives, instead of specifying the values $f_e(a)$ explicitly.
  For alternatives $a, b$ we write $a \succ_e b$ (resp. $a \succ_*
  b$), if $f_e(a) > f_e(b)$, (resp. $f_*(a) > f_*(b)$). We also
  extend this notation to sets, so $a \succ B$ for some $B \subseteq
  A$ means that $a$ is ranked higher than any element of $B$.

  For $f_e$ we construct the ordering
  \[ e \succ_e  D_e \succ_e \mathcal{S}_e \succ_e x_* \succ_e
    \mathcal{S} \setminus \mathcal{S}_e \succ_e E \setminus \{e\}
  \succ_e D \setminus D_e\]
  and for $f_*$ we define the ordering
  \[x_* \succ_* D_* \succ_* \mathcal{S} \succ_* E \succ_* D \setminus D_*.\]
  Now define $A \coloneq \{f(x) \mid x \in X, x \text{ is Pareto optimal}\}$.
  Let $\ell = \frac{|E|}{3}$. We claim that there exists a subset $T
  \subseteq A$ with $|T| = \ell + 1$ and $I_{DC}(T, A) \leq 9$ if and
  only if there exists a solution to the X3C instance. First observe
  that all alternatives in $D$ are Pareto dominated. Every
  alternative in $D_e$ is dominated by $e$, every alternative in
  $D_*$ is dominated by $x_*$ and every other alternative in $x$ is
  Pareto optimal, so the set of Pareto optimal alternatives $A = f(E \cup \mathcal{S}
  \cup \{x_*\}).$

  Now let $\{S_1, \dots, S_\ell\} \subseteq \mathcal{S}$ with
  $\bigcup_{i = 1}^\ell S_i = E$ be a solution to X3C. Let $T = \{S_1,
  \dots, S_\ell\} \cup \{x_*\}$, we claim $I_{DC}(f(T), A) \leq 9$.
  We show this by proving that for every $a \in A$ there exists some $t
  \in T$ such that
  $||a - f(t)||_+ \leq 9$.
  For $a = f(S)$, we get $||a - f(x_*)|| \leq 9$, as $S = \{e_1, e_2,
  e_3\}$ is only ranked higher than $x_*$ in $f_{e_1}, f_{e_2},
  f_{e_3}$ and $S$ is ranked at most three positions ahead of $x_*$ in
  each of these objectives. If $a = f(e)$ for some $e \in E$, since $\{S_1, \dots,
  S_\ell\}$ is a X3C solution, there is some $i$ such that $e \in S_i$
  and $S_i \in \mathcal{S}_e$. We then get $||a - f(S_i)|| = f_e(e) -
  f_e(S_i) \leq |D_e| + |\mathcal{S}_e| = 9$, as $e$ appears behind $S_i$ in every objective
  other than $f_e$. This completes the forward direction.

  Now let $T \subseteq A$ with $I_C(f(T), A) \leq 9$ and $|T| = \ell + 1$.
  Suppose $T$ contains $x_*$ and only alternatives from $\mathcal{S}$.
  $I_C(f(T), A) \leq 9$ implies that for all $e \in E$, there must be
  some $S' \in T$ such that $f_e(e) - f_e(S') \leq 9$, because $f_e(e)
  - f_e(x_*) = 10$. By construction the only such $S'$ are in
  $\mathcal{S}_e$ so the sets in $T \cap \mathcal{S}$ form a set cover.

  It remains to justify the assumption that $T$ must contain $x_*$ and
  only alternatives of $\mathcal{S}$. In $f_*$ there are no Pareto optimal
  alternatives $a$ with $f_*(x_*) - f_*(a) \leq 10$, since $D_*$ is
  dominated by $x_*$. As such, $T$ must contain $x_*$.
  Finally, suppose $T$ contains $x_*$ and at least one alternative in
  $E$. It follows that there at most $\ell - 1 = \frac{|E|}{3} - 1$
  alternatives of $\mathcal{S}$ in $T$. The number of $e \in E$ such
  that there exists $a \in T$ with $f(e)_e - f_e(a) \leq 9$ is at most
  $3 (\ell - 1) + 1 = |E| - 2 < |E|$, since every $S \in \mathcal{S}$
  covers at most three $e \in E$. Hence $T$ cannot contain $e \in E$
  and the statement follows.
\end{proof}

\subsection{Approval Objectives}

\ucapprovalhard*

\begin{proof}
  We reduce from \textsc{Dominating Set} for coverage and directed
  coverage. For uniformity we reduce from \textsc{Independent Set}.
  Let $G = (V, E)$ be a graph and $k$ be an integer.

  Construct a set of alternatives $X = V$. For each $e \in E$ construct
  the following objectives: One objective $f_e$, with $f_e(v) = 1$, if
  $v \in E$ and $f_e(w) = 0$, otherwise.
  Add objectives $f_v^e$ for each $v \notin e$ with $f_v^e(v) = 1$ and
  $f_v^e(w)=0$, for all $w \neq v$. We define $|| v- w||_e = |f_e(v) -
  f_e(w)| + \sum_{u \notin e} |f_u^e(v) - f_u^e(w)|$.
  For some fixed $e \in E$ and $v, w \in e$, we get
  \[||v -w||_e = |f_e(v) - f_e(w)| + \sum_{u \notin e} |f_u^e(v) -
  f_u^e(w)|= |1 - 1| + \sum_{u \notin e}|0 - 0| = 0.\]
  If $v \in e$, $w \notin e$, we get
  \[||v -w||_e = |f_e(v) - f_e(w)| + |f^e_w(v) - f_w^e(w)| + \sum_{u
    \notin e, u \neq w} |f_u^e(v) - f_u^e(w)|= |1 - 0| + |0 - 1| \sum_{u
  \notin e, u \neq w}|0 - 0| = 2.\]
  Lastly, for $v, w \notin E$ it follows
  \begin{align*}||v -w ||_e &= |f_e(v) - f_e(w)| + |f^e_v(v) -
    f_v^e(w)| + |f^e_w(v) - f_w^e(w)| + \sum_{u \notin e, u \notin
    \{v,w\}} |f_u^e(v) - f_u^e(w)|\\ & = |0 - 0| + |1 - 0| + |0 - 1| +
    \sum_{u \notin e, u \notin \{v,w\}}|0 - 0| = 2.
  \end{align*}
  In aggregate, this means that $||v - w||_e = 0$, if $\{v, w\} \in E$
  and $2$ otherwise.

  For the overall distance it then follows that
  \[
    ||f(v) - f(w)|| = \sum_{e \in E} ||v - w||_e =
    \begin{cases} 2 |E| - 2, &\text{if } \{v, w\} \in E \\ 2 |E|
      &\text{otherwise}
    \end{cases}
  \]
  Let $A = \{f(X) \mid x \in X\}$. It is easy to see that no
  alternative in $A$ is Pareto dominated. We claim that $G$ has a
  dominating set of size $k$, if and only if $A$ admits a
  slate $S \subseteq k$ for coverage with $|S| = k$ and $I_C(S, A) \leq 2 |E|-2$.
  Let $D \subseteq V$ be a dominating set, then $I_C(S, A) = 2 |E| -
  2$, as for every $v \in V$, there must be some $d \in D$, such that
  $\{v, d\} \in E$ and therefore $||f(v) - f(d)|| \leq 2 |E| - 2$, so
  $I_C(f(D), A) \leq 2 |E| - 2$.
  Conversely, let $S \subseteq A$ be a slate with $I_C(S, A) \leq 2 |E|
  - 2$, then for every $v \in V$ there must exist some $w \in X$ such
  that $f(w) \in S$ and $||f(v) - f(w)|| = 2 |E| -2 $, so $v$ and $w$
  are neighbours. Therefore $f^{-1}(S)$ is a dominating set.

  For directed coverage, note that, for the instances we have
  constructed here $||f(v) - f(w)||_+ = \frac{||f(v) - f(w)||}{2}$. So an analogous
  argument applies.
  The hardness of uniformity is shown by observing that $I_U(S) \geq 2
  |E|$, if and only if $S$ is an independent set.
\end{proof}

\newpage

\section{Additional Material for Experiments}\label{app:experiments}

\begin{figure}[h]
  \centering
  \includegraphics[width=0.8\linewidth]{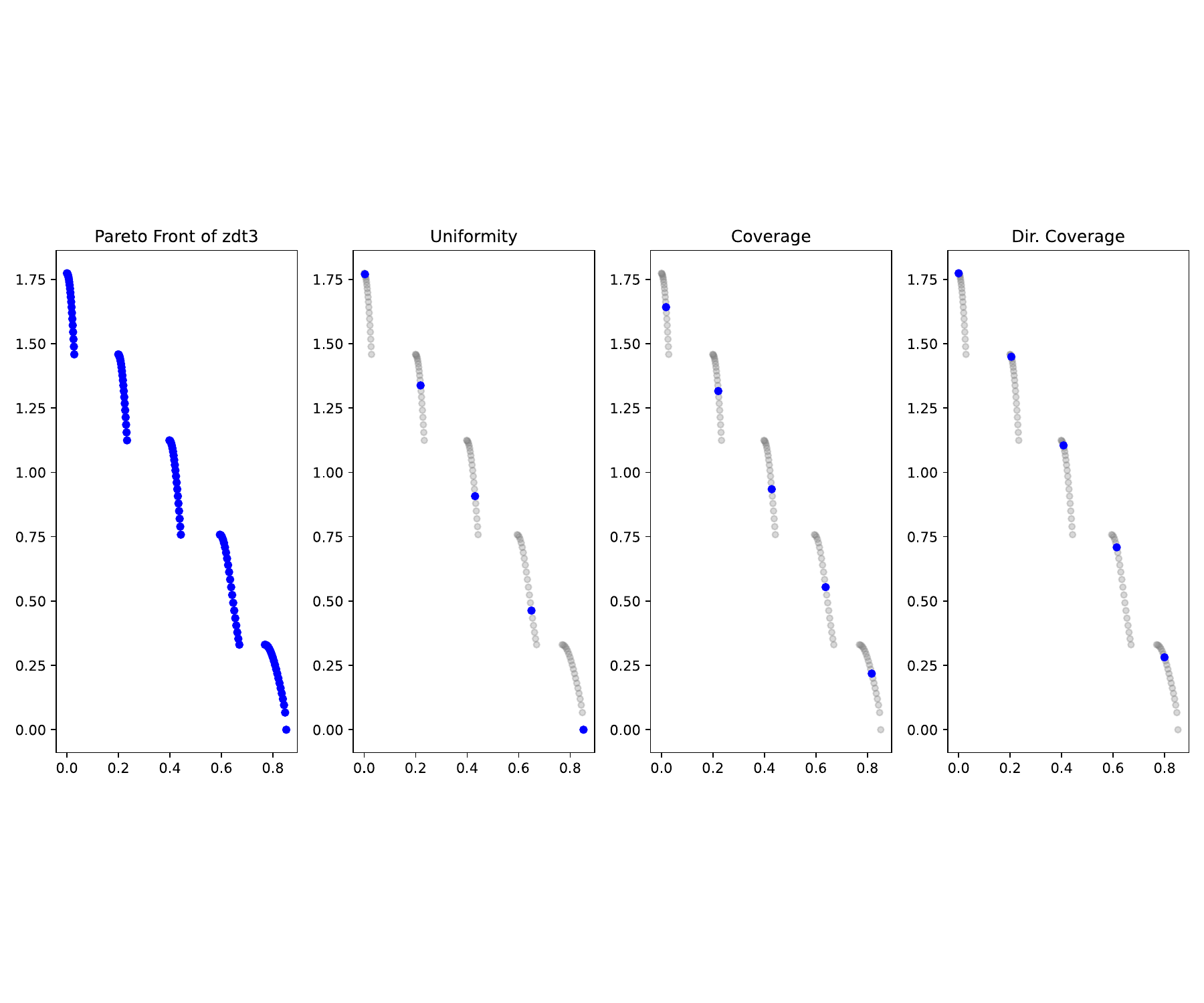}
  \includegraphics[width=0.8\linewidth]{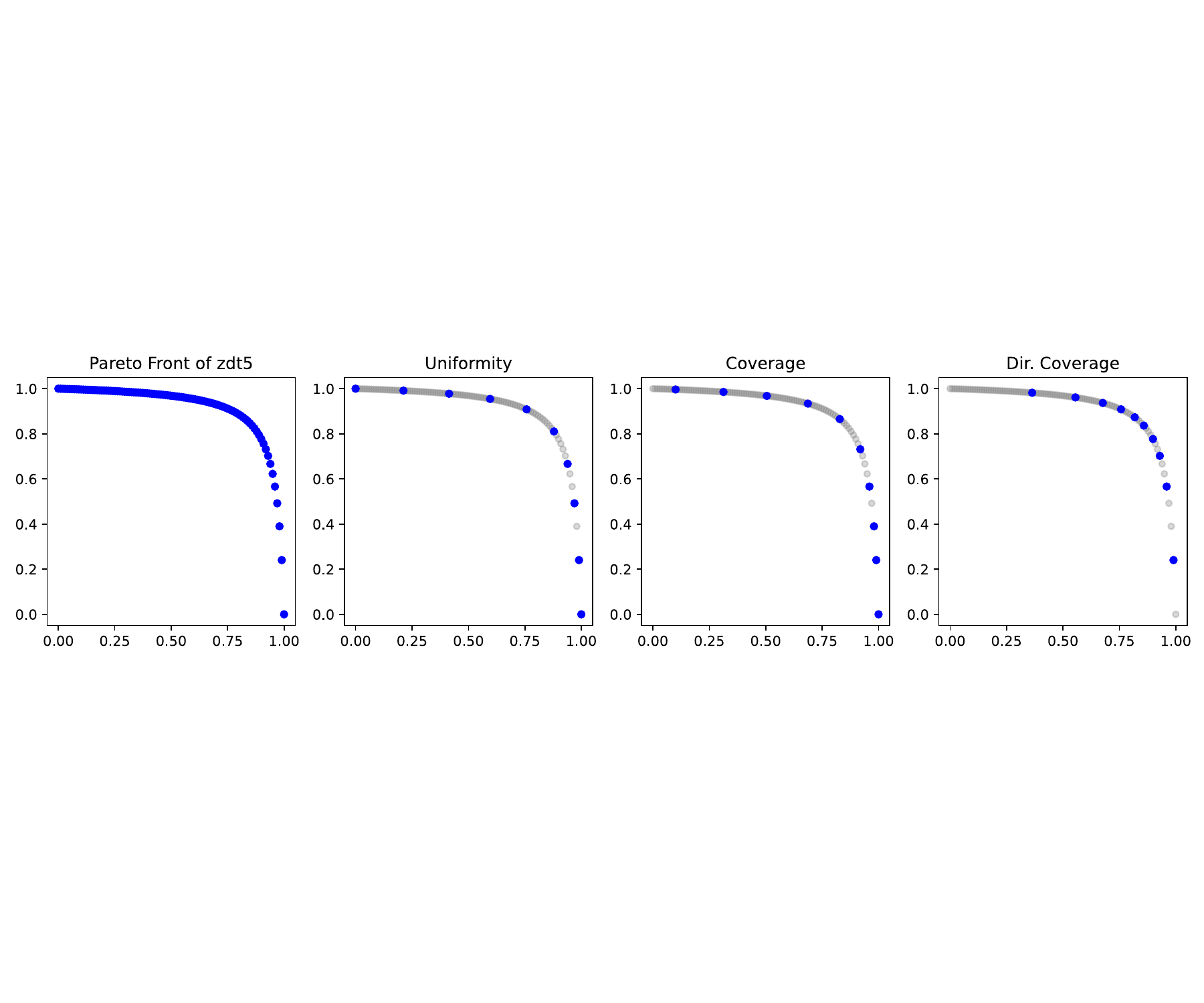}
  \label{fig:comp}
  \caption{The selected slates when optimizing Uniformity, Coverage and Directed Coverage
    on instance zdt3 with $k = 5$ and instance zdt5 with $k = 10$.
    Zdt3 illustrates the differences between measures. For uniformity, solutions are spaced
    as far apart as possible. For coverage, a central point in every
    cluster is selected. For directed coverage, a particularly efficient
    candidate in every cluster is selected.
    For zdt5, it is apparent that directed coverage puts more focus on
  covering the central options, which achieve a higher sum of objective values than those on the outside.}
\end{figure}

\begin{figure}[h]
  \centering
  \includegraphics[width=\linewidth]{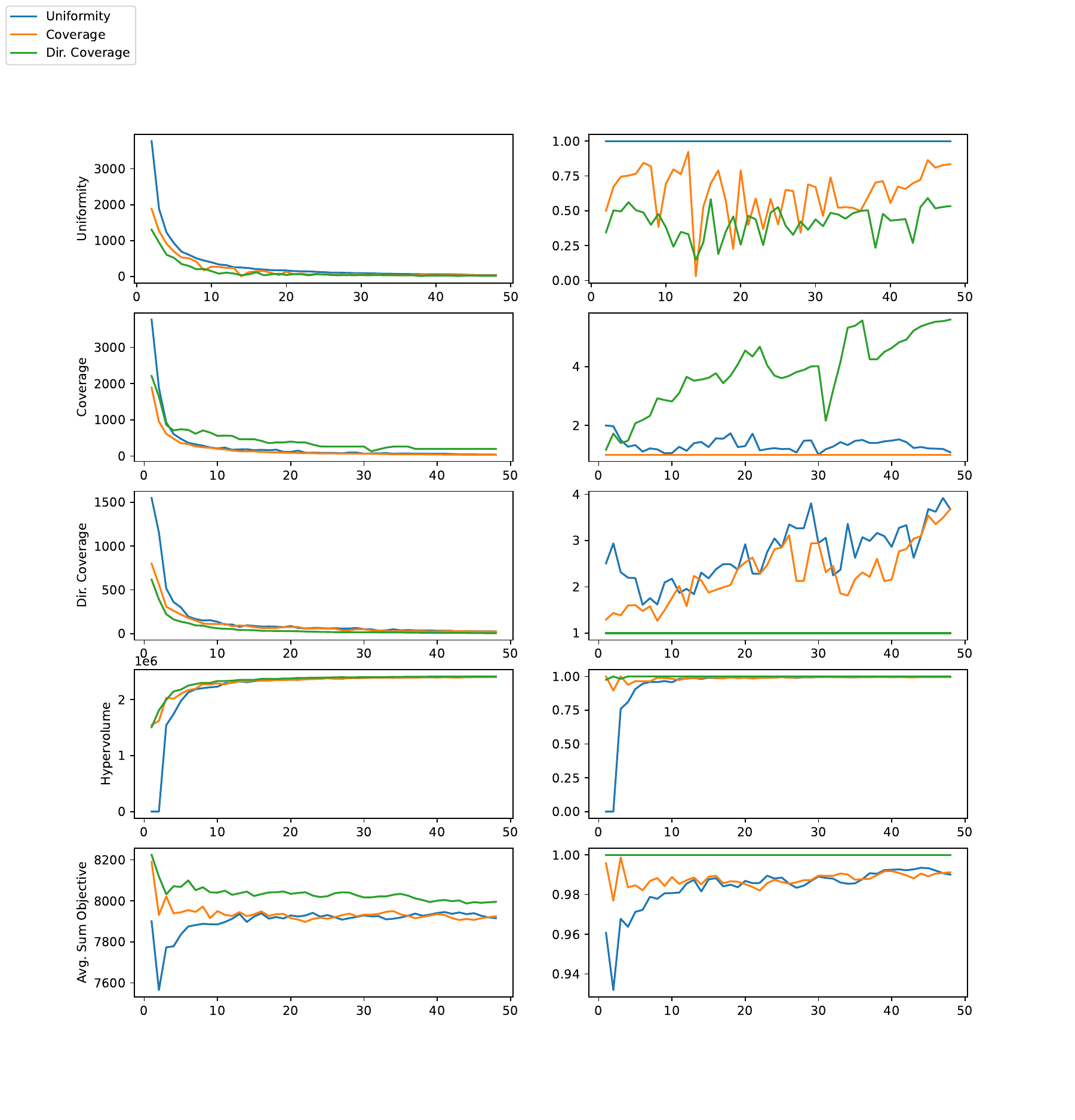}
  \caption{We evaluate the performance of optimal slates as determined
    by Uniformity, Coverage and Directed Coverage under the five
    performance performance measures discussed in \Cref{sec:exp} on
    instance PGMORL-Hopper-v2. For each $k \in [50]$ and measure $I$ we determine
    a slate $S \in \mathcal{S}(I, A, k)$ and then evaluate the slate using the performance measures. The left
    side displays the unscaled measurements of the measures.
    On the right side, each measure is rescaled such that the optimum slate always achieves a
    value of $1$. Note the significant improvement in solution quality for small $k$, which starts to
    stagnate quickly as $k$ increases. Also note, on the right hand side, the significant variations in
    performance with respect to a measure that is not explicitly optimized.
  }
\end{figure}


\begin{thebibliography}{45}
\providecommand{\natexlab}[1]{#1}
\providecommand{\url}[1]{\texttt{#1}}
\expandafter\ifx\csname urlstyle\endcsname\relax
  \providecommand{\doi}[1]{doi: #1}\else
  \providecommand{\doi}{doi: \begingroup \urlstyle{rm}\Url}\fi

\bibitem[Agarwal and Sharir(1998)]{agarwal_efficient_1988}
Pankaj~K. Agarwal and Micha Sharir.
\newblock Efficient algorithms for geometric optimization.
\newblock \emph{ACM Comput. Surv.}, 30\penalty0 (4):\penalty0 412–458,
  December 1998.
\newblock ISSN 0360-0300.
\newblock \doi{10.1145/299917.299918}.
\newblock URL \url{https://doi.org/10.1145/299917.299918}.

\bibitem[Arrighi et~al.(2023)Arrighi, Fernau, de~Oliveira~Oliveira, and
  Wolf]{DBLP:conf/aaai/ArrighiFO023}
Emmanuel Arrighi, Henning Fernau, Mateus de~Oliveira~Oliveira, and Petra Wolf.
\newblock Synchronization and diversity of solutions.
\newblock In \emph{Proceedings of the Thirty-Seventh {AAAI} Conference on
  Artificial Intelligence, ({AAAI} '23)}, pages 11516--11524. {AAAI} Press,
  2023.

\bibitem[Baste et~al.(2022)Baste, Fellows, Jaffke, Masar{\'{\i}}k,
  de~Oliveira~Oliveira, Philip, and Rosamond]{DBLP:journals/ai/BasteFJMOPR22}
Julien Baste, Michael~R. Fellows, Lars Jaffke, Tom{\'{a}}s Masar{\'{\i}}k,
  Mateus de~Oliveira~Oliveira, Geevarghese Philip, and Frances~A. Rosamond.
\newblock Diversity of solutions: An exploration through the lens of
  fixed-parameter tractability theory.
\newblock \emph{Artif. Intell.}, 303:\penalty0 103644, 2022.
\newblock \doi{10.1016/J.ARTINT.2021.103644}.
\newblock URL \url{https://doi.org/10.1016/j.artint.2021.103644}.

\bibitem[Blank and Deb(2020)]{pymoo_2020}
Julian Blank and Kalyanmoy Deb.
\newblock Pymoo: Multi-objective optimization in python.
\newblock \emph{IEEE Access}, 8:\penalty0 89497--89509, 2020.
\newblock \doi{10.1109/ACCESS.2020.2990567}.

\bibitem[Branke et~al.(2004)Branke, Deb, Dierolf, and
  Osswald]{DBLP:conf/ppsn/BrankeDDO04}
J{\"{u}}rgen Branke, Kalyanmoy Deb, Henning Dierolf, and Matthias Osswald.
\newblock Finding knees in multi-objective optimization.
\newblock In \emph{Proceedings of 8th International Conference on Parallel
  Problem Solving from Nature (PPSN VIII)}, pages 722--731. Springer, 2004.

\bibitem[Branke et~al.(2008)Branke, Deb, Miettinen, and
  Slowinski]{DBLP:conf/dagstuhl/2008moo}
J{\"{u}}rgen Branke, Kalyanmoy Deb, Kaisa Miettinen, and Roman Slowinski,
  editors.
\newblock \emph{Multiobjective Optimization, Interactive and Evolutionary
  Approaches}, volume 5252 of \emph{Lecture Notes in Computer Science}, 2008.
  Springer.

\bibitem[Deb et~al.(2002)Deb, Thiele, Laumanns, and Zitzler]{dtlz_2002}
K.~Deb, L.~Thiele, M.~Laumanns, and E.~Zitzler.
\newblock Scalable multi-objective optimization test problems.
\newblock In \emph{Proceedings of the 2002 Congress on Evolutionary Computation
  (CEC '02)}, volume~1, pages 825--830, 2002.

\bibitem[Delemazure et~al.(2024)Delemazure, Janeczko, Kaczmarczyk, and
  Szufa]{DBLP:conf/ijcai/DelemazureJ0S24}
Th{\'{e}}o Delemazure, Lukasz Janeczko, Andrzej Kaczmarczyk, and Stanislaw
  Szufa.
\newblock Selecting the most conflicting pair of candidates.
\newblock In \emph{Proceedings of the Thirty-Third International Joint
  Conference on Artificial Intelligence ({IJCAI} '24)}, pages 2766--2773.
  ijcai.org, 2024.

\bibitem[Demirovic and Schwind(2020)]{DBLP:conf/aaai/DemirovicS20}
Emir Demirovic and Nicolas Schwind.
\newblock Representative solutions for bi-objective optimisation.
\newblock In \emph{Proceedings of the Thirty-Fourth {AAAI} Conference on
  Artificial Intelligence ({AAAI} '20)}, pages 1436--1443. {AAAI} Press, 2020.

\bibitem[Ehrgott(2005)]{ehrgott2005multicriteria}
Matthias Ehrgott.
\newblock \emph{Multicriteria optimization}, volume 491.
\newblock Springer Science \& Business Media, 2005.

\bibitem[Emmerich et~al.(2005)Emmerich, Beume, and
  Naujoks]{DBLP:conf/emo/EmmerichBN05}
Michael Emmerich, Nicola Beume, and Boris Naujoks.
\newblock An {EMO} algorithm using the hypervolume measure as selection
  criterion.
\newblock In \emph{Proceedings of Third International Conference on
  Evolutionary Multi-Criterion Optimization ({EMO} '05)}, pages 62--76.
  Springer, 2005.

\bibitem[Eriskin et~al.(2024)Eriskin, Karatas, and
  Zheng]{DBLP:journals/anor/EriskinKZ24}
Levent Eriskin, Mumtaz Karatas, and Yu{-}Jun Zheng.
\newblock A robust multi-objective model for healthcare resource management and
  location planning during pandemics.
\newblock \emph{Ann. Oper. Res.}, 335\penalty0 (3):\penalty0 1471--1518, 2024.
\newblock \doi{10.1007/S10479-022-04760-X}.
\newblock URL \url{https://doi.org/10.1007/s10479-022-04760-x}.

\bibitem[Erkut(1990)]{erkut_discrete_1990}
Erhan Erkut.
\newblock The discrete \textit{p}-dispersion problem.
\newblock \emph{European Journal of Operational Research}, 46\penalty0
  (1):\penalty0 48--60, May 1990.
\newblock ISSN 0377-2217.
\newblock \doi{10.1016/0377-2217(90)90297-O}.
\newblock URL
  \url{https://www.sciencedirect.com/science/article/pii/037722179090297O}.

\bibitem[Eus{\'{e}}bio et~al.(2014)Eus{\'{e}}bio, Figueira, and
  Ehrgott]{DBLP:journals/cor/EusebioFE14}
Augusto Eus{\'{e}}bio, Jos{\'{e}}~Rui Figueira, and Matthias Ehrgott.
\newblock On finding representative non-dominated points for bi-objective
  integer network flow problems.
\newblock \emph{Comput. Oper. Res.}, 48:\penalty0 1--10, 2014.
\newblock \doi{10.1016/J.COR.2014.02.009}.
\newblock URL \url{https://doi.org/10.1016/j.cor.2014.02.009}.

\bibitem[Faliszewski et~al.(2017)Faliszewski, Skowron, Slinko, and
  Talmon]{faliszewski2017multiwinner}
Piotr Faliszewski, Piotr Skowron, Arkadii Slinko, and Nimrod Talmon.
\newblock Multiwinner voting: A new challenge for social choice theory.
\newblock \emph{Trends in computational social choice}, 74\penalty0
  (2017):\penalty0 27--47, 2017.

\bibitem[Faulkenberg and Wiecek(2010)]{faulkenberg2010quality}
Stacey~L Faulkenberg and Margaret~M Wiecek.
\newblock On the quality of discrete representations in multiple objective
  programming.
\newblock \emph{Optim. Eng.}, 11\penalty0 (3):\penalty0 423--440, 2010.

\bibitem[Fowler et~al.(1981)Fowler, Paterson, and
  Tanimoto]{fowler_optimal_1981}
Robert~J. Fowler, Michael~S. Paterson, and Steven~L. Tanimoto.
\newblock Optimal packing and covering in the plane are {NP}-complete.
\newblock \emph{Information Processing Letters}, 12\penalty0 (3):\penalty0
  133--137, June 1981.
\newblock ISSN 0020-0190.
\newblock \doi{10.1016/0020-0190(81)90111-3}.
\newblock URL
  \url{https://www.sciencedirect.com/science/article/pii/0020019081901113}.

\bibitem[Guo et~al.(2013)Guo, Wong, Li, and Ren]{DBLP:journals/candie/GuoWLR13}
Z.~X. Guo, Wai~Keung Wong, Zhi Li, and Peiyu Ren.
\newblock Modeling and pareto optimization of multi-objective order scheduling
  problems in production planning.
\newblock \emph{Comput. Ind. Eng.}, 64\penalty0 (4):\penalty0 972--986, 2013.
\newblock \doi{10.1016/J.CIE.2013.01.006}.
\newblock URL \url{https://doi.org/10.1016/j.cie.2013.01.006}.

\bibitem[Hakimi(1964)]{hakimi_optimum_1964}
S~Louis Hakimi.
\newblock Optimum locations of switching centers and the absolute centers and
  medians of a graph.
\newblock \emph{Oper. Res.}, 12\penalty0 (3):\penalty0 450--459, 1964.

\bibitem[Hayes et~al.(2022)Hayes, Radulescu, Bargiacchi, K{\"{a}}llstr{\"{o}}m,
  Macfarlane, Reymond, Verstraeten, Zintgraf, Dazeley, Heintz, Howley,
  Irissappane, Mannion, Now{\'{e}}, de~Oliveira~Ramos, Restelli, Vamplew, and
  Roijers]{DBLP:journals/aamas/HayesRBKMRVZDHH22}
Conor~F. Hayes, Roxana Radulescu, Eugenio Bargiacchi, Johan
  K{\"{a}}llstr{\"{o}}m, Matthew Macfarlane, Mathieu Reymond, Timothy
  Verstraeten, Luisa~M. Zintgraf, Richard Dazeley, Fredrik Heintz, Enda Howley,
  Athirai~A. Irissappane, Patrick Mannion, Ann Now{\'{e}}, Gabriel
  de~Oliveira~Ramos, Marcello Restelli, Peter Vamplew, and Diederik~M. Roijers.
\newblock A practical guide to multi-objective reinforcement learning and
  planning.
\newblock \emph{Auton. Agents Multi Agent Syst.}, 36\penalty0 (1):\penalty0 26,
  2022.
\newblock \doi{10.1007/S10458-022-09552-Y}.
\newblock URL \url{https://doi.org/10.1007/s10458-022-09552-y}.

\bibitem[Hebrard et~al.(2005)Hebrard, Hnich, O'Sullivan, and
  Walsh]{DBLP:conf/aaai/HebrardHOW05}
Emmanuel Hebrard, Brahim Hnich, Barry O'Sullivan, and Toby Walsh.
\newblock Finding diverse and similar solutions in constraint programming.
\newblock In \emph{Proceedings of the Twentieth National Conference on
  Artificial Intelligence (AAAI '05)}, pages 372--377. {AAAI} Press / The {MIT}
  Press, 2005.

\bibitem[Ingmar et~al.(2020)Ingmar, de~la Banda, Stuckey, and
  Tack]{DBLP:conf/aaai/IngmarBST20}
Linnea Ingmar, Maria~Garcia de~la Banda, Peter~J. Stuckey, and Guido Tack.
\newblock Modelling diversity of solutions.
\newblock In \emph{Proceedings of the Thirty-Fourth {AAAI} Conference on
  Artificial Intelligence ({AAAI} '20)}, pages 1528--1535. {AAAI} Press, 2020.

\bibitem[Kellerhals and Peters(2024)]{DBLP:conf/nips/Kellerhals024}
Leon Kellerhals and Jannik Peters.
\newblock Proportional fairness in clustering: {A} social choice perspective.
\newblock In \emph{Proceedings of the Thirty-Eighth Annual Conference on Neural
  Information Processing Systems (NeurIPS '24)}, 2024.

\bibitem[Li and Yao(2019)]{DBLP:journals/csur/LiY19}
Miqing Li and Xin Yao.
\newblock Quality evaluation of solution sets in multiobjective optimisation:
  {A} survey.
\newblock \emph{{ACM} Comput. Surv.}, 52\penalty0 (2):\penalty0 26:1--26:38,
  2019.
\newblock \doi{10.1145/3300148}.
\newblock URL \url{https://doi.org/10.1145/3300148}.

\bibitem[Luczak and Rosenfeld(1976)]{luczak1976}
Luczak and Rosenfeld.
\newblock Distance on a hexagonal grid.
\newblock \emph{IEEE Transactions on Computers}, C-25\penalty0 (5):\penalty0
  532--533, 1976.
\newblock \doi{10.1109/TC.1976.1674642}.

\bibitem[Marler and Arora(2004)]{marler2004survey}
R~Timothy Marler and Jasbir~S Arora.
\newblock Survey of multi-objective optimization methods for engineering.
\newblock \emph{Struct. Multidiscip. Optim.}, 26:\penalty0 369--395, 2004.

\bibitem[Marler and Arora(2010)]{marler2010weighted}
R~Timothy Marler and Jasbir~S Arora.
\newblock The weighted sum method for multi-objective optimization: new
  insights.
\newblock \emph{Struct. Multidiscip. Optim.}, 41\penalty0 (6):\penalty0
  853--862, 2010.

\bibitem[Megiddo and Supowit(1984)]{megiddosupowit1984}
Nimrod Megiddo and Kenneth~J. Supowit.
\newblock On the complexity of some common geometric location problems.
\newblock \emph{SIAM J. Comput.}, 13\penalty0 (1):\penalty0 182--196, 1984.
\newblock \doi{10.1137/0213014}.
\newblock URL \url{https://doi.org/10.1137/0213014}.

\bibitem[Micha and Shah(2020)]{DBLP:conf/icalp/Micha020}
Evi Micha and Nisarg Shah.
\newblock Proportionally fair clustering revisited.
\newblock In \emph{Proceedings of the 47th International Colloquium on
  Automata, Languages, and Programming ({ICALP} '20)}, pages 85:1--85:16.
  Schloss Dagstuhl, 2020.

\bibitem[Petchrompo et~al.(2022{\natexlab{a}})Petchrompo, Coit, Brintrup,
  Wannakrairot, and Parlikad]{DBLP:journals/candie/PetchrompoCBWP22}
Sanyapong Petchrompo, David~W. Coit, Alexandra Brintrup, Anupong Wannakrairot,
  and Ajith~Kumar Parlikad.
\newblock A review of pareto pruning methods for multi-objective optimization.
\newblock \emph{Comput. Ind. Eng.}, 167:\penalty0 108022, 2022{\natexlab{a}}.
\newblock \doi{10.1016/J.CIE.2022.108022}.
\newblock URL \url{https://doi.org/10.1016/j.cie.2022.108022}.

\bibitem[Petchrompo et~al.(2022{\natexlab{b}})Petchrompo, Wannakrairot, and
  Parlikad]{DBLP:journals/eor/PetchrompoWP22}
Sanyapong Petchrompo, Anupong Wannakrairot, and Ajith~Kumar Parlikad.
\newblock Pruning pareto optimal solutions for multi-objective portfolio asset
  management.
\newblock \emph{Eur. J. Oper. Res.}, 297\penalty0 (1):\penalty0 203--220,
  2022{\natexlab{b}}.
\newblock \doi{10.1016/J.EJOR.2021.04.053}.
\newblock URL \url{https://doi.org/10.1016/j.ejor.2021.04.053}.

\bibitem[Sayin(2000)]{DBLP:journals/mp/Sayin00}
Serpil Sayin.
\newblock Measuring the quality of discrete representations of efficient sets
  in multiple objective mathematical programming.
\newblock \emph{Math. Program.}, 87\penalty0 (3):\penalty0 543--560, 2000.
\newblock \doi{10.1007/S101070050011}.
\newblock URL \url{https://doi.org/10.1007/s101070050011}.

\bibitem[Shier(1977)]{shier_min-max_1977}
Douglas~R Shier.
\newblock A min-max theorem for p-center problems on a tree.
\newblock \emph{Transportation Science}, 11\penalty0 (3):\penalty0 243--252,
  1977.

\bibitem[Stewart et~al.(2008)Stewart, Bandte, Braun, Chakraborti, Ehrgott,
  G{\"{o}}belt, Jin, Nakayama, Poles, and
  Stefano]{DBLP:conf/dagstuhl/StewartBBCEGJNPS08}
Theodor~J. Stewart, Oliver Bandte, Heinrich Braun, Nirupam Chakraborti,
  Matthias Ehrgott, Mathias G{\"{o}}belt, Yaochu Jin, Hirotaka Nakayama, Silvia
  Poles, and Danilo~Di Stefano.
\newblock Real-world applications of multiobjective optimization.
\newblock In \emph{Multiobjective Optimization, Interactive and Evolutionary
  Approaches}, volume 5252 of \emph{Lecture Notes in Computer Science}, pages
  285--327. Springer, 2008.

\bibitem[Sylva and Crema(2007)]{DBLP:journals/eor/SylvaC07}
John Sylva and Alejandro Crema.
\newblock A method for finding well-dispersed subsets of non-dominated vectors
  for multiple objective mixed integer linear programs.
\newblock \emph{Eur. J. Oper. Res.}, 180\penalty0 (3):\penalty0 1011--1027,
  2007.
\newblock \doi{10.1016/J.EJOR.2006.02.049}.
\newblock URL \url{https://doi.org/10.1016/j.ejor.2006.02.049}.

\bibitem[Taboada and Coit(2007)]{taboada2007data}
Heidi~A Taboada and David~W Coit.
\newblock Data clustering of solutions for multiple objective system
  reliability optimization problems.
\newblock \emph{Qual. Technol. Quant. Manag.}, 4\penalty0 (2):\penalty0
  191--210, 2007.

\bibitem[Taboada and Coit(2008)]{taboada2008multi}
Heidi~A Taboada and David~W Coit.
\newblock Multi-objective scheduling problems: Determination of pruned pareto
  sets.
\newblock \emph{Iie Transactions}, 40\penalty0 (5):\penalty0 552--564, 2008.

\bibitem[Taboada et~al.(2007)Taboada, Baheranwala, Coit, and
  Wattanapongsakorn]{DBLP:journals/ress/TaboadaBCW07}
Heidi~A. Taboada, Fatema Baheranwala, David~W. Coit, and Naruemon
  Wattanapongsakorn.
\newblock Practical solutions for multi-objective optimization: An application
  to system reliability design problems.
\newblock \emph{Reliab. Eng. Syst. Saf.}, 92\penalty0 (3):\penalty0 314--322,
  2007.
\newblock \doi{10.1016/J.RESS.2006.04.014}.
\newblock URL \url{https://doi.org/10.1016/j.ress.2006.04.014}.

\bibitem[Vaz et~al.(2015)Vaz, Paquete, Fonseca, Klamroth, and
  Stiglmayr]{DBLP:journals/cor/VazPFKS15}
Daniel Vaz, Lu{\'{\i}}s Paquete, Carlos~M. Fonseca, Kathrin Klamroth, and
  Michael Stiglmayr.
\newblock Representation of the non-dominated set in biobjective discrete
  optimization.
\newblock \emph{Comput. Oper. Res.}, 63:\penalty0 172--186, 2015.
\newblock \doi{10.1016/J.COR.2015.05.003}.
\newblock URL \url{https://doi.org/10.1016/j.cor.2015.05.003}.

\bibitem[Wang and Kuo(1988)]{wangkuo1988}
D.~W. Wang and Yue-Sun Kuo.
\newblock A study on two geometric location problems.
\newblock \emph{Inf. Process. Lett.}, 28\penalty0 (6):\penalty0 281--286,
  August 1988.
\newblock ISSN 0020-0190.
\newblock \doi{10.1016/0020-0190(88)90174-3}.
\newblock URL
  \url{https://www.sciencedirect.com/science/article/pii/0020019088901743}.

\bibitem[Wang et~al.(2020)Wang, Lin, Fu, Luo, and
  Chen]{DBLP:journals/ress/WangLFLC20}
Wei Wang, Mingqiang Lin, Yongnian Fu, Xiaoping Luo, and Hanghang Chen.
\newblock Multi-objective optimization of reliability-redundancy allocation
  problem for multi-type production systems considering redundancy strategies.
\newblock \emph{Reliab. Eng. Syst. Saf.}, 193:\penalty0 106681, 2020.
\newblock \doi{10.1016/J.RESS.2019.106681}.
\newblock URL \url{https://doi.org/10.1016/j.ress.2019.106681}.

\bibitem[Xin et~al.(2018)Xin, Chen, Chen, Ishibuchi, Hirota, and
  Liu]{DBLP:journals/access/XinCCIHL18}
Bin Xin, Lu~Chen, Jie Chen, Hisao Ishibuchi, Kaoru Hirota, and Bo~Liu.
\newblock Interactive multiobjective optimization: {A} review of the
  state-of-the-art.
\newblock \emph{{IEEE} Access}, 6:\penalty0 41256--41279, 2018.
\newblock \doi{10.1109/ACCESS.2018.2856832}.
\newblock URL \url{https://doi.org/10.1109/ACCESS.2018.2856832}.

\bibitem[Xu et~al.(2020)Xu, Tian, Ma, Rus, Sueda, and Matusik]{pgmorl_2020}
Jie Xu, Yunsheng Tian, Pingchuan Ma, Daniela Rus, Shinjiro Sueda, and Wojciech
  Matusik.
\newblock Prediction-guided multi-objective reinforcement learning for
  continuous robot control.
\newblock In Hal~Daumé III and Aarti Singh, editors, \emph{Proceedings of the
  37th International Conference on Machine Learning}, volume 119 of
  \emph{Proceedings of Machine Learning Research}, pages 10607--10616. PMLR,
  13--18 Jul 2020.
\newblock URL \url{https://proceedings.mlr.press/v119/xu20h.html}.

\bibitem[Zio and Bazzo(2011)]{DBLP:journals/eor/ZioB11}
Enrico Zio and Roberta Bazzo.
\newblock A clustering procedure for reducing the number of representative
  solutions in the pareto front of multiobjective optimization problems.
\newblock \emph{Eur. J. Oper. Res.}, 210\penalty0 (3):\penalty0 624--634, 2011.
\newblock \doi{10.1016/J.EJOR.2010.10.021}.
\newblock URL \url{https://doi.org/10.1016/j.ejor.2010.10.021}.

\bibitem[Zitzler et~al.(2000)Zitzler, Deb, and Thiele]{zdt_2000}
Eckart Zitzler, Kalyanmoy Deb, and Lothar Thiele.
\newblock Comparison of {Multiobjective} {Evolutionary} {Algorithms}:
  {Empirical} {Results}.
\newblock \emph{Evolutionary Computation}, 8\penalty0 (2):\penalty0 173--195,
  June 2000.
\newblock ISSN 1063-6560, 1530-9304.
\newblock \doi{10.1162/106365600568202}.
\newblock URL \url{https://direct.mit.edu/evco/article/8/2/173-195/868}.

\end{thebibliography}
\end{document}